\documentclass[10pt]{article}

\ifdefined\pdfminorversion \pdfminorversion=7 \fi
\ifdefined\pdfvariable \pdfvariable minorversion=7 \fi

\usepackage[preprint]{tmlr}

\usepackage{amsmath,amsfonts,bm}

\def\eqref#1{equation~\ref{#1}}

\def\1{\bm{1}}

\DeclareMathAlphabet{\mathsfit}{\encodingdefault}{\sfdefault}{m}{sl}
\SetMathAlphabet{\mathsfit}{bold}{\encodingdefault}{\sfdefault}{bx}{n}

\usepackage{hyperref}

\usepackage{amsmath,amssymb,amsthm}
\usepackage{upgreek}
\usepackage{float}
\newtheorem{lemma}{Lemma}
\newtheorem{theorem}{Theorem}
\newtheorem{proposition}{Proposition}
\newtheorem{corollary}{Corollary}
\theoremstyle{definition}
\newtheorem{assumption}{Assumption}
\theoremstyle{remark}
\newtheorem{remark}{Remark}
\usepackage{booktabs}
\usepackage{multirow}
\usepackage{graphicx}
\usepackage{url,ulem}
\usepackage{xspace}
\usepackage{xcolor}
\graphicspath{{../}{./}}
\newcommand{\method}{PAGE\xspace}

\newcommand{\pci}[2]{{\scriptsize$[#1,#2]$}}
\newcommand{\yes}{\checkmark}
\newcommand{\no}{--}

\title{PAGE: Partition-Aware Gated KV-Cache Eviction}

\author{\name Pankaj Kumar \email pankaj.kumar@niser.ac.in \\
      \addr School of Computer Sciences\\
      National Institute of Science Education and Research \\
      An OCC of Homi Bhabha National Institute, India
      \AND
      \name Subhankar Mishra \email smishra@niser.ac.in \\
      \addr School of Computer Sciences\\
      National Institute of Science Education and Research \\
      An OCC of Homi Bhabha National Institute, India
      }

\begin{document}
\maketitle

\begin{abstract}


  KV-cache eviction can do more than compress. In long-context LLMs, keeping only some cached tokens sometimes matches or exceeds full-cache accuracy, because many redundant prefill tokens otherwise dilute attention away from the tokens that carry the answer. This benefit is not uniform, and evicting the wrong tokens drops accuracy to zero on tasks that need precise retrieval, so the useful question is not only which tokens to keep but whether to evict this input at all. We show that one label-free number computed from the prefill attention, the drop between early and late layers in how much attention heads agree on which tokens to read, predicts per input, before any decoding, which of the two cases an input falls under. We build this into PAGE (Partition-Aware Gated Eviction), a wrapper that runs any SnapKV-style evictor when the drop is large and keeps the full cache when it is small, with no training, labels, or fine-tuning. PAGE is a safety mechanism rather than a compressor, so we measure it by the failures it prevents. It cuts the harm rate on capacity-bound inputs from 0.75 to 0.026, and on multi-key retrieval with Mistral-7B plain SnapKV falls from 99\% to 0\% as the budget shrinks, while PAGE holds it at 89\%. Elsewhere it passes the base evictor through unchanged, which is the intended behaviour and is what we observe in 8 of 16 cells. The prediction is one-sided, identifying evictable inputs reliably and resolving the single-near-tie-distractor boundary only with per-input information beyond the mean drop. The head-agreement drop orders tasks the same way across four model families, but the threshold does not transfer, and a roughly 100-input unlabeled pilot per model is needed outside Qwen2.5 and Mistral. Realized compression is 1.8$\times$ to 3.4$\times$ against a nominal 16$\times$ budget and decays toward unity by batch 16, and at matched memory a trained evictor outperforms PAGE. Code is available at \url{https://github.com/pankajkumar6002/PAGE}.

\end{abstract}

\section{Introduction}
\label{sec:intro}

Large language models (LLMs) are increasingly used for long-context tasks such as document understanding, retrieval, and extended reasoning. As context length grows, the KV-cache grows linearly with the input and becomes a memory bottleneck during inference. KV-cache eviction addresses this bottleneck by retaining only a subset of the states under a fixed memory budget~\citep{H20,streamingLLM,snapkv,PyramidKV}. Existing methods largely share the same formulation: given a fixed budget, determine \textit{which} KV states to retain. When discarded states are redundant, this can reduce attention dilution and even improve long-context performance~\citep{DBtrimKV,CapKV,indexmem}; however, when the answer depends on a small set of critical tokens, aggressive eviction can instead destroy essential information and severely degrade retrieval.

Recent work has therefore focused on improving which states are retained through stronger importance signals, learned predictors, and information-theoretic objectives~\citep{snapkv,PyramidKV,CapKV,DBtrimKV,learn-to-evict}. While these methods can improve the accuracy--memory trade-off and even outperform full-cache inference by mitigating attention dilution~\citep{DBtrimKV}, they still assume that eviction is appropriate for every input. In particular, adaptive approaches either learn a scorer to estimate which states should be retained~\citep{DBtrimKV,indexmem,ForesightKV,learn-to-evict} or derive an unconditional capacity measure from query statistics~\citep{CapKV}. This leaves two fundamental questions open: \textbf{(Q1)}~\textit{For which inputs does KV-cache eviction help or hurt?} and \textbf{(Q2)}~\textit{Can this distinction be predicted a priori from a cheap inference-time signal before decoding?} These questions are important because a benchmark-level accuracy--memory trade-off can mask a subset of inputs for which eviction is catastrophic.

We answer both affirmatively by reframing KV-cache eviction as an \textit{input-dependent admission decision}: rather than assuming that eviction should be applied to every input, we ask whether the model's prefill attention contains a signal indicating when eviction is likely to be beneficial or harmful. We find that the early-to-late drop in mean pairwise top-$k$ head agreement, denoted by $D$, provides a useful label-free signal that partitions inputs into a dilution-prone class $\mathcal{D}$ and a capacity-bound class $\mathcal{C}$ before any decoding (Figure~\ref{fig:PAGE_setup}). Based on this observation, we introduce \textbf{PAGE} (\textbf{P}artition-\textbf{A}ware \textbf{G}ated \textbf{E}viction), a training-free wrapper that applies an existing eviction method when $D$ exceeds a threshold and otherwise retains the full cache. 
Since a benchmark mean can hide exactly this failure mode, we evaluate \method by what it prevents on the capacity-bound regime rather than by an average over tasks. It cuts the rate at which eviction destroys a correct answer by $29\times$ at $16\times$ compression, and at matched \textit{achieved} cache it lifts the accuracy-memory frontier above a crossover near $1/3$ kept cache, losing to plain eviction below it (Figure~\ref{fig:pareto-gated}). The same selectivity has a converse. The gate is inactive wherever eviction is already safe, so on inputs that eviction does not harm the gated and plain systems become identical and their measured difference is exactly zero. \method is designed to prevent harmful eviction, not to maximize average compression.

\begin{figure}[t]
  \centering
  \includegraphics[width=0.875\linewidth]{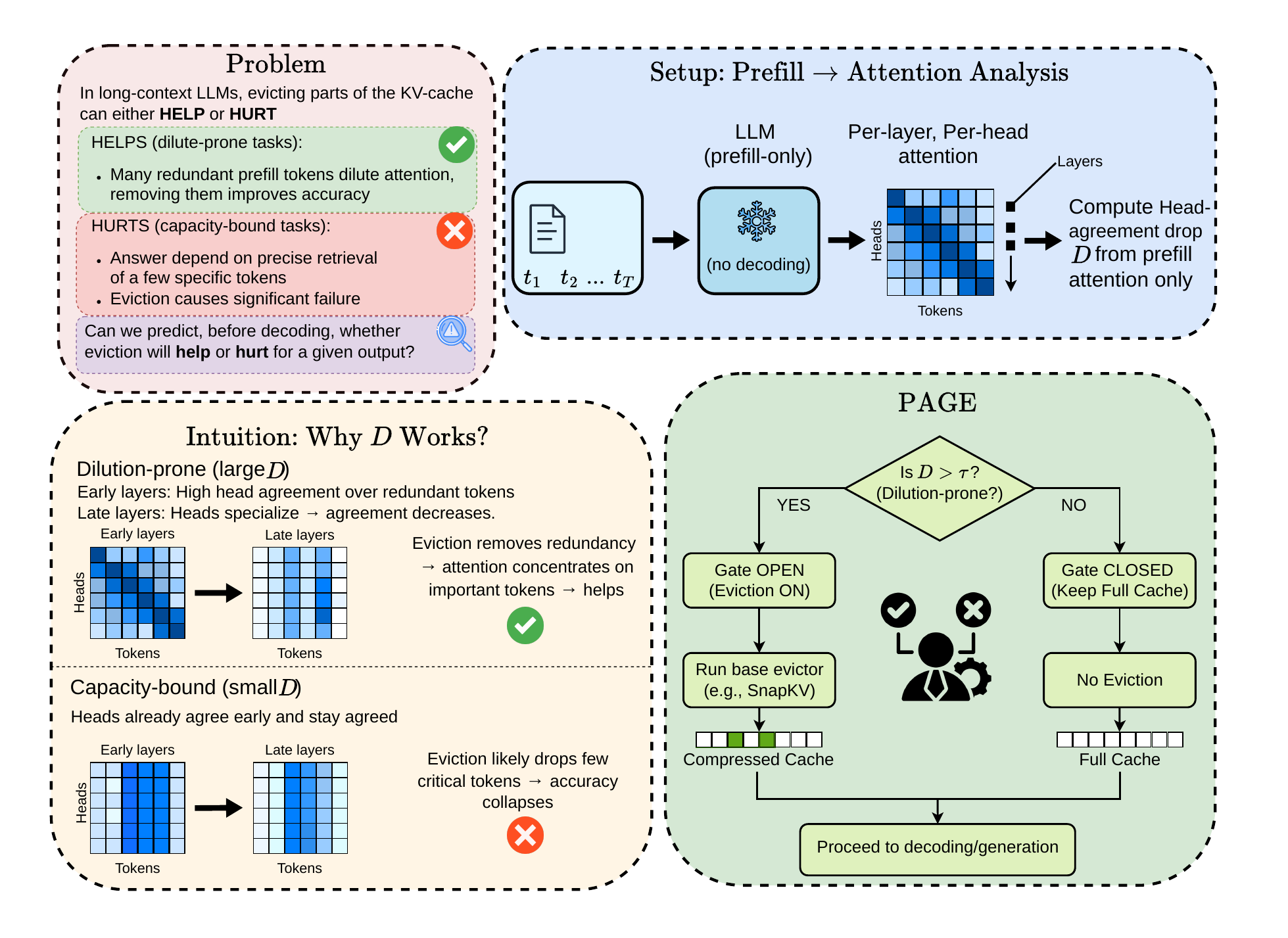}
\caption{\method uses prefill-only attention to compute the head-agreement drop $D$ and decide \textit{whether} to evict. A large $D$ indicates dilution-prone inputs, for which \method opens the gate and applies a base evictor, and a small $D$ indicates capacity-bound inputs, for which the gate remains closed and the full cache is retained.}
  \label{fig:PAGE_setup}
\end{figure}

\paragraph{Contributions}
\begin{enumerate}
  \item \textbf{An empirical partition} of inputs into $\mathcal{C}$ and $\mathcal{D}$, demonstrated across eight (model, context) cells on RULER~\citep{Ruler} and confirmed by a controlled distractor sweep. The early-to-late head-agreement drop $D := \mathrm{agr}_{\text{early}} - \mathrm{agr}_{\text{late}}$ orders tasks by eviction safety and separates the two classes at the task level (Section~\ref{subsec:analysis}).

  \item \textbf{\method, a training-free gating wrapper} that applies any base evictor when $D \ge \tau$ and retains the full cache otherwise, with a $z$-score calibration recipe that transfers across four architecture families (Section~\ref{sec:method}).

  \item \textbf{A substrate-dependent capacity limit}: the three scorer families we test (attention-based, geometric, attention-free) all collapse on capacity-bound inputs at aggressive budgets, evidence of an information floor that no scorer among these families removes (Section~\ref{subsec:scorer-independence}).

  \item \textbf{A scaling relation} $\rho \approx (1 - A_{\mathrm{full}}) \cdot p_{\mathrm{recov}}$, validated by head-to-heads with CapKV and DBTrimKV and a pre-registered 32K test that preserves ordinal content but falsifies constant-ratio proportionality (Sections~\ref{subsec:analysis},~\ref{sec:theory}).
\end{enumerate}

We adopt the dilution mechanism of \citet{DBtrimKV} (Proposition~3.1, Corollary~3.2) and repurpose a known head-agreement primitive as a gating signal; Section~\ref{sec:related} positions these choices. \method is a safeguard, not a compressor: at matched memory a trained evictor wins (Section~\ref{sec:limitations}), it is inactive wherever eviction is already safe ($\Delta = 0$ in half the matrix cells), and the $+22.9$pp headline is driven by one task family (excluding NIAH-MK3: $+4.5$pp; task-label oracle: $+20.1$pp). Realized compression is $1.8$--$3.4\times$ (mean $2.9\times$) and decays toward $1\times$ by batch~16 under static provisioning.

\section{Related Work}
\label{sec:related}

KV-cache eviction methods primarily differ in how they identify states to retain under a fixed budget. H2O~\citep{H20}, StreamingLLM~\citep{streamingLLM}, SnapKV~\citep{snapkv}, and PyramidKV~\citep{PyramidKV} use attention, recency, or layer-dependent statistics for retention. Recent methods develop more specialized selection criteria: DBTrimKV~\citep{DBtrimKV} characterizes attention dilution from near-tie distractors and uses selective retention to mitigate it, while CapKV~\citep{CapKV} derives an information-theoretic eviction criterion. IndexMem~\citep{indexmem} combines learned indexing with latent memory, and ForesightKV~\citep{ForesightKV} and Learning-to-Evict~\citep{learn-to-evict} predict future KV utility. VECTOR~\citep{simplepluginimprovingevictionbased} and CriticalKV~\citep{criticalKV} instead provide plug-and-play mechanisms over existing eviction policies. These methods retain the conventional formulation of selecting which states to preserve once eviction is applied.

Adaptive eviction methods further condition retention or budget allocation on the input or task. DynamicKV~\citep{DynamicKV} uses task-dependent attention statistics to allocate layer-wise KV budgets, while EvicPress~\citep{evicpressjointkvcachecompression} incorporates a context-level eviction decision into a profiled quality-latency optimization. Attention-Gate~\citep{zeng2025incontextkvcacheevictionllms} and Fast KVzip~\citep{fastkvzip} learn adaptive gating or budget policies. SAGE-KV~\citep{sagekv} is particularly close in its use of post-prefill attention, using a one-shot attention signal to select tokens and heads for retention. PAGE instead uses the prefill signal to characterize whether the eviction operation should be applied, while the underlying retention policy remains unchanged.

A complementary line of work studies the internal attention structure associated with retrieval and compression. Retrieval-head analyses identify sparse subsets of heads that drive long-context retrieval~\citep{wu2025retrieval}, while \citet{understandingphysicskeyvaluecache} analyze how head consensus evolves across layers and relate these profiles to compression tolerance. Chen et al.~\citep{pitfalls} document task-dependent degradation under eviction, and Zhang et al.~\citep{valueawarekveviction} study when value-aware eviction is beneficial. VaSE~\citep{chang2026valueaware} attributes some eviction failures to high-magnitude value states, whereas ManifoldKV~\citep{manifoldkv} considers geometric structure in key representations. PAGE draws on this diagnostic perspective by using the early-to-late change in head agreement as an input-level prefill statistic rather than as a retention score or an offline architectural descriptor.


\section{The Gating Method: PAGE}
\label{sec:method}

\paragraph{Problem Formulation}
\label{subsec:formulation}

For task $\mathsf{T}$, model $M$, context length $T$, let $A(b, x)$ denote the accuracy on input $x$ at cache budget $b$, with $b_{\max} = 1$ denoting the full, unevicted cache. We call an input \textit{capacity-bound} if eviction at every tested budget degrades its accuracy relative to the full cache, and \textit{dilution-prone} if there exists some budget at which eviction improves accuracy. The key evaluation criterion is the per-input recovery rate:
\begin{equation}
\rho_{\mathrm{KV}}(\mathsf{T}, M, T) := \Pr_{x}\bigl[A(b^{\ast}(x), x) > A(b_{\max}, x)\bigr],
\qquad b^{\ast}(x) := \arg\max_{b} A(b, x).
\label{eq:rho}
\end{equation}
This per-input strict-Pareto improvement frequency is a stronger statement than ``mean accuracy is higher at some $b$.'' We pre-registered $\rho \ge 0.05$ as ``dilution-prone'' definition before any sweep.

\paragraph{Algorithm}
\label{subsec:algorithm}

\method wraps any SnapKV-style evictor in one conditional. Given a prompt $x$, the base evictor's budget $b \in (0, 1]$ (the fraction of prompt positions, the evictor is allowed to retain), a sink-and-recency protection $(n_{\text{sink}}, w)$ that always preserves the first $n_{\text{sink}}$ and last $w$ positions, and a gate threshold $\tau$ on the head-agreement drop:
\begin{equation}
\boxed{\
\method(x;\, \text{base}, b, \tau) =
\begin{cases}
\text{base.evict}(\text{cache}, b) & \text{if } D(\text{cache}.\text{attentions}) \ge \tau \\
\text{cache} & \text{otherwise}
\end{cases}\
}
\end{equation}
where the prefill produces \texttt{cache} together with its prefill attentions, from which we compute $D$ as in Eq.~\ref{eq:D}. The gate fires once per input, at prefill, before any decode step, and involves no backward pass, no training, and no per-token decision.


\begin{figure}[t]
  \centering
  \includegraphics[width=0.60\linewidth]{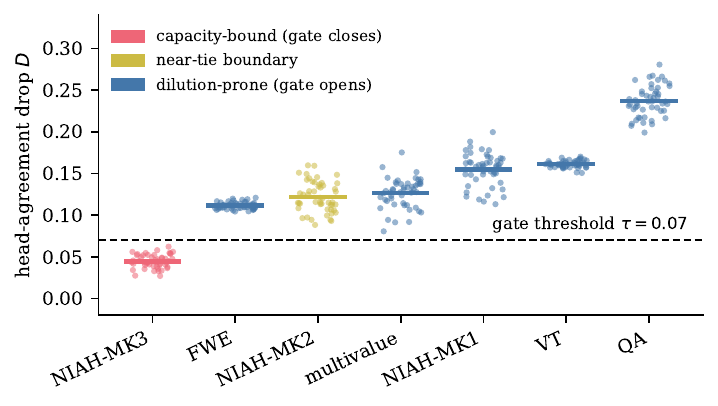}
  \caption{$D$ separates the two task classes (Qwen2.5-1.5B, RULER 4K, $N = 50$ per task; dots are inputs, bars per-task means). NIAH-MK3 sits below $\tau = 0.07$ and the dilution-prone tasks above it. The near-tie MK1/MK2 tasks fall on the boundary, tracking the distractor gradient (App.~\ref{app:deferred}).}
  \label{fig:phenomenon}
\end{figure}



\paragraph{Head-Agreement Drop $D$}
\label{subsec:D-definition}

Take the prefill attention tensor $[L, H, T, T]$, for $L$ layers, $H$ heads per layer and prompt length $T$. Project onto the last $w = 32$ rows, the observation queries, and average across them for a per-(layer, head, key) attention vector. For each layer and head pair we take the Jaccard similarity of the two heads' top-$k$ key sets ($k = 32$) and average over pairs, giving a per-layer agreement $a_\ell$. Binning the $L$ layers into thirds and writing $a_{\text{early}}$, $a_{\text{late}}$ for the mean agreement in the first and last bins, we set
\begin{equation}
\label{eq:D}
D := a_{\text{early}} - a_{\text{late}}.
\end{equation}
$D$ is one scalar per input, costing $O(LH^2 k)$: $17$ms on Qwen2.5-1.5B and $89$ms on Mistral-7B at 4K (Table~\ref{tab:latency}). It is not free even when the base evictor already scores by attention, since that scoring uses head-\textit{averaged} attention whereas $D$ needs the per-head pattern, which requires materializing attention that fused FlashAttention~\citep{flash-attention} does not produce. A same-inputs ablation against five other cheap prefill statistics justifies the cost: $D$ is the only one that separates the partition, every alternative failing task-level separation on the clear-cut NIAH-MK3 anchor. Table~\ref{tab:signal-ablation} also tracks how that separation narrows as the anchor moves toward the near-tie MK2 boundary.

\label{subsec:signal-ablation}


\paragraph{Calibration}
\label{subsec:calibration}

We distinguish two senses of label-free. \method is accuracy-label-free because the gate never observes correctness. A calibration variant is additionally task-label-free when it also requires no task identities to set $\tau$. The two senses come apart across the variants below.

The fixed $\tau = 0.07$ used throughout is a within-family convenience. It mis-classifies Qwen2.5-3B at 16K and fails outright on Qwen3. The recommended variant standardizes $D$ per model over an unlabeled ${\sim}100$-input pilot, and is task-label-free. It transfers a single global threshold zero-shot to every held-out family tested: Llama-architecture models (Yi-1.5-9B, Llama-3.1-8B) and Qwen3-4B~\citep{Yi_tc,llama_tc,qwen_tc}. The intermediate 40-input variant of Eq.~\ref{eq:tau-app} is accuracy-label-free but not task-label-free, because it uses task identities to form $\bar D_{\mathcal C}$ and $\bar D_{\mathcal D}$. We report fixed-$\tau$ numbers as the conservative default. Table~\ref{tab:tau-variants} maps the three variants to their label requirements and to the results that use each, so every gated number can be traced to the variant that produced it. The one number computed at a threshold other than its reported value is the Qwen2.5-3B 4K SnapKV matrix cell, run at $\tau = 0.04$ and reported at $\tau = 0.07$ by exact post-hoc re-evaluation (Appendix~\ref{app:repro}). Further calibration details appear in Table~\ref{tab:recipe} and Figure~\ref{fig:tau-sensitivity}, Appendix~\ref{app:deferred}.


\section{Experiments}
\label{sec:experiments}



\paragraph{Setup}
\label{subsec:setup}

We use RULER~\citep{Ruler} via the \texttt{simonjegou/ruler}~\citep{ruler-hf} port distributed with kvpress~\citep{kvpress}, whose four-evictor lineup our base evictors follow, at context lengths $T \in \{4\mathrm{K}, 16\mathrm{K}\}$ on four models: Qwen2.5-1.5B-Instruct, Qwen2.5-3B-Instruct, Qwen2.5-14B-Instruct~\citep{qwen2.5}, and Mistral-7B-Instruct-v0.3~\citep{mistral7b}. Eviction is SnapKV-style by default: we score each prompt position by the average attention from the last $w = 32$ queries across heads and layers, then keep the top-$b$ positions with $b \in \{0.0625, 0.125, 0.25, 0.375, 0.5, 0.625, 0.75, 0.875, 1.0\}$, and always preserve the first $n_{\text{sink}} = 4$ sink positions~\citep{streamingLLM} and the last $w = 32$ recent positions. Cache slicing uses explicit \texttt{position\_ids} to preserve rotary position embedding (RoPE) alignment~\citep{roformer}, rather than the attention-mask trick that silently shifts RoPE on evicted positions. Task codes are RULER's: VT (variable tracking), FWE (frequent-words extraction), QA\_1/QA\_2, and NIAH-MK$k$ = \texttt{niah\_multikey\_}$k$ ($k$ near-tie keys, one carrying the answer). 
The matrix uses RULER 4K with the four-task mixed suite ($N = 100$ each) across all sixteen cells. Gating deltas carry $95\%$ cluster-bootstrap intervals, all 16 positive with smallest lower bound $+9.3$pp, and remain significant under Benjamini-Hochberg correction at $q = 0.05$ across the 16 cells (Appendix~\ref{app:ci}). Full protocol and deferred tables are in Appendices~\ref{app:deferred} and~\ref{app:ci}.

\paragraph{The partition} We define the partition as a pooled comparison: across the eight (model, context) cells, only $2/526$ NIAH-MK3 inputs recover under eviction ($\rho = 0.004$, Wilson $[0.001, 0.014]$), whereas five task families reach $\rho \ge 0.05$ at some context length. Because individual cells at these sample sizes are underpowered, we interpret Table~\ref{tab:partition} as this pooled comparison rather than as $48$ separate calls. We restrict the definition of the partition to 4K and 16K. The single 32K test-case yields $\rho=0.04$ for NIAH-MK3, just below the $0.05$ threshold that defines the class (Section~\ref{subsec:analysis}), and consequently falls outside our fitting range. These below-threshold cells arise from two mechanisms. In saturated cells, $A_{\mathrm{full}}\to 1$ leaves little headroom for eviction. Longer contexts create headroom, as predicted by Eq.~\ref{eq:scaling}. The remaining three exceptions are budget-insensitive: headroom exists, yet the same inputs remain incorrect across budgets. These three are all QA cells, consistent with failures arising from parametric knowledge rather than cache allocation (Appendix~\ref{app:deferred}).

\begin{table}[t]
\centering
\caption{Per-input recovery rate $\rho$ on RULER with Wilson $95\%$ intervals (small type). Bold marks point estimates $\rho \ge 0.05$. NIAH-MK3 (precise multi-key retrieval) is the only task that systematically fails the threshold in every cell tested. $N = 100$ per (task, cell) except Qwen-14B ($N = 50$ at 4K, $N = 30$ at 16K), Mistral 16K VT/FWE/QA\_1/NIAH-MK3 ($N = 50$), Qwen-3B 16K NIAH-MK3 ($N = 50$), and Qwen-1.5B 16K NIAH-MK3 ($N = 46$). At these $N$ most single cells straddle the $0.05$ threshold and no per-cell call is individually powered; the claim rests on the pooled comparison, which is unambiguous (NIAH-MK3: $2/526$, $\rho = 0.004$, Wilson $[0.001, 0.014]$). Per-cell $N$ varies with compute budget and with the number of examples available at each context length, not with any post-hoc selection.}
\label{tab:partition}
\resizebox{\textwidth}{!}{%
\begin{tabular}{lcccccccc}
\toprule
 & \multicolumn{2}{c}{Qwen 1.5B} & \multicolumn{2}{c}{Qwen 3B} & \multicolumn{2}{c}{Qwen 14B} & \multicolumn{2}{c}{Mistral} \\
\cmidrule(lr){2-3}\cmidrule(lr){4-5}\cmidrule(lr){6-7}\cmidrule(lr){8-9}
Task & 4K & 16K & 4K & 16K & 4K & 16K & 4K & 16K \\
\midrule
VT               & \textbf{0.06} \pci{.03}{.13} & \textbf{0.19} \pci{.13}{.28} & 0.00 \pci{.00}{.04} & \textbf{0.05} \pci{.02}{.11} & 0.00 \pci{.00}{.07}          & 0.03 \pci{.01}{.17}          & 0.00 \pci{.00}{.04}          & \textbf{0.32} \pci{.21}{.46} \\
FWE              & \textbf{0.06} \pci{.03}{.13} & 0.04 \pci{.02}{.10}          & 0.01 \pci{.00}{.05} & \textbf{0.32} \pci{.24}{.42} & 0.00 \pci{.00}{.07}          & \textbf{0.20} \pci{.10}{.37} & 0.02 \pci{.01}{.07}          & \textbf{0.22} \pci{.13}{.35} \\
QA\_1            & \textbf{0.08} \pci{.04}{.15} & \textbf{0.14} \pci{.09}{.22} & 0.03 \pci{.01}{.09} & \textbf{0.08} \pci{.04}{.15} & \textbf{0.06} \pci{.02}{.16} & \textbf{0.10} \pci{.03}{.26} & 0.04 \pci{.02}{.10}          & 0.00 \pci{.00}{.07} \\
QA\_2            & \textbf{0.08} \pci{.04}{.15} & \textbf{0.05} \pci{.02}{.11} & 0.04 \pci{.02}{.10} & 0.02 \pci{.01}{.07}          & 0.02 \pci{.00}{.11}          & 0.03 \pci{.01}{.17}          & 0.01 \pci{.00}{.05}          & 0.00 \pci{.00}{.04} \\
niah\_multivalue & \textbf{0.07} \pci{.03}{.14} & \textbf{0.10} \pci{.06}{.17} & 0.02 \pci{.01}{.07} & \textbf{0.13} \pci{.08}{.21} & \textbf{0.08} \pci{.03}{.19} & \textbf{0.07} \pci{.02}{.21} & \textbf{0.06} \pci{.03}{.13} & \textbf{0.21} \pci{.14}{.30} \\
NIAH-MK3         & 0.01 \pci{.00}{.05}          & 0.00 \pci{.00}{.08}          & 0.00 \pci{.00}{.04} & 0.02 \pci{.00}{.11}          & 0.00 \pci{.00}{.07}          & 0.00 \pci{.00}{.11}          & 0.00 \pci{.00}{.04}          & 0.00 \pci{.00}{.07} \\
\bottomrule
\end{tabular}%
}
\end{table}

\paragraph{Boundary and Confound Analysis}
\label{subsec:boundary} A single capacity-bound task leaves open whether NIAH-MK3 names a class or an isolated construction, so we tested the boundary on four further RULER tasks chosen by mechanism rather than by outcome (Table~\ref{tab:capbound}, Appendix~\ref{app:capbound-detail}). The results confirm a second capacity-bound member (niah\_multiquery) and show that neither distractor count nor output length defines the class; what matters is whether distractors are near-tie surface-form variants of the answer. Furthermore, three controls rule out the obvious confounds. First, a distractor sweep over the RULER NIAH-MultiKey family confirms the dilution mechanism of \citet{DBtrimKV}: mean $D$ shrinks monotonically as near-tie distractors accumulate (Table~\ref{tab:distractor}). Second, re-running the partition under the $10\%$ prefix and suffix protection of \citet{garcia2026protectionnearlyneedstructural} leaves it unchanged. Third, replacing SnapKV scoring with uniform-random scoring destroys accuracy at every budget, ruling out a generic regularization explanation (Appendix~\ref{app:random}, Table~\ref{tab:random-control}).


\paragraph{External validation}
The partition is not an artifact of synthetic RULER data. On three LongBench subtasks with Qwen2.5-14B (Table~\ref{tab:realistic-workload}, Appendix~\ref{app:deferred}), the gate calls all three classes correctly \textit{a priori} from $D$ alone: lcc (capacity-bound, $\Delta = +0.146$), repobench-p~\citep{liu2024repobench} (mixed), and hotpotqa~\citep{hotpotqa} (evict-robust, $\Delta = 0.000$).
The partition is sufficiently reproducible to motivate asking whether it can be detected before decoding.

\subsection{Analysis of PAGE}
\label{subsec:analysis}

\paragraph{Predicting Unsafe Eviction}
\label{subsec:signal-validation}

Figure~\ref{fig:phenomenon} shows that $D$ separates the two task classes on the fitting cell (Qwen2.5-1.5B, RULER 4K): NIAH-MK3 sits below $\tau = 0.07$ and the dilution-prone tasks above it. Per-input AUC reaches $1.000$ on that cell, dropping to $0.978$, $0.899$, and $0.741$ on three held-out cells (Appendix~\ref{subsec:heldout-ablation}). $D$ is a useful one-sided prefill signal: reliable at closing the gate on capacity-bound inputs, only weakly discriminative on dilution-prone ones. On Yi-1.5-9B and Llama-3.1-8B the drop ordering transfers but $\tau = 0.07$ does not; the $z$-scored variant with an unlabeled pilot maintains AUC $\approx 0.80$ (Table~\ref{tab:drops}).




\paragraph{Catastrophic Failure Prevention}
\label{subsec:showcase}

The cleanest single cell demonstrates the safety property without aggregation. On Mistral 4K NIAH-MK3 (Figure~\ref{fig:mistral-showcase}), plain SnapKV collapses from $0.99$ to $0.00$ at $b = 0.0625$, while gated SnapKV holds $0.89$ at every budget. The gate closes on $90/100$ inputs, with $10/100$ false positives where it opens and eviction destroys the needle. Pooled over the $7$ SnapKV cells at 4K and 16K, gating cuts the fixed-budget harm rate from $0.75$ to $0.026$ on the capacity-bound task (Table~\ref{tab:harm-rate}), yielding a $29\times$ reduction in harmful eviction that stands as the primary safety result.

\begin{figure}[t]
  \centering
  \includegraphics[width=0.55\linewidth]{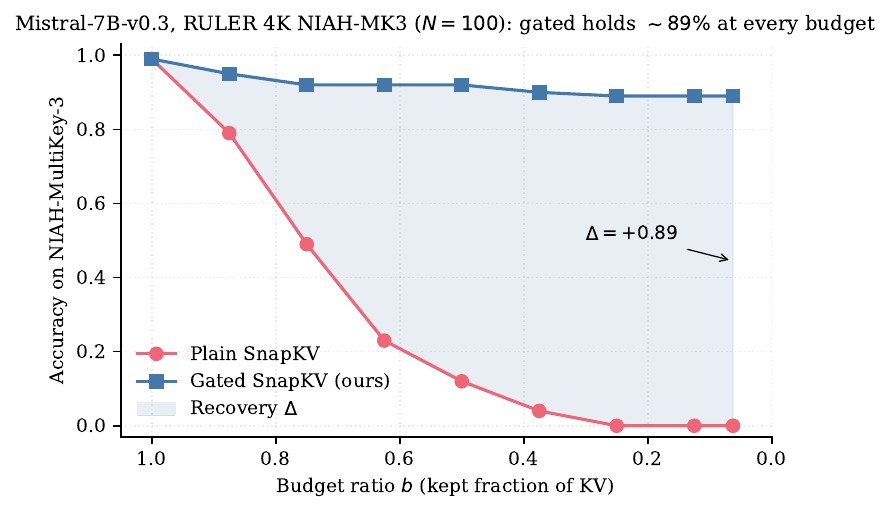}
  \caption{Mistral-7B 4K NIAH-MK3, $N = 100$: accuracy vs.\ KV budget for plain SnapKV (solid, collapses to 0) and gated SnapKV (dashed, 0.89 flat across budgets). The gate prevents the collapse without retraining the base evictor or seeing labels.}
  \label{fig:mistral-showcase}
\end{figure}



\paragraph{Transfer Across Eviction Mechanisms}
\label{subsec:matrix-results}

Table~\ref{tab:matrix} reports a $4 \times 4$ matrix of four base evictors and four models under the same gate, same $\tau$, and no per-method tuning. Each $\Delta$ is gated minus plain accuracy at matched \textit{nominal} budget, not a compression win. The gate composes with several tested eviction mechanisms, which means it works alongside them rather than advancing the frontier on its own. Dropping NIAH-MK3 takes the grand mean from $+22.9$pp to $+4.5$pp. In half the cells the non-MK3 $\Delta$ is exactly $0.000$ because the gate opens on every non-MK3 input there, which is the intended behavior for a gate that should do nothing when eviction is safe. The defensible headline is the harm reduction (Table~\ref{tab:harm-rate}) and the matched-cache frontier (Section~\ref{subsec:pareto}). Composition also extends outside the attention-score family, lifting a geometry-based evictor by $\Delta = +16.2$pp (Figure~\ref{fig:matrix}). Cross-architecture results appear in Tables~\ref{tab:crossarch} and~\ref{tab:bestplain}. Full per-task decomposition, budget-axis breakdown, and cross-architecture tables are in Appendix~\ref{subsec:per-task-delta}.

\begin{table}[t]
\centering
\caption{Headline $4 \times 4$ method-agnostic matrix. Same gate ($\tau = 0.07$), no per-base-evictor tuning; $\Delta$ averages over budgets $b < 1.0$. All $16$ cells are positive with grand mean $+22.9$pp, but the second $\Delta$ row shows what that is made of: excluding NIAH-MK3 the grand mean falls to $+4.5$pp and is exactly $0.000$ in $8$ of $16$ cells, where the gate opens on every non-MK3 input and the two arms are the same system. The Qwen2.5-3B SnapKV cell ran at $\tau = 0.04$ and is reported at $\tau = 0.07$ by exact post-hoc re-evaluation (Appendix~\ref{app:repro}). $^\dagger$Qwen2.5-14B H2O is not an independent cell (see text). Per-task decomposition: Table~\ref{tab:per-task-delta}.}
\label{tab:matrix}
\begin{tabular}{lcccccccc}
\toprule
 & \multicolumn{2}{c}{SnapKV} & \multicolumn{2}{c}{H2O} & \multicolumn{2}{c}{StreamingLLM} & \multicolumn{2}{c}{PyramidKV} \\
\cmidrule(lr){2-3}\cmidrule(lr){4-5}\cmidrule(lr){6-7}\cmidrule(lr){8-9}
Model & plain & gated & plain & gated & plain & gated & plain & gated \\
\midrule
Qwen2.5-1.5B & 0.438 & 0.559 & 0.196 & 0.357 & 0.033 & 0.195 & 0.417 & 0.570 \\
$\Delta$      & \multicolumn{2}{c}{$\!+\!0.121$} & \multicolumn{2}{c}{$\!+\!0.162$} & \multicolumn{2}{c}{$\!+\!0.163$} & \multicolumn{2}{c}{$\!+\!0.152$} \\
{\scriptsize$\Delta$ no MK3} & \multicolumn{2}{c}{\scriptsize$+0.000$} & \multicolumn{2}{c}{\scriptsize$+0.000$} & \multicolumn{2}{c}{\scriptsize$+0.000$} & \multicolumn{2}{c}{\scriptsize$+0.000$} \\
\midrule
Qwen2.5-3B   & 0.580 & 0.839 & 0.339 & 0.642 & 0.052 & 0.482 & 0.536 & 0.865 \\
$\Delta$      & \multicolumn{2}{c}{$\!+\!0.259$} & \multicolumn{2}{c}{$\!+\!0.303$} & \multicolumn{2}{c}{$\!+\!0.430$} & \multicolumn{2}{c}{$\!+\!0.329$} \\
{\scriptsize$\Delta$ no MK3} & \multicolumn{2}{c}{\scriptsize$+0.101$} & \multicolumn{2}{c}{\scriptsize$+0.098$} & \multicolumn{2}{c}{\scriptsize$+0.263$} & \multicolumn{2}{c}{\scriptsize$+0.144$} \\
\midrule
Qwen2.5-14B  & 0.709 & 0.853 & 0.668$^\dagger$ & 0.868$^\dagger$ & 0.080 & 0.330 & 0.632 & 0.860 \\
$\Delta$      & \multicolumn{2}{c}{$\!+\!0.143$} & \multicolumn{2}{c}{$\!+\!0.200$} & \multicolumn{2}{c}{$\!+\!0.250$} & \multicolumn{2}{c}{$\!+\!0.228$} \\
{\scriptsize$\Delta$ no MK3} & \multicolumn{2}{c}{\scriptsize$+0.000$} & \multicolumn{2}{c}{\scriptsize$+0.000$} & \multicolumn{2}{c}{\scriptsize$+0.000$} & \multicolumn{2}{c}{\scriptsize$+0.000$} \\
\midrule
Mistral-7B    & 0.623 & 0.810 & 0.400 & 0.636 & 0.102 & 0.365 & 0.593 & 0.823 \\
$\Delta$      & \multicolumn{2}{c}{$\!+\!0.187$} & \multicolumn{2}{c}{$\!+\!0.236$} & \multicolumn{2}{c}{$\!+\!0.263$} & \multicolumn{2}{c}{$\!+\!0.231$} \\
{\scriptsize$\Delta$ no MK3} & \multicolumn{2}{c}{\scriptsize$+0.015$} & \multicolumn{2}{c}{\scriptsize$+0.020$} & \multicolumn{2}{c}{\scriptsize$+0.053$} & \multicolumn{2}{c}{\scriptsize$+0.020$} \\
\bottomrule
\end{tabular}
\end{table}

\paragraph{Accuracy-Memory Frontier}
\label{subsec:pareto}

The gate falls back to the full cache when it closes, so the honest comparison is at matched \textit{achieved} cache. At matched achieved kept-KV (Figure~\ref{fig:pareto-gated}), gating beats plain SnapKV by $3-19$pp once the target retains more than about a third of the cache, and loses below that where the full-KV fallback dominates. \method recovers $69\%$ of the headroom against a per-input oracle, almost entirely from protecting the capacity-bound anchor. This recovery is bounded by how often the gate opens, since $1 - p_{\text{open}}$ of the cache stays resident regardless of the nominal budget $b$, and so memory should be planned against that floor (Table~\ref{tab:achieved-compression}).

The same floor sets the realized memory benefit. Against a $16\times$ nominal budget, realized compression is $1.8$--$3.4\times$ with mean $2.9\times$, because the resident $1 - p_{\text{open}}$ fraction limits what eviction removes. Under static provisioning, where one gate-closed sequence forces the batch to the full cache, this benefit decays toward $1\times$ by batch~16 (Table~\ref{tab:achieved-compression}). \method is therefore a per-input safeguard rather than a compressor, with a memory benefit that is real at moderate batch and small kept-KV targets and negligible once either the batch is large or the target is aggressive.

\paragraph{Scorer Independent of Capacity Floor}
\label{subsec:scorer-independence}

The gate's value depends on whether the capacity-bound regime is a property of the input rather than the scorer. On Qwen2.5-1.5B RULER 4K, a faithful ManifoldKV+Ada-KV implementation~\citep{manifoldkv,Adakv} and the attention-free KeyDiff~\citep{keydiff} both fall to chance on NIAH-MK3 below $25\%$ budget, like the four attention-score evictors. A geometry scorer that reaches $0.72$ on two-key NIAH against SnapKV's $0.16$ still collapses on three-key retrieval. The capacity limit is therefore substrate-dependent rather than universal; full tables and the Llama-3.1-8B rescue result are in Appendix~\ref{subsec:scorer-independence-app} (Table~\ref{tab:scorer-independence}).

\paragraph{Per-Input Gating Beyond Task-Level Routing}
\label{subsec:oracle}

A natural objection is that a simple task-level policy could achieve the same result. The task-label oracle closes on every NIAH-MK3 input and opens on every other, the best any task-level policy can achieve without test-time labels. In 8 of 16 cells \method reproduces the oracle exactly; in 5 cells it \textit{beats} the oracle by up to $+0.198$ by closing on dilution-prone inputs the oracle wrongly opens, and in 3 Mistral cells it loses by at most $0.009$ through false positives on MK3. Overall the oracle reaches $+20.1$pp against \method's $+22.9$pp. Task-level routing captures a substantial portion of the effect, but can fail on heterogeneous tasks where within-task variation makes per-input resolution useful. Full per-cell results appear in Appendix~\ref{subsec:per-task-delta} (Table~\ref{tab:task-oracle}).

\paragraph{Comparison with Published Methods and a Pre-Registered 32K Test}
\label{subsec:head-to-heads}
\label{subsec:prereg32k}

Wrapping CapKV~\citep{CapKV} on LongBench yields $+1.5$/$+8.3$pp on two subtasks ($N = 68$ and $24$), though sample sizes are small and no formal claim is made. Wrapping pretrained DBTrimKV~\citep{DBtrimKV} improves matched-budget accuracy by $+23.3$pp, but the gate fires on only $1.7\%$ of records, so at matched \textit{memory} plain DBTrimKV wins. \method therefore does not advance the frontier over a strong trained method at matched memory, and its value lies in the label-free per-input decision to retain the full cache when eviction is unsafe. Details appear in Appendix~\ref{app:experiment-details}. A pre-registered out-of-range test at 32K (Qwen2.5-1.5B, $N=100$) fits $\rho/(1-A_{\mathrm{full}})$ on 4K/16K and hashes predicted intervals before sweeping. One of three predictions lands strictly inside its interval while both misses over-recover in different ways. What survives is ordinal rather than quantitative, as MK3's ratio stays far below VT's at every context length even though constant-ratio proportionality fails. We accordingly claim a directional relation rather than a quantitative law (Table~\ref{tab:prereg32k}, Appendix~\ref{app:deployment-details}).

\begin{figure}[t]
  \centering
  \includegraphics[width=0.6\linewidth]{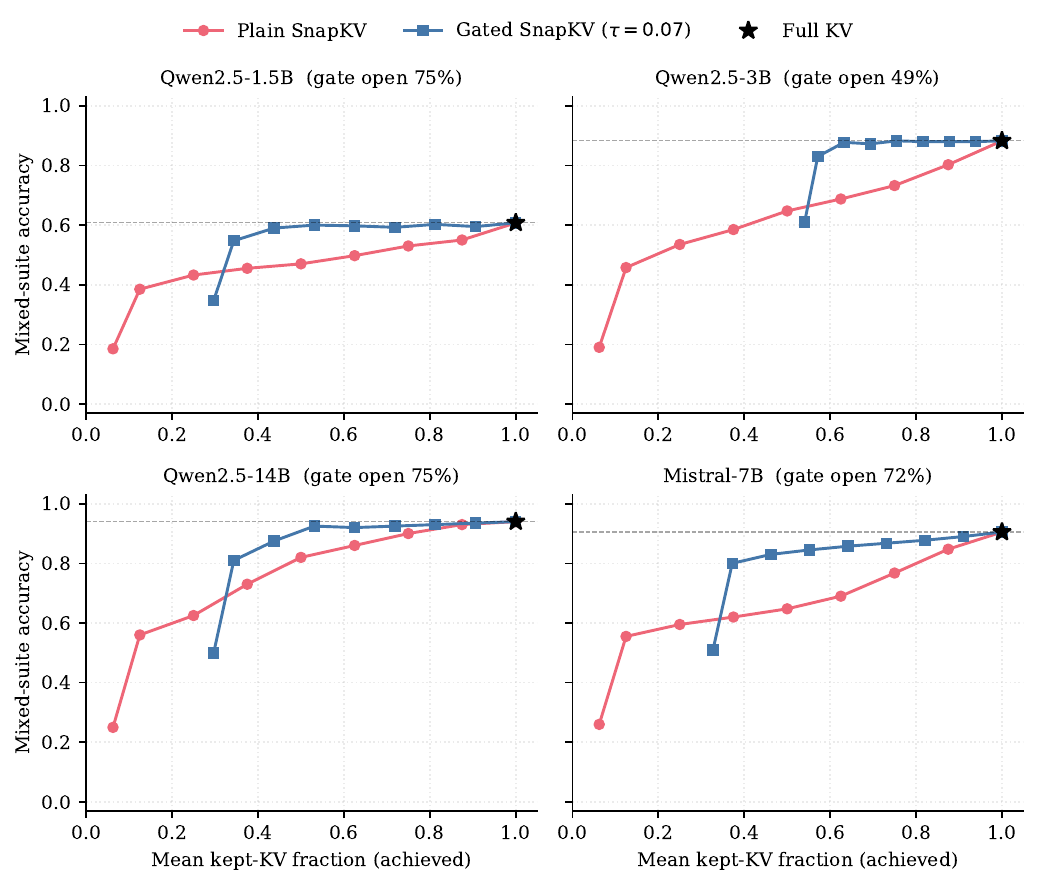}
  \caption{Accuracy vs.\ \textit{achieved} kept-KV fraction on the mixed 4K suite, SnapKV. The gate falls back to the full cache when it closes, so gated points sit at larger kept-KV than their nominal budget and the honest comparison is at matched achieved cache. Above a target of $\approx 1/3$ kept-KV (Qwen3B: $0.57$) the gated curve is the Pareto frontier by up to $+19$pp; at the leftmost gated point plain eviction dominates it. Gated results use the post-hoc $\tau = 0.07$ reconstruction (Appendix~\ref{app:repro}).}
  \label{fig:pareto-gated}
\end{figure}
\section{Mechanistic Interpretation and Scaling Law}
\label{sec:theory}


\paragraph{Dilution Mechanism}
\label{subsec:dilution}

We adopt the dilution mechanism of \citet{DBtrimKV} directly and claim no theoretical novelty. For a head reading a relevant set $\mathcal{R}$ against noise $\mathcal{N}$ with pre-eviction signal mass $\alpha_{\mathcal{R}}$, near-tie distractors force the dilution $\delta_t$ large and preferential retention reduces it. Their Proposition~3.1 establishes that near-tie distractors force attention dilution in a single head, and Corollary~3.2 shows preferential retention of high-attention positions reduces it. The partition follows from the mechanism: capacity-bound tasks concentrate the signal ($\alpha_{\mathcal{R}} \to 1$), so imperfect scoring can only \textit{lose} signal and eviction strictly degrades the head; dilution-prone tasks spread it ($\alpha_{\mathcal{R}} \ll 1$), so eviction of noise positions monotonically improves the head Signal-Noise Ratio (SNR). 


\paragraph{Scaling Relation}
\label{subsec:scaling-relation}

Combining the dilution mechanism with the headroom $1 - A_{\mathrm{full}}(M,T)$ gives the scaling formula:
\begin{equation}
\label{eq:scaling}
\rho_{\mathrm{KV}}(\mathsf{T}, M, T) \;\approx\;
\bigl(1 - A_{\mathrm{full}}(M, T)\bigr) \cdot p_{\mathrm{recov}}\bigl(\rho_{\mathcal{R}}^2 / \gamma\bigr),
\qquad p_{\mathrm{recov}}: [0, \infty) \to [0, 1],
\end{equation}
where $\rho_{\mathcal{R}}$ is the retained fraction of relevant mass and $\gamma$ the noise-retention ratio. The alignment between $D$ and the partition is empirical: $D$ ranks NIAH-MK3 smallest in every Qwen and Mistral cell (Table~\ref{tab:drops}), and $\rho / (1 - A_{\mathrm{full}})$ fits a through-origin slope on dilution-prone tasks while remaining near zero on NIAH-MK3 regardless of headroom (Table~\ref{tab:scaling-formula}, Figure~\ref{fig:scaling}). The pre-registered 32K test preserves the ordinal structure of these results, yet it disproves the constant-ratio proportionality implied by a direct interpretation of Eq.~\ref{eq:scaling}.

The theoretical bridge from $D$ to $\delta_t$ requires a margin-to-noise ratio roughly three times the empirically observed values. The sufficient-condition analysis therefore motivates the mechanism, but does not validate the deployed gate. Consequently, the paper's empirical claims do not depend on this theoretical certification, instead rest on the observed partition and the direct gating results. We provide full derivations in Appendices~\ref{app:theory-derivations}, \ref{app:proof-scaling}, and~\ref{app:sufficient}.


\section{Limitations}
\label{sec:limitations}

\paragraph{Architectural scope}
$D$ presumes softmax attention over a materialized key-value cache and does not apply to linear-attention or state-space architectures. Attempting to run it on layers that expose no key-value cache in the standard form fails outright rather than degrading, which bounds the family of models these results speak to.

\paragraph{The gate is inactive on dilution-prone-only workloads}
On the dilution-prone LongBench subset (qasper, triviaqa, trec, multifieldqa\_en) the pooled gated $\Delta$ is within $\pm 1$pp of plain on all four models (Qwen2.5-1.5B $0.000$, Qwen2.5-3B $+0.002$, Mistral-7B $+0.003$, Qwen2.5-14B $-0.008$). The partition predicts neutrality where a workload contains no capacity-bound inputs, and we observe it, including the mildly negative tail.

\paragraph{One-sided prediction and remaining scope gaps}
On LongBench passage\_count at Qwen2.5-14B the gate closes on all inputs (mean $D = -0.013$) yet eviction is empirically safe, producing a harmless false positive that costs compression rather than accuracy. The symmetric failure arises at the single-distractor boundary: NIAH-MK2 is empirically capacity-bound yet its mean $D = 0.122$ co-locates with genuinely dilution-prone tasks, so the gate opens and eviction proceeds as a false negative that the mean-$D$ threshold cannot resolve without per-input recovery information (Appendix~\ref{app:distractor-detail}). $D$ is therefore a one-sided predictor that is reliable at detecting dilution-prone inputs but not a perfect classifier. A related consequence for deployment is that the gate-open rate $p_{\text{open}}$ is workload-dependent and not known before running, so static provisioning cannot size the cache from $p_{\text{open}}$ in advance and must either profile it per workload or plan against the $1 - p_{\text{open}}$ resident floor. Multilingual and code tasks remain out of scope.

\paragraph{Threshold portability}
The portability of the empirical partition depends on the fixed threshold that operationalizes it. On Yi-1.5-9B and Llama-3.1-8B, the drop ordering transfers but $\tau = 0.07$ does not. It mis-classifies niah\_multivalue on both models and inverts the drop-vs-$\rho$ prediction on Yi. A $z$-scored variant repairs this with an unlabeled pilot, but it remains a calibration step, and per-input separation stays at AUC $\approx 0.80$. The residual gap traces to interior profile structure that the early-minus-late difference cannot distinguish. The statistic $D$ summarizes the per-layer agreement sequence $a_\ell$ by its endpoints, discarding the shape between them. A statistic that reads the full depth-profile rather than just its endpoints would be a stronger per-input predictor and is an open problem (Appendix~\ref{subsec:layer-profile}).

\section{Conclusion}
\label{sec:conclusion}

We show that KV-cache eviction is heterogeneous: inputs fall into a capacity-bound regime where compression causes severe degradation and a dilution-prone regime where selective retention can preserve or improve accuracy. The early-to-late head-agreement drop $D$ separates these regimes from the prefill alone, supporting this claim strongly on Qwen and Mistral and partially across architectures with per-model calibration. \method wraps any base evictor in a gate on $D$ and prevents catastrophic failures: on Mistral-7B NIAH-MK3 a $99\% \to 0\%$ collapse under plain eviction becomes a flat $89\%$ under the gate, with the same protection on realistic lcc code completion and a lifted accuracy-memory frontier where moderate compression is the target. These gains come with real costs: the safeguard requires one attention-exposing prefill pass, its memory benefit is bounded by $1 - p_{\text{open}}$ and decays toward $1\times$ at batch $16$ under static provisioning. The open problem it leaves is a signal that transfers across model families without a per-model pilot.


\subsection*{AI Use Statement}

In this work, we used generative AI tools to assist in the writing of proofs. We formulated every mathematical claim and supplied the ingredients of its argument, and AI assistance was limited to writing up the resulting steps. Among the remaining tasks carrying required disclosure, we did not use generative AI tools to generate synthetic data sets, develop theoretical models or conceptual frameworks, propose or refine hypotheses, design or provide feedback on research methodology or experiments, implement methods, clean or reformat datasets, or interpret results. Translation and qualitative or thematic data analysis are not applicable to this work. Among tasks carrying recommended disclosure, we used generative AI tools to draft parts of this paper and to edit the manuscript for readability. We have reviewed all AI-assisted work: every proof in Appendices~\ref{app:proof-scaling},~\ref{app:sufficient}, and~\ref{app:expected-agreement} was checked line by line by the authors against the assumptions and scope statements of Appendix~\ref{app:theory-derivations}, and all AI-drafted prose was revised by the authors and verified against the logs and tables it describes. We take responsibility for the final content of this work, including text, claims or artifacts produced with the aid of generative AI.

\subsection*{Reproducibility Statement}

Code and per-input logs are available at \url{https://anonymous.4open.science/r/PAGE-018239}. All runs use NVIDIA A100 80GB GPUs with PyTorch 2.11, Transformers 5.9, and Python 3.13, in bfloat16 with greedy (argmax) decoding, \texttt{max\_new\_tokens} $= 128$, and a fixed seed. Models are the public Hugging Face checkpoints of Qwen2.5-1.5B/3B/14B/32B-Instruct, Mistral-7B-Instruct-v0.3, Yi-1.5-9B-Chat, Llama-3.1-8B-Instruct, and Qwen3-4B-Instruct-2507, and tasks come from \texttt{simonjegou/ruler}~\citep{ruler-hf,kvpress} and LongBench~\citep{longbench}.

The remaining material is in the appendix. Appendix~\ref{app:repro} collects the implementation details needed for exact reproduction, including the cache-slicing and long-context conventions and the layer-binning convention behind the reported values of $D$. Appendix~\ref{app:experiment-details} describes how inputs are drawn and how each experiment is constructed, and Appendix~\ref{app:wrapper-provenance} documents the diagnostic wrappers together with the code-path and hardware effects measured on them, including one wrapper that failed validation and whose numbers are therefore not reported. Variation is bounded in two places. Appendix~\ref{subsec:seed-variance} reports three input-draw replicates on five SnapKV cells, and Appendix~\ref{app:ci} gives confidence intervals for the headline numbers. Each table caption states its per-cell sample sizes. The two published head-to-heads run in the baselines' own pinned environments, which Appendix~\ref{app:deferred} records with the released logs.

For the theoretical claims, Appendix~\ref{app:theory-derivations} states the mechanism, the assumptions, and the notation, with full proofs in Appendices~\ref{app:proof-scaling}, \ref{app:sufficient}, and \ref{app:expected-agreement}, and sharpened treatments of three deferred items in Appendix~\ref{app:sharpened}. Each part ends with a scope statement saying what it does and does not establish. The gate is one thresholded scalar (Eq.~\ref{eq:D}) and decoding is greedy, so every gated number in the paper can be recomputed from the released per-input logs without rerunning a model.

\subsection*{Ethics Statement}

This work studies inference-time efficiency of publicly released language models on public benchmarks. It involves no human subjects, no personal data, and no new model training. We see no ethical risk specific to this method beyond those of the underlying models.

\bibliographystyle{tmlr}
\bibliography{main}

@misc{BudgetThinker,
  title         = {BudgetThinker: Empowering Budget-aware LLM Reasoning with Control Tokens},
  author        = {Hao Wen and Xinrui Wu and Yi Sun and Feifei Zhang and Liye Chen and Jie Wang and Yunxin Liu and Yunhao Liu and Ya-Qin Zhang and Yuanchun Li},
  year          = {2025},
  eprint        = {2508.17196},
  archiveprefix = {arXiv},
  primaryclass  = {cs.LG},
  url           = {https://arxiv.org/abs/2508.17196}
}

@misc{qwen_tc,
  title         = {Qwen3 Technical Report},
  author        = {An Yang and Anfeng Li and Baosong Yang and Beichen Zhang and Binyuan Hui and Bo Zheng and Bowen Yu and Chang Gao and Chengen Huang and Chenxu Lv and Chujie Zheng and Dayiheng Liu and Fan Zhou and Fei Huang and Feng Hu and Hao Ge and Haoran Wei and Huan Lin and Jialong Tang and Jian Yang and Jianhong Tu and Jianwei Zhang and Jianxin Yang and Jiaxi Yang and Jing Zhou and Jingren Zhou and Junyang Lin and Kai Dang and Keqin Bao and Kexin Yang and Le Yu and Lianghao Deng and Mei Li and Mingfeng Xue and Mingze Li and Pei Zhang and Peng Wang and Qin Zhu and Rui Men and Ruize Gao and Shixuan Liu and Shuang Luo and Tianhao Li and Tianyi Tang and Wenbiao Yin and Xingzhang Ren and Xinyu Wang and Xinyu Zhang and Xuancheng Ren and Yang Fan and Yang Su and Yichang Zhang and Yinger Zhang and Yu Wan and Yuqiong Liu and Zekun Wang and Zeyu Cui and Zhenru Zhang and Zhipeng Zhou and Zihan Qiu},
  year          = {2025},
  eprint        = {2505.09388},
  archiveprefix = {arXiv},
  primaryclass  = {cs.CL},
  url           = {https://arxiv.org/abs/2505.09388}
}

@misc{Yi_tc,
  title         = {Yi: Open Foundation Models by 01.AI},
  author        = {01. AI and : and Alex Young and Bei Chen and Chao Li and Chengen Huang and Ge Zhang and Guanwei Zhang and Guoyin Wang and Heng Li and Jiangcheng Zhu and Jianqun Chen and Jing Chang and Kaidong Yu and Peng Liu and Qiang Liu and Shawn Yue and Senbin Yang and Shiming Yang and Wen Xie and Wenhao Huang and Xiaohui Hu and Xiaoyi Ren and Xinyao Niu and Pengcheng Nie and Yanpeng Li and Yuchi Xu and Yudong Liu and Yue Wang and Yuxuan Cai and Zhenyu Gu and Zhiyuan Liu and Zonghong Dai},
  year          = {2025},
  eprint        = {2403.04652},
  archiveprefix = {arXiv},
  primaryclass  = {cs.CL},
  url           = {https://arxiv.org/abs/2403.04652}
}

@misc{llama_tc,
  title         = {The Llama 3 Herd of Models},
  author        = {Aaron Grattafiori and Abhimanyu Dubey and Abhinav Jauhri and Abhinav Pandey and Abhishek Kadian and Ahmad Al-Dahle and Aiesha Letman and Akhil Mathur and Alan Schelten and Alex Vaughan and Amy Yang and Angela Fan and Anirudh Goyal and Anthony Hartshorn and Aobo Yang and Archi Mitra and Archie Sravankumar and Artem Korenev and Arthur Hinsvark and Arun Rao and Aston Zhang and Aurelien Rodriguez and Austen Gregerson and Ava Spataru and Baptiste Roziere and Bethany Biron and Binh Tang and Bobbie Chern and Charlotte Caucheteux and Chaya Nayak and Chloe Bi and Chris Marra and Chris McConnell and Christian Keller and Christophe Touret and Chunyang Wu and Corinne Wong and Cristian Canton Ferrer and Cyrus Nikolaidis and Damien Allonsius and Daniel Song and Danielle Pintz and Danny Livshits and Danny Wyatt and David Esiobu and Dhruv Choudhary and Dhruv Mahajan and Diego Garcia-Olano and Diego Perino and Dieuwke Hupkes and Egor Lakomkin and Ehab AlBadawy and Elina Lobanova and Emily Dinan and Eric Michael Smith and Filip Radenovic and Francisco Guzmán and Frank Zhang and Gabriel Synnaeve and Gabrielle Lee and Georgia Lewis Anderson and Govind Thattai and Graeme Nail and Gregoire Mialon and Guan Pang and Guillem Cucurell and Hailey Nguyen and Hannah Korevaar and Hu Xu and Hugo Touvron and Iliyan Zarov and Imanol Arrieta Ibarra and Isabel Kloumann and Ishan Misra and Ivan Evtimov and Jack Zhang and Jade Copet and Jaewon Lee and Jan Geffert and Jana Vranes and Jason Park and Jay Mahadeokar and Jeet Shah and Jelmer van der Linde and Jennifer Billock and Jenny Hong and Jenya Lee and Jeremy Fu and Jianfeng Chi and Jianyu Huang and Jiawen Liu and Jie Wang and Jiecao Yu and Joanna Bitton and Joe Spisak and Jongsoo Park and Joseph Rocca and Joshua Johnstun and Joshua Saxe and Junteng Jia and Kalyan Vasuden Alwala and Karthik Prasad and Kartikeya Upasani and Kate Plawiak and Ke Li and Kenneth Heafield and Kevin Stone and Khalid El-Arini and Krithika Iyer and Kshitiz Malik and Kuenley Chiu and Kunal Bhalla and Kushal Lakhotia and Lauren Rantala-Yeary and Laurens van der Maaten and Lawrence Chen and Liang Tan and Liz Jenkins and Louis Martin and Lovish Madaan and Lubo Malo and Lukas Blecher and Lukas Landzaat and Luke de Oliveira and Madeline Muzzi and Mahesh Pasupuleti and Mannat Singh and Manohar Paluri and Marcin Kardas and Maria Tsimpoukelli and Mathew Oldham and Mathieu Rita and Maya Pavlova and Melanie Kambadur and Mike Lewis and Min Si and Mitesh Kumar Singh and Mona Hassan and Naman Goyal and Narjes Torabi and Nikolay Bashlykov and Nikolay Bogoychev and Niladri Chatterji and Ning Zhang and Olivier Duchenne and Onur Çelebi and Patrick Alrassy and Pengchuan Zhang and Pengwei Li and Petar Vasic and Peter Weng and Prajjwal Bhargava and Pratik Dubal and Praveen Krishnan and Punit Singh Koura and Puxin Xu and Qing He and Qingxiao Dong and Ragavan Srinivasan and Raj Ganapathy and Ramon Calderer and Ricardo Silveira Cabral and Robert Stojnic and Roberta Raileanu and Rohan Maheswari and Rohit Girdhar and Rohit Patel and Romain Sauvestre and Ronnie Polidoro and Roshan Sumbaly and Ross Taylor and Ruan Silva and Rui Hou and Rui Wang and Saghar Hosseini and Sahana Chennabasappa and Sanjay Singh and Sean Bell and Seohyun Sonia Kim and Sergey Edunov and Shaoliang Nie and Sharan Narang and Sharath Raparthy and Sheng Shen and Shengye Wan and Shruti Bhosale and Shun Zhang and Simon Vandenhende and Soumya Batra and Spencer Whitman and Sten Sootla and Stephane Collot and Suchin Gururangan and Sydney Borodinsky and Tamar Herman and Tara Fowler and Tarek Sheasha and Thomas Georgiou and Thomas Scialom and Tobias Speckbacher and Todor Mihaylov and Tong Xiao and Ujjwal Karn and Vedanuj Goswami and Vibhor Gupta and Vignesh Ramanathan and Viktor Kerkez and Vincent Gonguet and Virginie Do and Vish Vogeti and Vítor Albiero and Vladan Petrovic and Weiwei Chu and Wenhan Xiong and Wenyin Fu and Whitney Meers and Xavier Martinet and Xiaodong Wang and Xiaofang Wang and Xiaoqing Ellen Tan and Xide Xia and Xinfeng Xie and Xuchao Jia and Xuewei Wang and Yaelle Goldschlag and Yashesh Gaur and Yasmine Babaei and Yi Wen and Yiwen Song and Yuchen Zhang and Yue Li and Yuning Mao and Zacharie Delpierre Coudert and Zheng Yan and Zhengxing Chen and Zoe Papakipos and Aaditya Singh and Aayushi Srivastava and Abha Jain and Adam Kelsey and Adam Shajnfeld and Adithya Gangidi and Adolfo Victoria and Ahuva Goldstand and Ajay Menon and Ajay Sharma and Alex Boesenberg and Alexei Baevski and Allie Feinstein and Amanda Kallet and Amit Sangani and Amos Teo and Anam Yunus and Andrei Lupu and Andres Alvarado and Andrew Caples and Andrew Gu and Andrew Ho and Andrew Poulton and Andrew Ryan and Ankit Ramchandani and Annie Dong and Annie Franco and Anuj Goyal and Aparajita Saraf and Arkabandhu Chowdhury and Ashley Gabriel and Ashwin Bharambe and Assaf Eisenman and Azadeh Yazdan and Beau James and Ben Maurer and Benjamin Leonhardi and Bernie Huang and Beth Loyd and Beto De Paola and Bhargavi Paranjape and Bing Liu and Bo Wu and Boyu Ni and Braden Hancock and Bram Wasti and Brandon Spence and Brani Stojkovic and Brian Gamido and Britt Montalvo and Carl Parker and Carly Burton and Catalina Mejia and Ce Liu and Changhan Wang and Changkyu Kim and Chao Zhou and Chester Hu and Ching-Hsiang Chu and Chris Cai and Chris Tindal and Christoph Feichtenhofer and Cynthia Gao and Damon Civin and Dana Beaty and Daniel Kreymer and Daniel Li and David Adkins and David Xu and Davide Testuggine and Delia David and Devi Parikh and Diana Liskovich and Didem Foss and Dingkang Wang and Duc Le and Dustin Holland and Edward Dowling and Eissa Jamil and Elaine Montgomery and Eleonora Presani and Emily Hahn and Emily Wood and Eric-Tuan Le and Erik Brinkman and Esteban Arcaute and Evan Dunbar and Evan Smothers and Fei Sun and Felix Kreuk and Feng Tian and Filippos Kokkinos and Firat Ozgenel and Francesco Caggioni and Frank Kanayet and Frank Seide and Gabriela Medina Florez and Gabriella Schwarz and Gada Badeer and Georgia Swee and Gil Halpern and Grant Herman and Grigory Sizov and Guangyi and Zhang and Guna Lakshminarayanan and Hakan Inan and Hamid Shojanazeri and Han Zou and Hannah Wang and Hanwen Zha and Haroun Habeeb and Harrison Rudolph and Helen Suk and Henry Aspegren and Hunter Goldman and Hongyuan Zhan and Ibrahim Damlaj and Igor Molybog and Igor Tufanov and Ilias Leontiadis and Irina-Elena Veliche and Itai Gat and Jake Weissman and James Geboski and James Kohli and Janice Lam and Japhet Asher and Jean-Baptiste Gaya and Jeff Marcus and Jeff Tang and Jennifer Chan and Jenny Zhen and Jeremy Reizenstein and Jeremy Teboul and Jessica Zhong and Jian Jin and Jingyi Yang and Joe Cummings and Jon Carvill and Jon Shepard and Jonathan McPhie and Jonathan Torres and Josh Ginsburg and Junjie Wang and Kai Wu and Kam Hou U and Karan Saxena and Kartikay Khandelwal and Katayoun Zand and Kathy Matosich and Kaushik Veeraraghavan and Kelly Michelena and Keqian Li and Kiran Jagadeesh and Kun Huang and Kunal Chawla and Kyle Huang and Lailin Chen and Lakshya Garg and Lavender A and Leandro Silva and Lee Bell and Lei Zhang and Liangpeng Guo and Licheng Yu and Liron Moshkovich and Luca Wehrstedt and Madian Khabsa and Manav Avalani and Manish Bhatt and Martynas Mankus and Matan Hasson and Matthew Lennie and Matthias Reso and Maxim Groshev and Maxim Naumov and Maya Lathi and Meghan Keneally and Miao Liu and Michael L. Seltzer and Michal Valko and Michelle Restrepo and Mihir Patel and Mik Vyatskov and Mikayel Samvelyan and Mike Clark and Mike Macey and Mike Wang and Miquel Jubert Hermoso and Mo Metanat and Mohammad Rastegari and Munish Bansal and Nandhini Santhanam and Natascha Parks and Natasha White and Navyata Bawa and Nayan Singhal and Nick Egebo and Nicolas Usunier and Nikhil Mehta and Nikolay Pavlovich Laptev and Ning Dong and Norman Cheng and Oleg Chernoguz and Olivia Hart and Omkar Salpekar and Ozlem Kalinli and Parkin Kent and Parth Parekh and Paul Saab and Pavan Balaji and Pedro Rittner and Philip Bontrager and Pierre Roux and Piotr Dollar and Polina Zvyagina and Prashant Ratanchandani and Pritish Yuvraj and Qian Liang and Rachad Alao and Rachel Rodriguez and Rafi Ayub and Raghotham Murthy and Raghu Nayani and Rahul Mitra and Rangaprabhu Parthasarathy and Raymond Li and Rebekkah Hogan and Robin Battey and Rocky Wang and Russ Howes and Ruty Rinott and Sachin Mehta and Sachin Siby and Sai Jayesh Bondu and Samyak Datta and Sara Chugh and Sara Hunt and Sargun Dhillon and Sasha Sidorov and Satadru Pan and Saurabh Mahajan and Saurabh Verma and Seiji Yamamoto and Sharadh Ramaswamy and Shaun Lindsay and Shaun Lindsay and Sheng Feng and Shenghao Lin and Shengxin Cindy Zha and Shishir Patil and Shiva Shankar and Shuqiang Zhang and Shuqiang Zhang and Sinong Wang and Sneha Agarwal and Soji Sajuyigbe and Soumith Chintala and Stephanie Max and Stephen Chen and Steve Kehoe and Steve Satterfield and Sudarshan Govindaprasad and Sumit Gupta and Summer Deng and Sungmin Cho and Sunny Virk and Suraj Subramanian and Sy Choudhury and Sydney Goldman and Tal Remez and Tamar Glaser and Tamara Best and Thilo Koehler and Thomas Robinson and Tianhe Li and Tianjun Zhang and Tim Matthews and Timothy Chou and Tzook Shaked and Varun Vontimitta and Victoria Ajayi and Victoria Montanez and Vijai Mohan and Vinay Satish Kumar and Vishal Mangla and Vlad Ionescu and Vlad Poenaru and Vlad Tiberiu Mihailescu and Vladimir Ivanov and Wei Li and Wenchen Wang and Wenwen Jiang and Wes Bouaziz and Will Constable and Xiaocheng Tang and Xiaojian Wu and Xiaolan Wang and Xilun Wu and Xinbo Gao and Yaniv Kleinman and Yanjun Chen and Ye Hu and Ye Jia and Ye Qi and Yenda Li and Yilin Zhang and Ying Zhang and Yossi Adi and Youngjin Nam and Yu and Wang and Yu Zhao and Yuchen Hao and Yundi Qian and Yunlu Li and Yuzi He and Zach Rait and Zachary DeVito and Zef Rosnbrick and Zhaoduo Wen and Zhenyu Yang and Zhiwei Zhao and Zhiyu Ma},
  year          = {2024},
  eprint        = {2407.21783},
  archiveprefix = {arXiv},
  primaryclass  = {cs.AI},
  url           = {https://arxiv.org/abs/2407.21783}
}

@inproceedings{ForesightKV,
  title     = {Foresight{KV}: Optimizing {KV} Cache Eviction for Reasoning Models by Learning Long-Term Contribution},
  author    = {Zican Dong and Peiyu Liu and Junyi Li and Zhipeng Chen and Han Peng and Shuo Wang and Xin Zhao},
  booktitle = {Forty-third International Conference on Machine Learning},
  year      = {2026},
  url       = {https://openreview.net/forum?id=znV8JHv8b8}
}

@inproceedings{tang2026predicting,
  title     = {Predicting Future Utility: Global Combinatorial Optimization for Task-Agnostic {KV} Cache Eviction},
  author    = {Ziyao Tang and Pengkun Jiao and Xinhang Chen and Wei Liu and Shiyong Li and Jingjing Chen},
  booktitle = {Forty-third International Conference on Machine Learning},
  year      = {2026},
  url       = {https://openreview.net/forum?id=FQLxcBsKIb}
}

@inproceedings{learn-to-evict,
  title     = {Learning to Evict from Key-Value Cache},
  author    = {Luca Moschella and Laura Manduchi and Ozan Sener},
  booktitle = {Forty-third International Conference on Machine Learning},
  year      = {2026},
  url       = {https://openreview.net/forum?id=0OevIlRMYN}
}

@misc{UT-ACA,
  title         = {UT-ACA: Uncertainty-Triggered Adaptive Context Allocation for Long-Context Inference},
  author        = {Lang Zhou and Shuxuan Li and Zhuohao Li and Shi Liu and Wei-Shi Zheng and Zhilin Zhao},
  year          = {2026},
  eprint        = {2603.18446},
  archiveprefix = {arXiv},
  primaryclass  = {cs.CL},
  url           = {https://arxiv.org/abs/2603.18446}
}

@misc{adaptivetesttimecompute,
  title         = {Adaptive Test-Time Compute Allocation for Reasoning LLMs via Constrained Policy Optimization},
  author        = {Zhiyuan Zhai and Bingcong Li and Bingnan Xiao and Ming Li and Xin Wang},
  year          = {2026},
  eprint        = {2604.14853},
  archiveprefix = {arXiv},
  primaryclass  = {cs.LG},
  url           = {https://arxiv.org/abs/2604.14853}
}

@misc{Later,
  title         = {LaTER: Efficient Test-Time Reasoning via Latent Exploration and Explicit Verification},
  author        = {Xuan Li and Yining Wang and Yuchen Liu and Guanjun Liu and Delai Qiu and Shengping Liu and Jiaen Liang and Wei Huang and Jun Yu and Junnan Zhu},
  year          = {2026},
  eprint        = {2605.07315},
  archiveprefix = {arXiv},
  primaryclass  = {cs.CL},
  url           = {https://arxiv.org/abs/2605.07315}
}

@misc{DBtrimKV,
  title         = {Make Each Token Count: Towards Improving Long-Context Performance with KV Cache Eviction},
  author        = {Ngoc Bui and Hieu Trung Nguyen and Arman Cohan and Rex Ying},
  year          = {2026},
  eprint        = {2605.09649},
  archiveprefix = {arXiv},
  primaryclass  = {cs.LG},
  url           = {https://arxiv.org/abs/2605.09649}
}

@inproceedings{H20,
  author    = {Zhang, Zhenyu and Sheng, Ying and Zhou, Tianyi and Chen, Tianlong and Zheng, Lianmin and Cai, Ruisi and Song, Zhao and Tian, Yuandong and R{\'e}, Christopher and Barrett, Clark and Wang, Zhangyang and Chen, Beidi},
  title     = {H2O: heavy-hitter oracle for efficient generative inference of large language models},
  year      = {2023},
  publisher = {Curran Associates Inc.},
  address   = {Red Hook, NY, USA},
  booktitle = {Proceedings of the 37th International Conference on Neural Information Processing Systems},
  articleno = {1506},
  numpages  = {50},
  location  = {New Orleans, LA, USA},
  series    = {NIPS '23}
}

@inproceedings{Ruler,
  title     = {{RULER}: What{\textquoteright}s the Real Context Size of Your Long-Context Language Models?},
  author    = {Cheng-Ping Hsieh and Simeng Sun and Samuel Kriman and Shantanu Acharya and Dima Rekesh and Fei Jia and Boris Ginsburg},
  booktitle = {First Conference on Language Modeling},
  year      = {2024},
  url       = {https://openreview.net/forum?id=kIoBbc76Sy}
}

@inproceedings{snapkv,
  title     = {Snap{KV}: {LLM} Knows What You are Looking for Before Generation},
  author    = {Yuhong Li and Yingbing Huang and Bowen Yang and Bharat Venkitesh and Acyr Locatelli and Hanchen Ye and Tianle Cai and Patrick Lewis and Deming Chen},
  booktitle = {The Thirty-eighth Annual Conference on Neural Information Processing Systems},
  year      = {2024},
  url       = {https://openreview.net/forum?id=poE54GOq2l}
}

@inproceedings{PyramidKV,
  title     = {Pyramid{KV}: Dynamic {KV} Cache Compression based on Pyramidal Information Funneling},
  author    = {Zefan Cai and Yichi Zhang and Bofei Gao and Yuliang Liu and Yucheng Li and Tianyu Liu and Keming Lu and Wayne Xiong and Yue Dong and Junjie Hu and Wen Xiao},
  booktitle = {Second Conference on Language Modeling},
  year      = {2025},
  url       = {https://openreview.net/forum?id=ayi7qezU87}
}

@inproceedings{Adakv,
  title     = {Ada-{KV}: Optimizing {KV} Cache Eviction by Adaptive Budget Allocation for Efficient {LLM} Inference},
  author    = {Yuan Feng and Junlin Lv and Yukun Cao and Xike Xie and S Kevin Zhou},
  booktitle = {The Thirty-ninth Annual Conference on Neural Information Processing Systems},
  year      = {2025},
  url       = {https://openreview.net/forum?id=tcisuhGsQZ}
}

@inproceedings{DynamicKV,
  author    = {Xiabin Zhou and Wenbin Wang and Minyan Zeng and Jiaxian Guo and Xuebo Liu and Li Shen and Min Zhang and Liang Ding},
  title     = {DynamicKV: Task-Aware Adaptive KV Cache Compression for Long Context LLMs},
  year      = {2025},
  cdate     = {1735689600000},
  pages     = {8042-8057},
  url       = {https://aclanthology.org/2025.findings-emnlp.426/},
  booktitle = {EMNLP (Findings)}
}

@inproceedings{CapKV,
  title     = {Rethinking {KV} Cache Eviction via a Unified Information-Theoretic Objective},
  author    = {Jiaming Yang and Chenwei Tang and Liangli Zhen and Jiancheng Lv},
  booktitle = {Forty-third International Conference on Machine Learning},
  year      = {2026},
  url       = {https://openreview.net/forum?id=2fPbFUJ87k}
}

@misc{garcia2026protectionnearlyneedstructural,
  title         = {Protection Is (Nearly) All You Need: Structural Protection Dominates Scoring in Globally Capped KV Eviction},
  author        = {Gabriel Garcia},
  year          = {2026},
  eprint        = {2605.18053},
  archiveprefix = {arXiv},
  primaryclass  = {cs.LG},
  url           = {https://arxiv.org/abs/2605.18053}
}

@inproceedings{indexmem,
  title     = {IndexMem: Learned {KV}-Cache Eviction with Latent Memory for Long-Context {LLM} Inference},
  author    = {Xintong Yang and Hao Gu and Binxing Xu and Lujun Li and Bei Liu and Jiacheng Liu and Qiyuan Zhu and Yike Guo and Sirui Han},
  booktitle = {Forty-third International Conference on Machine Learning},
  year      = {2026},
  url       = {https://openreview.net/forum?id=SKXkGLbXdB}
}

@inproceedings{longbench,
  title     = {{L}ong{B}ench: A Bilingual, Multitask Benchmark for Long Context Understanding},
  author    = {Bai, Yushi  and
               Lv, Xin  and
               Zhang, Jiajie  and
               Lyu, Hongchang  and
               Tang, Jiankai  and
               Huang, Zhidian  and
               Du, Zhengxiao  and
               Liu, Xiao  and
               Zeng, Aohan  and
               Hou, Lei  and
               Dong, Yuxiao  and
               Tang, Jie  and
               Li, Juanzi},
  editor    = {Ku, Lun-Wei  and
               Martins, Andre  and
               Srikumar, Vivek},
  booktitle = {Proceedings of the 62nd Annual Meeting of the Association for Computational Linguistics (Volume 1: Long Papers)},
  month     = aug,
  year      = {2024},
  address   = {Bangkok, Thailand},
  publisher = {Association for Computational Linguistics},
  url       = {https://aclanthology.org/2024.acl-long.172/},
  doi       = {10.18653/v1/2024.acl-long.172},
  pages     = {3119--3137}
}

@misc{mistral7b,
  title         = {Mistral 7B},
  author        = {Albert Q. Jiang and Alexandre Sablayrolles and Arthur Mensch and Chris Bamford and Devendra Singh Chaplot and Diego de las Casas and Florian Bressand and Gianna Lengyel and Guillaume Lample and Lucile Saulnier and Lélio Renard Lavaud and Marie-Anne Lachaux and Pierre Stock and Teven Le Scao and Thibaut Lavril and Thomas Wang and Timothée Lacroix and William El Sayed},
  year          = {2023},
  eprint        = {2310.06825},
  archiveprefix = {arXiv},
  primaryclass  = {cs.CL},
  url           = {https://arxiv.org/abs/2310.06825}
}

@misc{qwen2.5,
  title         = {Qwen2.5 Technical Report},
  author        = {Qwen and : and An Yang and Baosong Yang and Beichen Zhang and Binyuan Hui and Bo Zheng and Bowen Yu and Chengyuan Li and Dayiheng Liu and Fei Huang and Haoran Wei and Huan Lin and Jian Yang and Jianhong Tu and Jianwei Zhang and Jianxin Yang and Jiaxi Yang and Jingren Zhou and Junyang Lin and Kai Dang and Keming Lu and Keqin Bao and Kexin Yang and Le Yu and Mei Li and Mingfeng Xue and Pei Zhang and Qin Zhu and Rui Men and Runji Lin and Tianhao Li and Tianyi Tang and Tingyu Xia and Xingzhang Ren and Xuancheng Ren and Yang Fan and Yang Su and Yichang Zhang and Yu Wan and Yuqiong Liu and Zeyu Cui and Zhenru Zhang and Zihan Qiu},
  year          = {2025},
  eprint        = {2412.15115},
  archiveprefix = {arXiv},
  primaryclass  = {cs.CL},
  url           = {https://arxiv.org/abs/2412.15115}
}

@misc{ruler-hf,
  author       = {Simon Jegou},
  title        = {RULER},
  year         = {2024},
  howpublished = {Hugging Face Datasets},
  url          = {https://huggingface.co/datasets/simonjegou/ruler},
  note         = {Dataset}
}

@misc{kvpress,
  title         = {Expected Attention: KV Cache Compression by Estimating Attention from Future Queries Distribution},
  author        = {Alessio Devoto and Maximilian Jeblick and Simon Jégou},
  year          = {2025},
  eprint        = {2510.00636},
  archiveprefix = {arXiv},
  primaryclass  = {cs.AI},
  url           = {https://arxiv.org/abs/2510.00636}
}

@inproceedings{criticalKV,
  title     = {Critical{KV}: Optimizing {KV} Cache Eviction from an Output Perturbation Perspective},
  author    = {Yuan Feng and Junlin Lv and Haoyu Guo and Yukun Cao and S Kevin Zhou and Xike Xie},
  booktitle = {Forty-third International Conference on Machine Learning},
  year      = {2026},
  url       = {https://openreview.net/forum?id=vMwlcLyI4b}
}

@inproceedings{pitfalls,
  title     = {The Pitfalls of {KV} Cache Compression},
  author    = {Chen, Alex  and
               Geh, Renato  and
               Grover, Aditya  and
               Van Den Broeck, Guy  and
               Israel, Daniel Mingyi},
  editor    = {Liakata, Maria  and
               Moreira, Viviane P.  and
               Zhang, Jiajun  and
               Jurgens, David},
  booktitle = {Proceedings of the 64th Annual Meeting of the {A}ssociation for {C}omputational {L}inguistics (Volume 1: Long Papers)},
  month     = jul,
  year      = {2026},
  address   = {San Diego, California, United States},
  publisher = {Association for Computational Linguistics},
  url       = {https://aclanthology.org/2026.acl-long.1926/},
  doi       = {10.18653/v1/2026.acl-long.1926},
  pages     = {41530--41553},
  isbn      = {979-8-89176-390-6}
}

@misc{fastkvzip,
  title         = {Fast KVzip: Efficient and Accurate LLM Inference with Gated KV Eviction},
  author        = {Jang-Hyun Kim and Dongyoon Han and Sangdoo Yun},
  year          = {2026},
  eprint        = {2601.17668},
  archiveprefix = {arXiv},
  primaryclass  = {cs.LG},
  url           = {https://arxiv.org/abs/2601.17668}
}

@inproceedings{manifoldkv,
  title     = {Manifold{KV}: Training-Free {KV} Cache Compression via Euclidean Outlier Detection},
  author    = {Debajyoti Datta and Trishala Neeraj and Bibek Paudel and Vyom Sharma and Subhabrata Mukherjee},
  booktitle = {Forty-third International Conference on Machine Learning},
  year      = {2026},
  url       = {https://openreview.net/forum?id=fo77TzkulW}
}

@misc{CompilerKV,
  title         = {CompilerKV: Risk-Adaptive KV Compression via Offline Experience Compilation},
  author        = {Ning Yang and Chengzhi Wang and Yibo Liu and Baoliang Tian and Haijun Zhang},
  year          = {2026},
  eprint        = {2602.08686},
  archiveprefix = {arXiv},
  primaryclass  = {cs.LG},
  url           = {https://arxiv.org/abs/2602.08686}
}

@misc{understandingphysicskeyvaluecache,
  title         = {Understanding the Physics of Key-Value Cache Compression for LLMs through Attention Dynamics},
  author        = {Samhruth Ananthanarayanan and Ayan Sengupta and Tanmoy Chakraborty},
  year          = {2026},
  eprint        = {2603.01426},
  archiveprefix = {arXiv},
  primaryclass  = {cs.CL},
  url           = {https://arxiv.org/abs/2603.01426}
}

@misc{valueawarekveviction,
  title         = {When Does Value-Aware KV Eviction Help? A Fixed-Contract Diagnostic for Non-Monotone Cache Compression},
  author        = {Ruijie Zhang and Haozhe Liang and Da Chang and Li Hu and Fanqi Kong and Huaxiao Yin and Yu Li},
  year          = {2026},
  eprint        = {2605.08234},
  archiveprefix = {arXiv},
  primaryclass  = {cs.LG},
  url           = {https://arxiv.org/abs/2605.08234}
}

@misc{simplepluginimprovingevictionbased,
  title         = {A Simple Plug-in for Improving Eviction-Based KV Cache Compression},
  author        = {Yuping Lin and Jiayuan Ding and Yue Xing and Pengfei He and Jiliang Tang and Subhabrata Mukherjee},
  year          = {2026},
  eprint        = {2605.23258},
  archiveprefix = {arXiv},
  primaryclass  = {cs.LG},
  url           = {https://arxiv.org/abs/2605.23258}
}

@misc{MomentKV,
  title         = {MomentKV: Closing the Directional Gap in KV Cache Eviction for Long-Context Inference},
  author        = {Yu Li and Binxu Li and Tian Lan},
  year          = {2026},
  eprint        = {2606.01563},
  archiveprefix = {arXiv},
  primaryclass  = {cs.LG},
  url           = {https://arxiv.org/abs/2606.01563}
}

@inproceedings{chang2026valueaware,
  title     = {Value-Aware Stochastic {KV} Cache Eviction for Reasoning Models},
  author    = {Ting-Yun Chang and Harvey Yiyun Fu and Deqing Fu and Chenghao Yang and Jesse Thomason and Robin Jia},
  booktitle = {COLM 2026 Workshop on Efficient Reasoning},
  year      = {2026},
  url       = {https://openreview.net/forum?id=HysrOiWFIA}
}

@misc{rigiddynamicentropyguidedadaptive,
  title         = {From Rigid to Dynamic: Entropy-Guided Adaptive Inference for Long-Context LLMs},
  author        = {Zhanchao Xu and Haoyang Li and Qingfa Xiao and Fei Teng and Chen Jason Zhang and Lei Chen and Qing Li},
  year          = {2026},
  eprint        = {2606.09508},
  archiveprefix = {arXiv},
  primaryclass  = {cs.AI},
  url           = {https://arxiv.org/abs/2606.09508}
}

@misc{guidedkvcachecompression,
  title         = {CompressKV: Semantic-Retrieval-Guided KV-Cache Compression for Resource-Efficient Long-Context LLM Inference},
  author        = {Xiaolin Lin and Jingcun Wang and Olga Kondrateva and Yiyu Shi and Bing Li and Grace Li Zhang},
  year          = {2026},
  eprint        = {2606.24467},
  archiveprefix = {arXiv},
  primaryclass  = {cs.AI},
  url           = {https://arxiv.org/abs/2606.24467}
}

@misc{hu2026take,
  title  = {{TAKE}: Task-Aware Chunked {KV} Cache Eviction for Efficient Long-Context {LLM} Prefill},
  author = {Long Hu and Nan Jia and Rui Wang and Jiahui Li and Qingyi Yang and Yixue Hao and Xianzhi Li and Xiaofei Liao},
  year   = {2026},
  url    = {https://openreview.net/forum?id=kMLfUshPwo}
}

@misc{zeng2025incontextkvcacheevictionllms,
  title         = {In-context KV-Cache Eviction for LLMs via Attention-Gate},
  author        = {Zihao Zeng and Bokai Lin and Tianqi Hou and Hao Zhang and Zhijie Deng},
  year          = {2025},
  eprint        = {2410.12876},
  archiveprefix = {arXiv},
  primaryclass  = {cs.CL},
  url           = {https://arxiv.org/abs/2410.12876}
}

@misc{EPIkv-arxiv,
  title         = {Epiphany-Aware KV Cache Eviction Without the Attention Matrix},
  author        = {Steven Kolawole and Virginia Smith},
  year          = {2026},
  eprint        = {2606.26472},
  archiveprefix = {arXiv},
  primaryclass  = {cs.LG},
  url           = {https://arxiv.org/abs/2606.26472}
}

@inproceedings{defensive-kv-iclr2026,
  title     = {Defensive{KV}: Taming the Fragility of {KV} Cache Eviction in {LLM} Inference},
  author    = {Yuan Feng and Haoyu Guo and Junlin Lv and S Kevin Zhou and Xike Xie},
  booktitle = {The Fourteenth International Conference on Learning Representations},
  year      = {2026},
  url       = {https://openreview.net/forum?id=nJgS06sX3O}
}

@misc{evicpressjointkvcachecompression,
  title         = {EVICPRESS: Joint KV-Cache Compression and Eviction for Efficient LLM Serving},
  author        = {Shaoting Feng and Yuhan Liu and Hanchen Li and Xiaokun Chen and Samuel Shen and Kuntai Du and Zhuohan Gu and Rui Zhang and Yuyang Huang and Yihua Cheng and Jiayi Yao and Qizheng Zhang and Ganesh Ananthanarayanan and Junchen Jiang},
  year          = {2025},
  eprint        = {2512.14946},
  archiveprefix = {arXiv},
  primaryclass  = {cs.OS},
  url           = {https://arxiv.org/abs/2512.14946}
}

@inproceedings{keydiff,
  title     = {KeyDiff: Key Similarity-Based {KV} Cache Eviction for Long-Context {LLM} Inference in Resource-Constrained Environments},
  author    = {Junyoung Park and Dalton Jones and Matthew J Morse and Raghavv Goel and Mingu Lee and Christopher Lott},
  booktitle = {The Thirty-ninth Annual Conference on Neural Information Processing Systems},
  year      = {2025},
  url       = {https://openreview.net/forum?id=uBaFH7aQnC}
}

@inproceedings{sagekv,
  title     = {{LLM}s Know What to Drop: Self-Attention Guided {KV} Cache Eviction for Efficient Long-Context Inference},
  author    = {Guangtao Wang and Shubhangi Upasani and Chen Wu and Darshan Gandhi and Jonathan Lingjie Li and Changran Hu and Bo Li and Urmish Thakker},
  booktitle = {Sparsity in LLMs (SLLM): Deep Dive into Mixture of Experts, Quantization, Hardware, and Inference},
  year      = {2025},
  url       = {https://openreview.net/forum?id=qg9dlCcNzr}
}

@misc{AnchorKV,
  title         = {AnchorKV: Anchor-Residual KV Cache Compression},
  author        = {Malik Khalaf and Yara Shamshoum and Nitzan Hodos and Yuval Sieradzki and Assaf Schuster},
  year          = {2026},
  eprint        = {2608.02901},
  archiveprefix = {arXiv},
  primaryclass  = {cs.LG},
  url           = {https://arxiv.org/abs/2608.02901}
}

@inproceedings{lookaheadkv,
  title     = {Lookahead{KV}: Fast and Accurate {KV} Cache Eviction by Glimpsing into the Future without Generation},
  author    = {Jinwoo Ahn and Ingyu Seong and Akhil Kedia and Junhan Kim and Hyemi Jang and Kangwook Lee and Yongkweon Jeon},
  booktitle = {The Fourteenth International Conference on Learning Representations},
  year      = {2026},
  url       = {https://openreview.net/forum?id=RVLMGPXt2i}
}

@inproceedings{Cache-to-Cache,
title={Cache-to-Cache: Direct Semantic Communication Between Large Language Models},
author={Tianyu Fu and Zihan Min and Hanling Zhang and Jichao Yan and Guohao Dai and Wanli Ouyang and Yu Wang},
booktitle={The Fourteenth International Conference on Learning Representations},
year={2026},
url={https://openreview.net/forum?id=LeatkxrBCi}
}

@article{Harris_1960,
  title   = {A lower bound for the critical probability in a certain percolation process},
  volume  = {56},
  doi     = {10.1017/S0305004100034241},
  number  = {1},
  journal = {Mathematical Proceedings of the Cambridge Philosophical Society},
  author  = {Harris, T. E.},
  year    = {1960},
  pages   = {13–20}
}

@inproceedings{liu2024repobench,
  title     = {RepoBench: Benchmarking Repository-Level Code Auto-Completion Systems},
  author    = {Tianyang Liu and Canwen Xu and Julian McAuley},
  booktitle = {The Twelfth International Conference on Learning Representations},
  year      = {2024},
  url       = {https://openreview.net/forum?id=pPjZIOuQuF}
}

@inproceedings{hotpotqa,
  title     = {{H}otpot{QA}: A Dataset for Diverse, Explainable Multi-hop Question Answering},
  author    = {Yang, Zhilin  and
               Qi, Peng  and
               Zhang, Saizheng  and
               Bengio, Yoshua  and
               Cohen, William  and
               Salakhutdinov, Ruslan  and
               Manning, Christopher D.},
  editor    = {Riloff, Ellen  and
               Chiang, David  and
               Hockenmaier, Julia  and
               Tsujii, Jun{'}ichi},
  booktitle = {Proceedings of the 2018 Conference on Empirical Methods in Natural Language Processing},
  month     = oct # {-} # nov,
  year      = {2018},
  address   = {Brussels, Belgium},
  publisher = {Association for Computational Linguistics},
  url       = {https://aclanthology.org/D18-1259/},
  doi       = {10.18653/v1/D18-1259},
  pages     = {2369--2380}
}

@inproceedings{wu2025retrieval,
  title     = {Retrieval Head Mechanistically Explains Long-Context Factuality},
  author    = {Wenhao Wu and Yizhong Wang and Guangxuan Xiao and Hao Peng and Yao Fu},
  booktitle = {The Thirteenth International Conference on Learning Representations},
  year      = {2025},
  url       = {https://openreview.net/forum?id=EytBpUGB1Z}
}

@inproceedings{streamingLLM,
title={Efficient Streaming Language Models with Attention Sinks},
author={Guangxuan Xiao and Yuandong Tian and Beidi Chen and Song Han and Mike Lewis},
booktitle={The Twelfth International Conference on Learning Representations},
year={2024},
url={https://openreview.net/forum?id=NG7sS51zVF}
}

@article{roformer,
  author     = {Su, Jianlin and Ahmed, Murtadha and Lu, Yu and Pan, Shengfeng and Bo, Wen and Liu, Yunfeng},
  title      = {RoFormer: Enhanced transformer with Rotary Position Embedding},
  year       = {2024},
  issue_date = {Feb 2024},
  publisher  = {Elsevier Science Publishers B. V.},
  address    = {NLD},
  volume     = {568},
  number     = {C},
  issn       = {0925-2312},
  url        = {https://doi.org/10.1016/j.neucom.2023.127063},
  doi        = {10.1016/j.neucom.2023.127063},
  journal    = {Neurocomput.},
  month      = feb,
  numpages   = {12}
}

@inproceedings{flash-attention,
  author    = {Dao, Tri and Fu, Daniel Y. and Ermon, Stefano and Rudra, Atri and R\'{e}, Christopher},
  title     = {FLASHATTENTION: fast and memory-efficient exact attention with IO-awareness},
  year      = {2022},
  isbn      = {9781713871088},
  publisher = {Curran Associates Inc.},
  address   = {Red Hook, NY, USA},
  booktitle = {Proceedings of the 36th International Conference on Neural Information Processing Systems},
  articleno = {1189},
  numpages  = {16},
  location  = {New Orleans, LA, USA},
  series    = {NIPS '22}
}

\appendix

\section{Deferred Tables, Derivations, and Secondary Experiments}
\label{app:deferred}

\subsection{Extended Related Work}
\label{app:related-full}

Section~\ref{sec:related} positions \method against the decision axes that prior work optimizes.
This subsection gives the full treatment, starting with the five neighbours closest to our setup
before turning to concurrent, older, and adjacent work.

\paragraph{Concurrent 2026 diagnostics and wrappers}
\citet{valueawarekveviction} ask a similar headline question, namely when value-aware KV eviction helps. Their work is complementary to ours along a clear axis. Their score is \textit{value-aware} (a per-input support-coupling score $\phi(x)$ over value states), whereas our signal is \textit{attention topology}. Beyond this signal difference, their unit of analysis is the selector rather than the task, their $\phi(x)$ drives diagnostic grid splits rather than an eviction gate, and they report that diagnostic signals transfer across selectors only conditionally, which makes the single transferred threshold we report non-obvious. 
\citet{pitfalls} document instruction-type-dependent degradation under five eviction methods, with some instruction types becoming entirely ignored, the closest published task-level failure taxonomy. We
cede the hurts-degradation observation to them and claim instead the joint helps-versus-hurts partition plus the a priori predictor. 
\citet{understandingphysicskeyvaluecache} compute head-consensus levels and report how they evolve with depth, describing distinct profiles by architecture (early consensus with late diversification on one family, late convergence on another) and relating those profiles descriptively to compression tolerance. Their statistic is not purely \textit{within-layer}: they too report a depth-wise comparison, substantially the diagnosis Appendix~\ref{subsec:layer-profile} reaches independently. We differentiate instead on the three axes that do hold: per input rather than per architecture, computed online at prefill rather than offline, and thresholded to make a deployment decision rather than used as a descriptive metric. To our knowledge the statistic has not been proposed as an eviction gate before, and the ablation of Table~\ref{tab:signal-ablation} shows it separates the partition cleanly (AUC $1.000$) on the selecting cell and transfers across families. 
On the failure side, \citet{chang2026valueaware} (VaSE) trace catastrophic eviction failures to high-magnitude value states, an axis orthogonal to our attention-topology analysis, and ManifoldKV~\citep{manifoldkv} shows that a Euclidean-outlier key scorer improves NIAH-MultiKey retention under compression, which is why we define ``capacity-bound'' relative to the scorer being gated (Section~\ref{subsec:scorer-independence}).
SAGE-KV~\citep{sagekv} shares our \textit{premise}: after prefilling, the model's own attention already indicates which tokens can be dropped, read once rather than per decode step, with a head-level analysis. The difference is what is done with the premise. SAGE-KV always compresses and uses the signal to pick \textit{which} tokens and heads to keep, whereas we use a signal from the same source to decide \textit{whether} to evict at all, do no head-level selection, and, unlike SAGE-KV, characterize a task class on which eviction cannot be made safe by any choice of tokens.
KeyDiff~\citep{keydiff} is an attention-free, FlashAttention-compatible evictor whose stated motivation is precisely that materializing attention is expensive. We address it empirically in the scorer-independence analysis (Section~\ref{subsec:scorer-independence}). KeyDiff collapses on NIAH-MK3 below 25\% budget, as does every scorer we tested. Its effectiveness therefore does not confer safety on the capacity-bound class.

\begin{table}[t]
\centering
\caption{Gate-signal ablation on Qwen2.5-1.5B-Instruct (RULER 4K, $N = 50$ per task, one shared prefill per input). Each cheap prefill statistic is tested for separating the capacity-bound anchor (NIAH-MK3) from the dilution-prone tasks (vt, fwe, qa\_1, multivalue). ``Sep.'' = a single threshold puts the capacity-bound task mean(s) strictly on one side of all four dilution task means. AUCs are per-input and oriented per signal ($0.5$ = chance). ``MK mono.'' = task means are monotone along the MK1$\to$MK2$\to$MK3 distractor gradient; $r$ = Pearson correlation with $D$.}
\label{tab:signal-ablation}
\resizebox{\textwidth}{!}{%
\begin{tabular}{lcccccc}
\toprule
Signal & Sep.\ MK3 & AUC MK3 & AUC MK3+MK2 & MK mono. & $r$ w/ $D$ \\
\midrule
head-agreement drop $D$ (paper) & \yes  & 1.000 & \textbf{0.859} & \yes & $+1.00$ \\
mean attention entropy (norm.)  & \no & 0.508 & 0.670 & \no & $+0.08$ \\
top-32 attention mass share & \no  & 0.784 & 0.574 & \yes & $+0.22$ \\
max single-position share  & \no & 0.723 & 0.708 & \yes & $+0.32$ \\
key-norm dispersion (std/mean)  & \no & 0.570 & 0.560 & \no & $+0.01$ \\
early$-$late entropy drop  & \no & 0.805 & 0.575 & \no & $-0.64$ \\
\bottomrule
\end{tabular}%
}
\end{table}

On the wrapper side, VECTOR~\citep{simplepluginimprovingevictionbased} and
CriticalKV~\citep{criticalKV} are plug-and-play augmentations over
multiple base evictors, and both always compress under a fixed budget and
adapt \textit{which} tokens are kept, whereas our gate decides
\textit{whether} to evict at all, a priori from a label-free attention
signal. The serving system EvicPress~\citep{evicpressjointkvcachecompression} also chooses
whether to evict per context, but as part of a profiled quality-latency
optimization over storage tiers rather than from an attention statistic,
and without a task-type partition. Cross-family transfer of eviction control is itself known through CompilerKV's~\citep{CompilerKV} portable per-head retention tables. We instead transfer one fixed threshold across the Qwen and Mistral families, extended to Llama-architecture models and Qwen3 via an unlabeled per-model standardization.

Further concurrent evictors and allocators,
EntropyInfer~\citep{rigiddynamicentropyguidedadaptive}, CompressKV~\citep{guidedkvcachecompression},
MomentKV~\citep{MomentKV}, EpiKV~\citep{EPIkv-arxiv},
DefensiveKV~\citep{defensive-kv-iclr2026},
LookaheadKV~\citep{lookaheadkv}, and TAKE~\citep{hu2026take}, adapt
budgets or scores per head, layer, token, or task, but none gates the
eviction decision per input or identifies a task class where eviction
hurts monotonically. AnchorKV~\citep{AnchorKV} is also
``safety-aware'' but in a different sense from ours: it biases retention
scores away from harmful-prompt directions for jailbreak defense while
always compressing, whereas our ``collapse-avoiding'' sense is about not
destroying task accuracy. Learned gates
(Attention-Gate~\citep{zeng2025incontextkvcacheevictionllms}, Fast
KVzip~\citep{fastkvzip}) share the ``gate'' name but train their
gating parameters, unlike our inference-time scalar. A separate line uses
learnable per-layer gates for cross-model KV-cache \textit{fusion}
(C2C~\citep{Cache-to-Cache}), which addresses communication between
models, orthogonal to eviction within one.

Beyond the concurrent methods above, an older set of baselines and training-based
predictors anchors the comparison. SnapKV~\citep{snapkv},
H2O~\citep{H20}, StreamingLLM~\citep{streamingLLM},
PyramidKV~\citep{PyramidKV}, and Ada-KV~\citep{Adakv} are the
standard training-free eviction baselines. We use SnapKV-style scoring as
the base for our gate and demonstrate orthogonality on H2O, StreamingLLM,
and PyramidKV. ForesightKV~\citep{ForesightKV} and
Learning-to-Evict~\citep{learn-to-evict} train predictors of future
utility and require labels or rollouts to do so. UT-ACA~\citep{UT-ACA} and LU-KV~\citep{tang2026predicting} are
calibration-based context allocators on a single substrate. None of these
proposes a binary task-type partition or an a priori predictor.

Finally, three methods adapt compute on axes adjacent to but distinct from ours: AdaCompute-GBM~\citep{adaptivetesttimecompute} on the self-consistency
sample-count axis, BudgetThinker~\citep{BudgetThinker} on the CoT
chain-length axis, and LaTER~\citep{Later} on latent-then-explicit
reasoning. These work on substrates orthogonal to the KV-cache axis we
study.

\subsection{Additional Partition Analyses}
\label{app:partition-details}

This subsection presents the robustness checks, confound analyses, and
boundary diagnostics that support the empirical partition of
Section~\ref{sec:experiments}.

\subsubsection{Saturation, budget-insensitivity, and context-length scaling}

A cell below the dilution-prone threshold for two different
reasons, and only saturation case is harmless. Qwen2.5-3B has $\rho \approx 0$ on VT at 
4K because $A_{\mathrm{full}} \approx 1$, and the same model reaches $\rho = 0.32$ on FWE at 16K once
headroom opens. Mistral-7B repeats the pattern: at 4K it is largely
saturated ($A_{\mathrm{full}} = 1.00$ on VT, $0.82$ on FWE), and at 16K VT
and FWE recover strongly ($\rho = 0.32$ and $0.22$, niah\_multivalue
$0.06 \to 0.21$). Budget-insensitivity is the informative case instead,
because headroom exists and recovery does not occur. On Mistral,
QA\_1 has $\rho = 0.04 / 0.00$ at 4K/16K despite $A_{\mathrm{full}} = 0.81
/ 0.70$, and QA\_2 has $\rho = 0.01 / 0.00$ despite $A_{\mathrm{full}} =
0.50 / 0.56$, and QA\_2 on Qwen-14B behaves the same way. These are all 
QA, where the residual errors resemble parametric-knowledge failures
that no cache budget can repair, a phenomenon distinct from the capacity
limit exhibited by NIAH-MK3.

The headroom argument predicts the behaviour as context grows, and
Qwen2.5-14B tests it directly by re-entering the dilution regime at long
context. At 4K, three dilution-prone tasks (VT, FWE, QA\_2) sit at $\rho
\approx 0$: saturation, since $A_{\mathrm{full}} \in [0.92, 1.00]$ leaves
no headroom for recovery, as Eq.~\ref{eq:scaling} predicts. At 16K, two
of five dilution-prone tasks cross the threshold (FWE $0.00 \to 0.20$,
QA\_1 $0.06 \to 0.10$), niah\_multivalue stays above it ($0.08 \to
0.07$), and the capacity-bound NIAH-MK3 stays at exactly $0.00$ at both
scales. VT ($0.00 \to 0.03$) and QA\_2 ($0.02 \to 0.03$) remain below
threshold because the inputs that the model gets wrong at 4K stay wrong
at every budget at 16K. Headroom opens, dilution returns, and NIAH-MK3
stays pinned: the partition behaves as the scaling formula dictates, not
as task identity dictates.

\subsubsection{The partition is not a task-name heuristic}

Task identity and dilution-proneness could in principle be confounded, but they are not.
The NIAH family contains structurally different subtasks. niah\_multivalue (one key, many values)
is dilution-prone in seven of the eight cells measured
(Table~\ref{tab:partition}), while NIAH-MK3 (three near-tie keys, one of
which is the answer) is capacity-bound in all of them, and the distractor
sweep (Table~\ref{tab:distractor}) places niah\_multikey\_1 (no near-tie distractors) on the
dilution-prone side as well. The \textit{near-tie distractor} structure,
not the surface task category, determines the class, and the head-agreement
drop in Section~\ref{subsec:D-definition} catches the distinction. A non-RULER
test-case confirms this: LongBench passage\_retrieval\_en (pick 1 of 30
paragraphs, $N = 100$, Qwen2.5-1.5B, prompts 10 -- 16K tokens) looks
NIAH-like but has no near-tie distractors, and it is evict-robust in
practice. Accuracy stays flat from $b = 1.0$ to $b = 0.0625$ ($0.29 \to
0.26$, $\rho = 0.01$), while its mean drop $D = 0.105$ sits squarely in
the dilution-prone cluster. The gate opens on all $100/100$ inputs. The predictor correctly classifies this surface-similar but structurally different task a priori.

\subsubsection{Protection-matched ablation}

The partition could be an artifact of our relatively light protection setting.
\citet{garcia2026protectionnearlyneedstructural} argue that structural
protection dominates scorer choice, so we re-ran the Qwen2.5-1.5B 4K
partition tasks with their $10\%$ prefix $+$ $10\%$ suffix protection
($n_{\text{sink}} = w = 410$, $N = 100$ per task). Protection overrides the budget, so all nominal $b \le 0.1875$ share an
effective floor of $824$ kept tokens ($\approx 0.21$ of context). Under
this heavy protection the partition remians unchanged. NIAH-MK3 still
collapses ($0.65$ full-KV $\to 0.09$ at the floor, an $86\%$ relative
loss despite the floor cache being $3.3\times$ our light-protection
cache at nominal $b = 0.0625$), so the capacity-bound failure is not
protection-fixable. The eviction-helps signal survives on the
dilution-prone side ($\rho$: FWE $0.06 \to 0.13$, QA\_1 $0.08 \to 0.07$,
VT $0.06 \to 0.03$, aggregate $0.077 \ge 0.05$), and FWE's floor accuracy
($0.31$) exceeds its full-KV accuracy ($0.22$). Heavy protection is also
not free: protected VT drops to $0.00$ at the floor while our
light-protection run holds $0.75$ at nominal $b = 0.125$, because
protected prefix/suffix tokens crowd out the scorer-selected middle
tokens the task requires. In our harness $w$ is both the recency protection and the SnapKV scoring
window, so this run is a protection-heavy SnapKV variant rather than an
exact replication of their protocol.

\begin{table}[t]
\centering
\caption{Mean head-agreement drop $D$ and gate-open fraction on four additional RULER tasks, $N = 100$ per cell, $\tau = 0.07$. niah\_multiquery (four required keys, no real distractors) closes the gate on two of three models and sits nearest the anchor on the third, confirming a second capacity-bound task. qa\_2 (29 real hard-negative documents), niah\_single\_1 (no distractors), and cwe (ten exact output tokens) all open the gate on most or all inputs, so neither distractor count nor exact-token output length predicts the class.}
\label{tab:capbound}
\begin{tabular}{lccc}
\toprule
Task & Qwen2.5-1.5B & Qwen2.5-3B & Mistral-7B \\
\midrule
niah\_multiquery & $D{=}0.076$, open $32\%$ & $D{=}0.021$, open $100\%$ & $D{=}0.042$, open $100\%$ \\
qa\_2            & $D{=}0.213$, open $0\%$  & $D{=}0.097$, open $10\%$  & $D{=}0.086$, open $18\%$  \\
niah\_single\_1  & $D{=}0.153$, open $0\%$  & $D{=}0.061$, open $87\%$  & $D{=}0.121$, open $0\%$   \\
cwe              & $D{=}0.200$, open $0\%$  & $D{=}0.103$, open $0\%$   & $D{=}0.135$, open $0\%$   \\
\bottomrule
\end{tabular}
\end{table}
\begin{table}[ht]
\centering
\caption{Distractor-count sweep on the NIAH-MultiKey family
(Qwen2.5-1.5B, RULER 4K, $N = 50$ per task). Mean head-agreement drop
$D$ shrinks monotonically as the near-tie distractor count grows.}
\label{tab:distractor}
\begin{tabular}{lccccc}
\toprule
Task & mean $D$ & gate-open frac & $\rho_{\mathrm{plain}}$ & $\rho_{\mathrm{gated}}$ & plain at $b{=}0.0625$ \\
\midrule
niah\_multikey\_1 & $+0.155$ & $1.00$ & $0.04$ & $0.04$ & $0.36$ \\
niah\_multikey\_2 & $+0.122$ & $1.00$ & $0.00$ & $0.00$ & $0.00$ \\
niah\_multikey\_3 & $+0.045$ & $0.00$ & $0.00$ & $0.00$ & $0.00$ \\
\bottomrule
\end{tabular}
\end{table}
\subsubsection{Boundary and confound analysis: full table}
\label{app:capbound-detail}

A single capacity-bound task leaves open whether NIAH-MK3 names a class or an isolated construction. We tested the boundary directly on four further RULER tasks chosen by mechanism rather than by outcome: a second near-tie retrieval task predicted capacity-bound, a real-hard-negative task predicted dilution-prone despite carrying more distractors than any other task we tested, a single-needle control with no distractors at all predicted dilution-prone, and a code-completion task requiring ten exact low-redundancy output tokens predicted capacity-bound. Table~\ref{tab:capbound} reports the outcome at $\tau = 0.07$, $N = 100$ per cell.

The results confirm that the partition is not defined by distractor count or output length. niah\_multiquery is capacity-bound on two of three models, giving the class a confirmed second member. qa\_2 carries the most distractors of any task tested yet sits comfortably on the dilution-prone side. What separates the two classes is whether distractors are near-tie surface-form variants of the answer: the distractor sweep isolates this directly (Table~\ref{tab:distractor}). The class has two entry conditions: a near-tie surface-form distractor or a conjunctive multi-key answer. Both conditions concentrate the answer into a small, non-redundant token set, which is the property the gate responds to.

\subsubsection{Near-tie distractor count drives \texorpdfstring{$D$}{D}}
\label{app:distractor-detail}

The capacity-bound classification rests on a mechanism claim: that
near-tie distractors, not task identity, distractor count, or output
length in general, drive $D$ downward. Section~\ref{app:partition-details} already
shows that real hard-negative distractors do not have this effect, since
qa\_1 and qa\_2 carry the most distractors among the tasks we tested and exhibits the largest
$D$ (Table~\ref{tab:drops}). Exact-token output length also does not, since cwe requires ten exact low-redundancy tokens yet keeps the
gate open on every input tested (Table~\ref{tab:capbound}). This
subsubsection isolates the near-tie case directly. The RULER NIAH-MultiKey family lets us sweep the near-tie
distractor count with the number of documents held fixed:
niah\_multikey\_1 (no near-tie distractors), niah\_multikey\_2 (one),
niah\_multikey\_3 (two). The dilution mechanism of
\citet{DBtrimKV} predicts that more near-tie distractors should
produce larger expected dilution $\delta_t$ and, via
App.~\ref{app:sufficient}, smaller head-agreement drop $D$. On
Qwen2.5-1.5B at RULER 4K, $N = 50$ per task, we observe exactly this: a
clean monotone gradient (Table~\ref{tab:distractor}).

$D$ shrinks by $\sim 3.5\times$ from $0 \to 2$ near-tie distractors,
and per-input ranges between MK2 and MK3 do not overlap
($\max_{\text{MK3}} D = 0.062 < \min_{\text{MK2}} D = 0.088$). The
partition predictor separates 2-distractor inputs from
$\leq 1$-distractor inputs $100\%$ of the time at $\tau = 0.07$. The
monotone gradient replicates on Mistral-7B-v0.3 ($D = 0.081, 0.072,
0.060$ for MK1/MK2/MK3, $N = 50$ per task), confirming the mechanism
is not Qwen-specific, though the finer per-input MK2/MK3 non-overlap is not reproduced
(their ranges overlap on Mistral).

This clean separation between $0$ and $2$ distractors leaves open what
happens at the single-distractor boundary, and MK2 is exactly that case.
It is empirically capacity-bound at low budgets (plain SnapKV collapses
from $80\%$ at $b{=}1$ to $0\%$ at $b{=}0.0625$) yet is classified as
dilution-prone by the predictor at $\tau = 0.07$: MK2's mean $D = 0.122$
sits between the MK3 cluster ($D \leq 0.062$) and the MK1 cluster ($D
\geq 0.099$), co-located with genuinely dilution-prone FWE ($D = 0.098$)
and niah\_multivalue ($D = 0.107$). No single mean-$D$ threshold
separates MK2 from the dilution-prone cluster on this model, so the
predictor falls back to its weakest form here. It correctly classifies
MK3 and the standard dilution-prone tasks but mis-classifies single-near-tie
MK2. Per-input recovery information beyond mean $D$ would be needed to
catch it at deployment.

\subsection{Signal and Method Details}
\label{app:signal-details}


\subsubsection{Gate-signal ablation: extended analysis}

Table~\ref{tab:signal-ablation} gives the headline comparison; Section~\ref{subsec:signal-ablation} summarises the key findings. This subsection provides the extended analysis of the signal's two free parameters.

\paragraph{The signal's two free parameters} $D$ depends on a top-$k$ parameter and a bin
fraction. The justification for $k = 32$ was that
it is commensurate with $|\mathcal{R}|$, which is the quantity the gate
exists to avoid knowing, so an empirical check tests that choice.
Over $k \in \{8, 16, 32, 64, 128\}$ on the selecting cell and on a held-out
Mistral-7B cell ($N = 400$ inputs each), $k = 32$ maximises both
the separation margin and the per-input AUC on \textit{both} cells
(Qwen2.5-1.5B: margin $+0.068$, AUC $1.000$; Mistral-7B: margin $+0.020$,
AUC $0.977$). This empirical validation replaces the original heuristic. The choice is not robust upward. The MK3-lowest ordering on which the gate depends breaks at
$k = 128$ on Qwen2.5-1.5B and at $k \ge 64$ on Mistral-7B. 

\begin{table}[t]
\centering
\caption{Gate-signal ablation repeated on held-out cells. ``sep.'' counts the
signals whose mean on NIAH-MK3 lies strictly below their mean on every
dilution-prone task, which is the property the gate thresholds, and ``margin''
is that gap for $D$. ``margin$^-$'' excludes niah\_multivalue, the exclusion
Appendix~\ref{subsec:layer-profile} places on the Llama-architecture ordering claim.
AUC is per-input separability of NIAH-MK3 from the pooled dilution-prone tasks. The wrapper column records which memory workaround
produced the cell (Appendix~\ref{app:wrapper-provenance}).}
\label{tab:heldout-ablation}
\small
\begin{tabular}{llccccccc}
\toprule
Cell & Status & Wrap. & $D$ MK3 & min dil. & margin & margin$^-$ & AUC($D$) & sep. \\
\midrule
Qwen2.5-1.5B 4K & fitting  & stock & 0.0448 & 0.1114 & $+0.0667$ & $+0.0667$ & 1.000 & 1/6 \\
Qwen2.5-14B 4K  & held out & v1    & 0.0455 & 0.0487 & $+0.0032$ & $+0.0436$ & 0.899 & 1/6 \\
Llama-3.1-8B 4K & held out & v1    & 0.0710 & 0.0607 & $\mathbf{-0.0103}$ & $+0.0073$ & 0.741 & \textbf{0/6} \\
Mistral-7B 16K  & held out & v3    & 0.0544 & 0.0756 & $+0.0212$ & $+0.0393$ & 0.978 & 1/6 \\
\bottomrule
\end{tabular}
\end{table}

The bin fraction is the less sensitive parameter, but only where $D$ works at all. Recomputing
$D$ with early and late bins at $1/4$, $1/3$ and $1/2$ of the layer stack,
from the stored per-layer agreements at no additional inference cost, leaves
the task ordering intact on every Qwen cell we retained. The margin
narrows as the bins widen but never changing sign (Qwen2.5-1.5B 4K:
$+0.084$, $+0.068$, $+0.041$; Qwen2.5-3B 4K: $+0.047$, $+0.042$, $+0.025$;
Qwen2.5-3B 16K: $+0.087$, $+0.067$, $+0.045$). On the two
Llama-architecture cells the ordering is inverted at \textit{every} bin
fraction (Llama-3.1-8B $-0.002$, $-0.011$, $-0.024$; Yi-1.5-9B $-0.033$,
$-0.031$, $-0.032$). This is the same transfer failure
Section~\ref{subsec:signal-validation} reports for the fixed $\tau$ on those families
rather than a separate defect of the binning. No choice of bin fraction
repairs it. The thirds convention is therefore not load-bearing where the
signal transfers, and is not the reason the signal fails where it does not. The
Mistral per-layer logs are corrupt and excluded, so this sweep covers five of
the seven cells.

One methodological point falls out of the $k$ sweep.
Sweeping $k$ at the deployed fixed $\tau = 0.07$ inverts the conclusion.
$k = 8$ then posts $\Delta = +0.239$ on Qwen2.5-1.5B and $+0.391$ on
Mistral-7B, apparently beating $k = 32$, while its gate-open fraction is
$0.00$. The gate has silently become full-KV fallback, and the apparent
gain is the full cache. The location and scale of $D$ move with $k$ (pooled mean $D$ is
$+0.138$ at $k = 32$ but negative at $k = 8$ and $k = 128$), so any sweep of a
parameter that rescales $D$ must recalibrate $\tau$ alongside it, or it
ablates the threshold rather than the parameter. This is the same failure mode
as the Qwen3 fixed-$\tau$ cell of Section~\ref{subsec:calibration}, and we report
gate-open fractions beside every $\Delta$ in the sweep for that reason.

\subsubsection{Held-out signal ablation}
\label{subsec:heldout-ablation}

Table~\ref{tab:signal-ablation} is measured on the cell $D$ was selected. An AUC of $1.000$ there is the number a reader should trust least. We
therefore repeat the ablation on three cells that played no part in selecting
the statistic (Table~\ref{tab:heldout-ablation}), and the result splits into
what survives held-out testing and what does not.

What survives is the qualitative claim: $D$ remains the only one of the six
statistics that achieves task-level separation anywhere, and it does so on
two of the three held-out cells. Every alternative fails on every cell,
held out or not, so the claim Table~\ref{tab:signal-ablation} makes now
rests on more than the measurement that selected the signal.

What does not survive is the fitting cell's exact number. The per-input
AUC of $1.000$ is a property of that cell alone. Held out, it is $0.899$,
$0.978$, and $0.741$, so we report $1.000$ as a fitting-cell figure rather
than as the separability of $D$ in general.

The two weak held-out cells share a common cause: both are decided by a single task, niah\_multivalue. On Llama-3.1-8B, NIAH-MK3 sits at $D =
0.0710$ while niah\_multivalue sits at $0.0607$, giving an unrestricted
margin of $-0.0103$. Consequently, $D$ separates on two of the
three held-out cells, not three, once niah\_multivalue is counted. The
other three dilution-prone tasks lie above MK3, and against the nearest of them (VT
at $0.0783$) the margin is $+0.0073$ if niah\_multivalue is excluded.
That figure is the one Appendix~\ref{subsec:layer-profile} derives from
the depth profile, reached here from an independent measurement and it already places this exclusion on the task-level ordering claim for a stated reason: niah\_multivalue is
eviction-fragile on Llama-architecture models, so a low $D$ there is
arguably the correct call rather than a mis-ordering. We did
not register that exclusion before running this ablation, so we report
it here as a post-hoc diagnostic rather than a confirmed prediction, and
Table~\ref{tab:heldout-ablation} gives both margins so either case is
checkable. On Qwen2.5-14B the margin of $+0.0032$ is likewise set by
niah\_multivalue at $0.0487$, and excluding it the margin is $+0.0436$
against FWE.

\begin{figure}[ht]
\centering
\includegraphics[width=\linewidth]{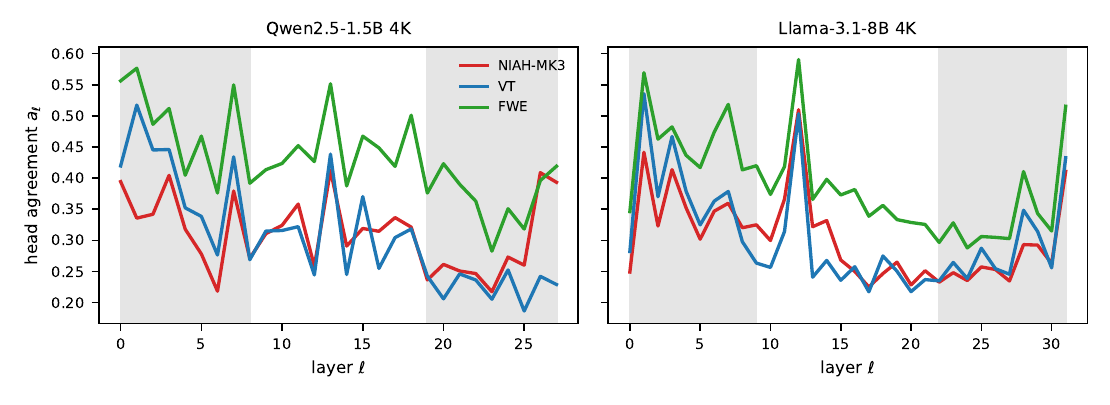}
\caption{Per-layer head agreement $a_\ell$, averaged over inputs
($N = 100$ per task, RULER 4K). Shaded bands are the early and late thirds
that define $D$. Left: on Qwen2.5-1.5B the profile decays with depth and the
capacity-bound task holds the highest late plateau, as (A3) predicts. Right:
on Llama-3.1-8B the profile is non-monotone and NIAH-MK3 is nearly
indistinguishable from VT under the same difference, which is why the fixed
$\tau$ does not transfer to that family.}
\label{fig:layer-profile}
\end{figure}

\subsubsection{The per-layer profile behind \texorpdfstring{$D$}{D}}
\label{subsec:layer-profile}

The small Llama-3.1-8B margin above is a symptom. This subsubsection diagnoses the cause by examining the per-layer profile that $D$ summarizes. Assumption (A3) in App.~\ref{app:sufficient} asserts a layer structure: early layers respond to local context, late layers to task structure, which is why an early-minus-late difference should separate the two classes. The paper elsewhere reports only the binned scalar $D$. We plot the profile $a_\ell$ itself (Figure~\ref{fig:layer-profile}). It is recomputed from the released per-layer logs at zero GPU cost.

On Qwen2.5-1.5B the profile behaves as (A3) describes. Agreement decays with depth, and NIAH-MK3 holds a markedly higher late plateau ($0.283$) than the dilution-prone tasks ($0.227$ on VT). The difference separates them by $+0.068$. On Llama-3.1-8B it does not. The profile is non-monotone, with a
pronounced mid-stack spike, and NIAH-MK3 and VT become nearly indistinguishable under the difference ($D = +0.071$ against $+0.078$, a margin of $+0.007$, an order of magnitude smaller than on Qwen).

We report this because it is the mechanistic content behind a failure the paper otherwise leaves unexplained. The fixed threshold does not transfer to the Llama architecture. Per-input AUC there is $\approx 0.80$ rather than $1.000$, not because the signal is absent but because the depth profile it summarises has a different shape. An early-minus-late difference is the right summary of a monotone profile and the wrong summary of a profile with interior structure, which is also why the depth-correlation statistic comes out anti-correlated on Llama rather than merely weak. A statistic that reads the profile shape rather than its endpoints is the natural next step, and this is
the concrete form of the ``per-input statistic beyond mean $D$'' that Section~\ref{sec:limitations} names as future work. We attempted it, from the stored per-layer agreements at no additional inference cost, and report the outcome because it is informative in both directions.

Three endpoint-free variants of the profile were tested against $D$ on five
cells: the late-plateau level alone, the regression slope of $a_\ell$ on
depth, and the fraction of adjacent layer pairs that decrease. The
late-plateau level, which the difference between MK3 at $0.283$ and VT at
$0.227$ above suggests as the natural candidate, \textit{fails}: its per-input
AUC is at or below chance on every cell ($0.412$, $0.201$, $0.203$, $0.006$,
$0.298$), and on Llama-3.1-8B it is almost perfectly anti-correlated. The
observation that motivated it is real on Qwen and does not generalise. The
regression slope matches $D$ but never exceeds it ($1.000$, $1.000$, $1.000$,
$0.714$, $0.800$ against $D$'s $1.000$, $1.000$, $1.000$, $0.786$, $0.803$),
which indicates the endpoint difference is already extracting most of what the
profile carries. A monotonicity statistic reads higher than $D$ on one cell
(Yi-1.5-9B, $0.912$ against $0.803$) and at chance on another
(Llama-3.1-8B, $0.552$). At $N = 50$ capacity-bound inputs per cell, with
three statistics tested across five cells and no correction for that
multiplicity, we treat the single high value as unexplained rather than as a
result, and do not build on it.

The conclusion we draw is narrow and negative: none of the three shape
statistics transfers uniformly, and none improves on $D$ in a way that
replicates, so the open problem of Section~\ref{sec:limitations} stands
unresolved.

\subsubsection{Calibration recipe details}
\label{subsec:recipe-details}

Section~\ref{subsec:calibration} summarizes the calibration recipe. This
subsubsection gives the full recipe, its validation, its sensitivity to
$\tau$, and the architecture-normalized variant.

\paragraph{Two senses of ``label-free''} \method never observes correctness,
making it \textit{accuracy-label-free}, which is weaker than
\textit{task-label-free}. The fixed-$\tau$ recipe below needs $20$ inputs from
a known capacity-bound task and $20$ from a known dilution-prone one. The
$z$-scored variant assumes neither, and we use the stronger term only for it.

\paragraph{Recipe} For a new (model, context):
\begin{enumerate}
  \item Pick $20$ inputs from a known capacity-bound task (NIAH-MK3) and
  compute the head-agreement drop $D_i$ on each. Average to get
  $\bar D_{\mathcal C}$.
  \item Pick $20$ inputs from a known dilution-prone task (VT) and compute
  $\bar D_{\mathcal D}$.
  \item Set
  \begin{equation}
  \label{eq:tau-app}
  \tau = \tfrac{1}{2}\bigl(\bar D_{\mathcal C} + \bar D_{\mathcal D}\bigr).
  \end{equation}
\end{enumerate}
$40$ prefills, no labels, no decode, $<5$ minutes on a single A100.

\paragraph{Validation} On four cells (Table~\ref{tab:recipe}). The recipe
lifts Qwen2.5-3B 16K from $+3.4$pp to $+5.4$pp (the only failure cell turned
into a win) and lifts Mistral 4K from $+18.7$pp to $+22.3$pp. It is
neutral on average across the four cells but with smaller variance.

\begin{table}[ht]
\centering
\caption{Calibration-recipe validation. $\bar D_{\mathcal C}$ uses NIAH-MK3
($N = 20$); $\bar D_{\mathcal D}$ uses VT ($N = 20$).
Bolded $\tau$-choice column is the winner per cell. The pilot means here
are computed on $20$-input samples and therefore differ slightly from the
$N \ge 50$ per-task means of Table~\ref{tab:drops}.}
\label{tab:recipe}
\begin{tabular}{lccccc}
\toprule
Cell & $\bar D_{\mathcal C}$ & $\bar D_{\mathcal D}$ & $\tau_{\mathrm{cal}}$
& gated@$\tau_{\mathrm{cal}}$ & gated@$\tau\!=\!0.07$ \\
\midrule
Qwen2.5-1.5B 4K & $+0.045$ & $+0.161$ & $+0.103$ & $+0.121$ & $+0.121$ \\
Qwen2.5-3B 4K   & $-0.002$ & $+0.076$ & $+0.037$ & $+0.213$ & $\boldsymbol{+0.259}$ \\
Qwen2.5-3B 16K  & $-0.021$ & $+0.068$ & $+0.024$ & $\boldsymbol{+0.054}$ & $+0.034$ \\
Mistral 4K       & $+0.060$ & $+0.096$ & $+0.078$ & $\boldsymbol{+0.223}$ & $+0.187$ \\
\bottomrule
\end{tabular}
\end{table}

\begin{table}[ht]
\centering
\caption{The three $\tau$ variants used in the paper, their label requirements, and where each is applied, referenced from Section~\ref{subsec:calibration}. Accuracy labels are never used by any variant. The fixed value produces all headline numbers; the sole exception is the Qwen2.5-3B 4K SnapKV matrix cell, run at $\tau = 0.04$ and reported at $\tau = 0.07$ by exact post-hoc re-evaluation (Appendix~\ref{app:repro}).}
\label{tab:tau-variants}
\resizebox{\textwidth}{!}{
\begin{tabular}{lccl}
\toprule
Variant & Accuracy-label-free & Task-label-free & Used for \\
\midrule
Fixed $\tau = 0.07$ & \yes & \yes & Qwen2.5 and Mistral headline results \\
$z$-scored per-model pilot & \yes & \yes & Cross-family transfer (Yi, Llama, Qwen3) \\
40-input midpoint (Eq.~\ref{eq:tau-app}) & \yes & \no & Calibration validation (Table~\ref{tab:recipe}) \\
\bottomrule
\end{tabular}}
\end{table}

\paragraph{Threshold sensitivity}
Sweeping $\tau \in \{0.01, 0.025, 0.04, 0.055, 0.07, 0.085, 0.10, 0.13, 0.16, 0.20\}$ on Qwen2.5-1.5B at RULER 4K (mixed suite, $N=100$, budgets $\{0.5, 0.25, 0.125, 0.0625\}$, SnapKV) shows an
operational plateau $\tau \in [0.055, 0.10]$ inside which $\Delta$ varies by under $1$pp. Both false-positive (gate fires on capacity-bound MK3) and false-negative (gate stays closed on dilution-prone VT/FWE/QA\_1) rates stay at or below $9\%$. The deployed $\tau = 0.07$ sits in the middle of this plateau (Figure~\ref{fig:tau-sensitivity}). Two scope conditions attach to that
statement. The sweep is on the cell that selected the signal, and we do not claim the plateau transfers. It is a fitting-cell measurement, reported here because the deployed constant should not be a critical threshold on the cell it came from. $\Delta$ alone is also the wrong quantity to look for a plateau in, since raising $\tau$ closes the gate more often and substitutes full-KV fallback for
eviction, which raises $\Delta$ monotonically while destroying compression. We therefore read the plateau jointly with the false-positive and false-negative rates above, and report achieved kept-KV alongside $\Delta$ wherever the threshold varies. Outside it the behaviour changes character. Below $0.04$, FP $\geq 64\%$ and $\Delta \leq +0.06$. Above $0.13$, FN $\geq 33\%$ and the method degenerates toward ``always full KV,'' which on this full-KV-strong task set still yields a large $\Delta$, since the sweep's raw maximum is in fact at $\tau = 0.20$, but only at the price of never compressing, so it is not a useful operating point. The deployed threshold is not critically sensitive to the exact value.

\begin{figure}[h]
  \centering
  \includegraphics[width=0.62\linewidth]{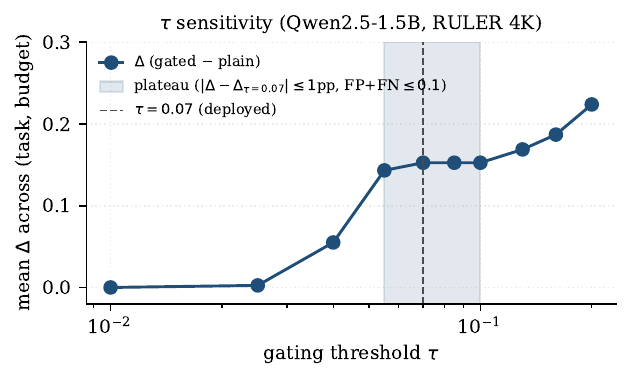}
  \caption{$\tau$ sensitivity on Qwen2.5-1.5B, RULER 4K. Mean
  $\Delta$ (gated $-$ plain) over (task, budget) cells. Shaded band:
  operational plateau where $|\Delta - \Delta_{\tau=0.07}| \leq 1$pp
  and FP$+$FN $\leq 0.10$. The deployed $\tau = 0.07$ sits in the
  middle of the plateau.}
  \label{fig:tau-sensitivity}
\end{figure}

\paragraph{Architecture-normalized variant: the z-scored drop} 
The recipe above calibrates a per-cell threshold from two labeled pilot tasks. A stronger variant needs no task labels at all: standardize the drop per model, $z(x) = (D(x) - \mu_M) / \sigma_M$, with $\mu_M, \sigma_M$ pooled over an unlabeled $\sim 100$-input pilot, and use one global threshold $\theta_z$. Fitting $\theta_z = -0.69$ on the five Qwen/Mistral cells \textit{only} and transferring it zero-shot to the two held-out Llama-architecture models: task-level separation of NIAH-MK3 from $\{$QA\_1, QA\_2, VT, FWE$\}$ succeeds in $7$ of $7$ cells (pooled cross-cell AUC $0.965$), where raw $D$ with one threshold manages $4$ of $7$. The reason is that raw $D$ is an architecture-invariant \textit{ordering} whose location and scale are model-specific: $\tau = 0.07$ sits at $z = +0.69$ on Yi but $z = -0.97$ on Llama. These are two opposite failure modes that standardization removes. Downstream, the z-gate lifts Llama-3.1-8B from $\Delta = +0.109$ to $\boldsymbol{+0.174}$ (NIAH-MK3 gate-open $0.56 \to 0.30$, MK3 accuracy $0.46 \to 0.72$) and gives Yi $\Delta = +0.145$ with \textit{real} compression (kept-KV $0.22$--$0.32$ versus $0.76-0.89$ under the fixed-$\tau$ fallback). Two honest scopes: niah\_multivalue must be excluded from the task-level ordering claim (its mean $D$ sits below MK3 on both Llama-architecture models, which is empirically the right call there, since it is eviction-fragile on both), and per-input MK3-vs-rest AUC on the Llama-architecture models is intrinsically $\approx 0.80$, since no monotone rescaling changes within-model ranking. A depth-correlation statistic over the per-layer profile scores AUC $0.22$ on Llama. This does not mean the depth profile is uninformative there. An AUC of $0.22$ is \textit{anti}-correlated, and inverting its sign gives $0.78$, comparable to $D$'s own $\approx 0.80$ on that family. The profile carries signal on Llama with the opposite sign to the one assumption (A3) predicts, which Appendix~\ref{subsec:layer-profile} shows directly.

\paragraph{A fourth family where fixed $\tau$ fails outright: Qwen} The starkest transfer test is Qwen3-4B-Instruct, a different model generation from the Qwen2.5 calibration set, whose drop distribution is compressed to a different scale entirely ($\mu = 0.034$, $\sigma = 0.017$ pooled over
$400$ inputs). There the fixed $\tau = 0.07$ sits at $z = +2.0$ and the gate essentially never fires. It keeps $99.4\%$ of the cache and its apparent $\Delta = +0.43$ is a pure full-KV-fallback artifact with $0.6\%$ compression. The z-scored threshold ($\tau_{\text{Qwen3}} = \mu + \theta_z \sigma = 0.022$, same $\theta_z = -0.69$) restores a working gate. NIAH-MK3 stays closed (open fraction $0.02$, accuracy $0.05 \to 0.98$) while the three dilution-prone tasks open ($0.94-1.00$), yielding $2.3\times$ compression ($43\%$ mean kept-KV) at $\Delta = +0.235$ over plain and recovering $54.5\%$ of the per-input oracle headroom. The NIAH-MK3-smallest ordering transfers cleanly. Qwen3 is thus the cleanest evidence for the paper's central claim about the predictor: the drop is an architecture-invariant \textit{ordering} whose scale must be standardized per model. Here fixed-$\tau$ does not merely underperform. It does nothing.

The recipe is the deployment-friendly by default as it always fixes the failure cell. The z-scored variant extends it label-free across architecture families. We report fixed-$\tau$ numbers throughout as the conservative default to show both the robustness of one constant and what standardization buys.

\begin{table}[t]
\centering
\caption{Head-pair subsampling (Qwen2.5-1.5B, RULER 4K, $H = 12$, $66$ pairs,
$L = 28$, $N = 30$ inputs per task over four tasks, $120$ input-task rows in
total). ``Margin'' is the gap between mean $D$ on NIAH-MK3 and the
smallest mean $D$ among the dilution-prone tasks; ``ordering'' counts the draws
in which that margin stayed positive.}
\label{tab:head-pair-subsample}
\small
\begin{tabular}{lccccc}
\toprule
Pairs kept & count & mean $D$ MK3 & min mean $D$ dil. & margin & ordering \\
\midrule
$100\%$ & 66 & 0.0413 & 0.1131 & $+0.0717$ & 1/1 \\
$50\%$  & 33 & 0.0404 & 0.1109 & $+0.0705$ & 20/20 \\
$25\%$  & 16 & 0.0456 & 0.1179 & $+0.0724$ & 20/20 \\
$10\%$  & 7  & 0.0280 & 0.1035 & $+0.0754$ & 20/20 \\
$5\%$   & 3  & 0.0541 & 0.1206 & $+0.0665$ & 20/20 \\
$2\%$   & 1  & 0.0529 & 0.1264 & $+0.0735$ & 20/20 \\
\bottomrule
\end{tabular}
\end{table}

\subsubsection{Head-pair and layer subsampling}
\label{subsec:head-pair-subsample}

\paragraph{How many head pairs does $D$ need?}
Layer count is only half of the $O(L H^2 k)$ cost. The other half is the
pair count, and it is the half that scales badly: $66$ pairs at $H = 12$
but $780$ at $H = 40$. The released logs cannot answer this, because the
test averages over pairs before writing, so we re-ran the test-case
retaining the per-pair Jaccards and subsampled offline
(Table~\ref{tab:head-pair-subsample}). Random subsets are drawn $20$
times per fraction, since a single unrepresentative draw would understate
what subsampling can do.

The ordering on which the gate depends survives \textit{every} draw down to a single pair out of
$66$. At $2\%$ of the pairs the margin is $+0.0735$, against $+0.0717$ using all
of them. We read this as a statement about the cost model rather than a
recommendation to gate on a single pair: the $H^2$ term is not load-bearing, so
the gate can be made substantially cheaper on exactly the wide-head models where
it is most expensive, which is the deployment objection of
Section~\ref{sec:limitations}. The caveats are that this is one model at one
context length, that it is an ordering result on task means rather than a
per-input one. The margin is not monotone in the fraction (the $5\%$
row dips to $+0.0665$), which is sampling noise at $120$ inputs.

\paragraph{How many layers does $D$ need?} The gate's cost grows with the
number of layers scanned, so we ask whether a subset suffices. Recomputing $D$
from layer subsets of the stored per-layer agreements needs no GPU. We first
check validity: recomputing over all $L$ layers reproduces the \texttt{drop}
the deployed gate actually thresholded, bit-exactly on the two Qwen cells and
to $4 \times 10^{-4}$ on Mistral, which is two orders of magnitude below
$\tau$. Table~\ref{tab:layer-subsample} then reports whether each subset
preserves the ordering the gate relies on, NIAH-MK3 ranking below every other
task. Halving the layers is safe on all three cells, though the margin shrinks
by roughly a quarter on Qwen2.5-1.5B and by an order of magnitude on
Qwen2.5-3B. Beyond that the signal degrades and is cell-dependent: every-4th
holds only on Qwen2.5-3B, the early bin alone inverts the ordering everywhere,
and the single first-and-last-layer pair happens to work on Qwen2.5-1.5B while
failing badly on Mistral. We therefore do not recommend subsampling below
$L/2$, and we read the incidental successes as a caution against tuning the
layer set per cell.

\begin{table}[h]
\centering
\caption{Effect of computing $D$ from a subset of layers rather
than all of them. ``Margin'' is the gap between mean $D$ on NIAH-MK3 and the
smallest mean $D$ among the other tasks; a positive margin means a separating
threshold still exists on that subset, and ``kept'' marks whether the task
ordering is preserved.}
\label{tab:layer-subsample}
\small
\begin{tabular}{lccccc}
\toprule
 & \multicolumn{2}{c}{Qwen2.5-1.5B ($L{=}28$)} & \multicolumn{2}{c}{Qwen2.5-3B ($L{=}36$)} & Mistral-7B ($L{=}32$) \\
\cmidrule(lr){2-3}\cmidrule(lr){4-5}\cmidrule(lr){6-6}
Layer subset & margin & kept & margin & kept & margin \\
\midrule
all layers        & $+0.0676$ & \yes & $+0.0415$ & \yes & $+0.0174$ \\
every 2nd         & $+0.0520$ & \yes & $+0.0125$ & \yes & $+0.0152$ \\
every 4th         & $-0.0017$ & \no  & $+0.0035$ & \yes & $-0.0194$ \\
early bin only    & $-0.0410$ & \no  & $-0.0280$ & \no  & $-0.0225$ \\
first+last layer  & $+0.0660$ & \yes & $-0.0532$ & \no  & $-0.1541$ \\
\bottomrule
\end{tabular}
\end{table}

\subsubsection{Gate-signal diagnostic}
\label{app:wrapper-provenance}

The diagnostic requests every layer's full $[H, T, T]$ attention, which reaches roughly
$120$\,GiB for Qwen2.5-14B at 4K and cannot be run directly. Two
wrappers make it fit. The retention wrapper (v1) calls the stock
attention and slices the returned tensor, and the query-chunk wrapper
(v3) chunks the query axis while calling the stock function unchanged.
We validated each against the stock script on the fitting cell before
using it: v1 agrees exactly ($D = 0.0412$ against $0.0412$ on the same
host) and v3 agrees to $+0.000029$, which is floating-point
reassociation. We report this validation because a third wrapper failed
the same check. Rewriting attention through SDPA moved mean $D$ from
$0.0448$ to $0.0300$, a $33\%$ shift, reproducibly on two different
GPUs, so the cause is the output path perturbing hidden states and
compounding across layers rather than a hardware difference. No
reported number uses this third wrapper. Separating that $0.0148$ code
artifact from the $0.0035$ hardware effect required varying code path
and hardware together, since either comparison alone against the
released numbers confounds the two.

\subsection{Experimental Details and Secondary Results}
\label{app:experiment-details}

This subsection collects the full tables, decompositions, and secondary
experiments that support the experimental validation of Section~\ref{sec:experiments},
including the random-eviction control (Appendix~\ref{app:random},
Table~\ref{tab:random-control}) and the per-layer eviction ablation
(Appendix~\ref{app:per-layer}, Table~\ref{tab:perlayer}).

\subsubsection{Per-task decomposition of the matrix}
\label{subsec:per-task-delta}

Table~\ref{tab:matrix} reports one number per cell; decomposing that number by
task explains where it comes from. Table~\ref{tab:per-task-delta} breaks
down every matrix cell by task and reports the gate-open fraction that
explains the zeros, and two structural facts follow from it. First,
$\Delta_{\text{non-MK3}}$ is exactly $0.000$ in $8$ of $16$ cells, and in
each of those the gate opens on $100\%$ of non-MK3 inputs, so the gated
and plain arms execute identically: the zero is an identity, not a null
result. Second, the mean $\Delta_{\mathrm{MK3}}$ is $+0.780$ while the
grand mean is $+0.229 \approx \tfrac{1}{4}\Delta_{\mathrm{MK3}}$, the
MK3 share of the mixed suite. Since per-task accuracy is additive under reweighting,
the headline for a workload whose capacity-bound share is $f$ is
$\Delta(f) = f \cdot \Delta_{\mathrm{MK3}} + (1 - f) \cdot
\Delta_{\text{rest}}$, which is near-linear through the origin
(Figure~\ref{fig:mixture-sweep}): $\Delta(0) = +0.045$, $\Delta(0.25) =
+0.229$ (the reported suite), $\Delta(0.5) = +0.412$. A deployment
therefore sees the gate's benefit in proportion to how much
capacity-bound traffic it actually serves, and the realistic-workload
estimate of that share is in Table~\ref{tab:realistic-workload}.

\begin{table}[t]
\centering
\caption{Per-task $\Delta$ (gated $-$ plain, budgets $b < 1.0$, $\tau = 0.07$)
for all $16$ matrix cells, with the gate-open fraction on non-MK3 inputs.
Rows marked $\dagger$ have $\Delta_{\text{non-MK3}}$ exactly $0.000$ with the
gate open on every non-MK3 input, so gated $\equiv$ plain there by
construction. Recomputed from the released per-input logs.}
\label{tab:per-task-delta}
\resizebox{\textwidth}{!}{%
\begin{tabular}{llccccccc}
\toprule
Base evictor & Model & MK3 & VT & FWE & QA\_1 & all & no MK3 & open (non-MK3) \\
\midrule
SnapKV & Qwen2.5-1.5B & $+0.484$ & $0.000$ & $0.000$ & $0.000$ & $+0.121$ & $+0.000^\dagger$ & $1.000$ \\
       & Qwen2.5-3B   & $+0.734$ & $+0.004$ & $+0.300$ & $0.000$ & $+0.259$ & $+0.101$ & $0.653$ \\
       & Qwen2.5-14B  & $+0.573$ & $0.000$ & $0.000$ & $0.000$ & $+0.143$ & $+0.000^\dagger$ & $1.000$ \\
       & Mistral-7B   & $+0.701$ & $0.000$ & $+0.046$ & $0.000$ & $+0.187$ & $+0.015$ & $0.923$ \\
\midrule
H2O & Qwen2.5-1.5B & $+0.647$ & $0.000$ & $0.000$ & $0.000$ & $+0.162$ & $+0.000^\dagger$ & $1.000$ \\
    & Qwen2.5-3B   & $+0.920$ & $+0.017$ & $+0.277$ & $0.000$ & $+0.303$ & $+0.098$ & $0.653$ \\
    & Qwen2.5-14B  & $+0.800$ & $0.000$ & $0.000$ & $0.000$ & $+0.200$ & $+0.000^\dagger$ & $1.000$ \\
    & Mistral-7B   & $+0.883$ & $0.000$ & $+0.043$ & $+0.017$ & $+0.236$ & $+0.020$ & $0.923$ \\
\midrule
StreamingLLM & Qwen2.5-1.5B & $+0.650$ & $0.000$ & $0.000$ & $0.000$ & $+0.163$ & $+0.000^\dagger$ & $1.000$ \\
             & Qwen2.5-3B   & $+0.930$ & $+0.030$ & $+0.760$ & $0.000$ & $+0.430$ & $+0.263$ & $0.653$ \\
             & Qwen2.5-14B  & $+1.000$ & $0.000$ & $0.000$ & $0.000$ & $+0.250$ & $+0.000^\dagger$ & $1.000$ \\
             & Mistral-7B   & $+0.890$ & $0.000$ & $+0.140$ & $+0.020$ & $+0.263$ & $+0.053$ & $0.923$ \\
\midrule
PyramidKV & Qwen2.5-1.5B & $+0.610$ & $0.000$ & $0.000$ & $0.000$ & $+0.152$ & $+0.000^\dagger$ & $1.000$ \\
          & Qwen2.5-3B   & $+0.883$ & $0.000$ & $+0.433$ & $0.000$ & $+0.329$ & $+0.144$ & $0.653$ \\
          & Qwen2.5-14B  & $+0.913$ & $0.000$ & $0.000$ & $0.000$ & $+0.228$ & $+0.000^\dagger$ & $1.000$ \\
          & Mistral-7B   & $+0.863$ & $0.000$ & $+0.060$ & $0.000$ & $+0.231$ & $+0.020$ & $0.923$ \\
\midrule
\textbf{mean} & & $+0.780$ & $+0.003$ & $+0.129$ & $+0.002$ & $\mathbf{+0.229}$ & $\mathbf{+0.045}$ & $0.894$ \\
\bottomrule
\end{tabular}%
}
\end{table}

\begin{table}[ht]
\centering
\caption{The partition on a non-synthetic workload: three LongBench subtasks, Qwen2.5-14B, SnapKV, $\tau = 0.07$. $D$ is the mean head-agreement drop and ``open'' the fraction of inputs on which the gate fires. ``Sensitivity'' is the fraction of full-cache-correct answers plain eviction destroys at $b = 0.0625$ (the capacity signal) and $\rho$ the fraction of inputs on which eviction recovers a full-cache failure (the dilution signal). The gate's class call is made from $D$ alone, before decoding, and matches the empirical class on all three.}
\label{tab:realistic-workload}
\begin{tabular}{lcccccccl}
\toprule
Subtask & $N$ & $A_{\mathrm{full}}$ & sens. & $\rho$ & $D$ & open & empirical class & $\Delta$ \\
\midrule
lcc         & 96  & 0.292 & 0.64 & 0.031 & 0.0543 & 0.250 & capacity-bound & $+0.146$ \\
repobench-p & 82  & 0.317 & 0.42 & 0.061 & 0.0541 & 0.256 & mixed          & $+0.049$ \\
hotpotqa    & 100 & 0.560 & 0.12 & 0.020 & 0.0973 & 0.860 & evict-robust   & $+0.000$ \\
\bottomrule
\end{tabular}
\end{table}

\paragraph{Budget convention} $\Delta$ averages over the eviction budgets
$b < 1.0$. Including $b = 1.0$ leaves the value unchanged because the gated and
plain arms coincide there. Restricting instead to $\{0.125, 0.25, 0.5\}$ gives
$+0.240$, and to the four budgets shared by all cells $+0.243$. We report
$b < 1.0$ throughout.

\begin{figure}[h]
\centering
\includegraphics[width=0.55\linewidth]{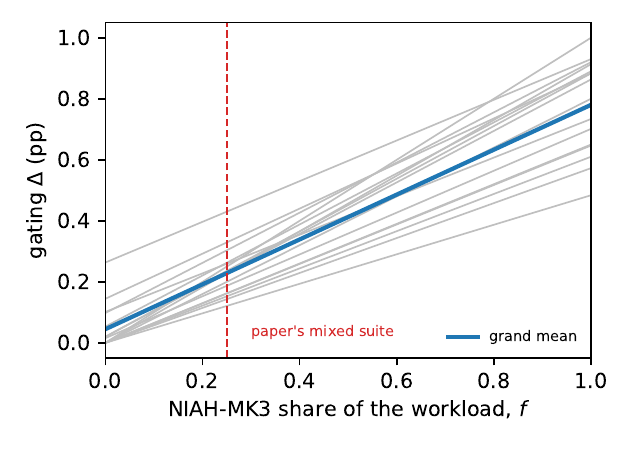}
\caption{Gating $\Delta$ as a function of the capacity-bound share $f$ of the
workload, $\Delta(f) = f \Delta_{\mathrm{MK3}} + (1 - f) \Delta_{\text{rest}}$,
mean-line slope $0.7354$. Grey lines are the $16$ matrix cells, the blue line
their mean, and the dashed
marker the $f = 0.25$ composition of our mixed suite. Because
$\Delta_{\text{rest}}$ is exactly zero in half the cells, the mean is close to
linear through the origin, so the headline $+22.9$pp should be read as a
statement about a workload that is one quarter capacity-bound rather than as a
property of the method.}
\label{fig:mixture-sweep}
\end{figure}

\subsubsection{Budget-axis decomposition}

Table~\ref{tab:qwen3b-curve} decomposes the Qwen2.5-3B 4K
SnapKV cell along the budget axis, complementing the budget-averaged
$\Delta$ values reported in Table~\ref{tab:matrix} and Figure~\ref{fig:matrix}.

\begin{table}[h]
\centering
\caption{Qwen2.5-3B 4K, SnapKV, mixed suite. Plain accuracy collapses
from $0.88$ to $0.19$ as the budget shrinks; gated holds $0.61-0.88$
across the entire range.}
\label{tab:qwen3b-curve}
\begin{tabular}{rccc}
\toprule
budget $b$ & plain & gated & $\Delta$ \\
\midrule
1.000  & 0.882 & 0.882 & $+0.000$ \\
0.875  & 0.802 & 0.880 & $+0.078$ \\
0.750  & 0.733 & 0.880 & $+0.147$ \\
0.625  & 0.688 & 0.880 & $+0.193$ \\
0.500  & 0.647 & 0.882 & $+0.235$ \\
0.375  & 0.585 & 0.873 & $+0.287$ \\
0.250  & 0.535 & 0.877 & $+0.343$ \\
0.125  & 0.458 & 0.830 & $+0.372$ \\
0.0625 & 0.190 & 0.610 & $+0.420$ \\
\bottomrule
\end{tabular}
\end{table}

\begin{figure}[t]
\centering
\includegraphics[width=0.95\textwidth]{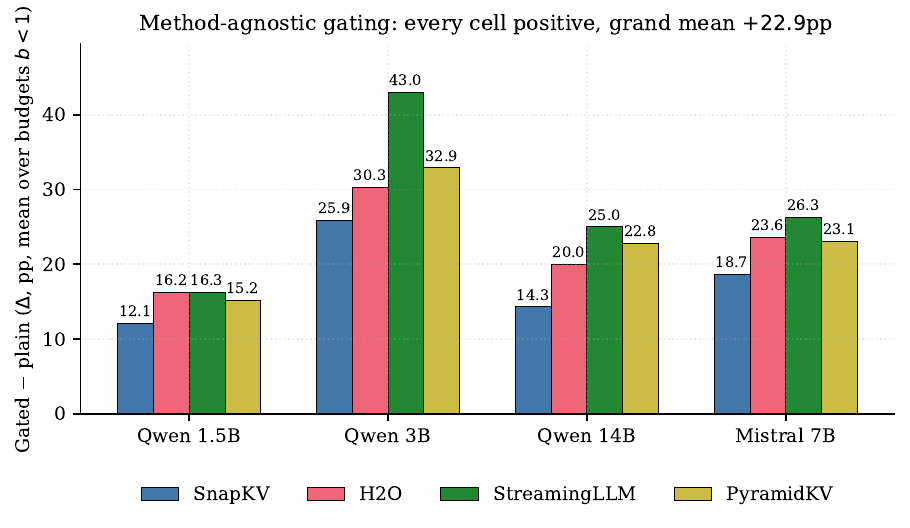}
\caption{The headline $4 \times 4$ method-agnostic matrix.
Bars are gated $-$ plain accuracy averaged over budgets. The wrapper is positive
in every one of $16$ cells with a single $\tau = 0.07$ and no per-base-evictor
tuning. Grand mean $+22.9$pp; excluding NIAH-MK3, $+4.5$pp
(Table~\ref{tab:per-task-delta}).}
\label{fig:matrix}
\end{figure}

\subsubsection{The task-label oracle}
\label{subsec:task-oracle}

Section~\ref{subsec:oracle} summarizes the task-label oracle comparison.
Table~\ref{tab:task-oracle} gives the full per-cell results.

It has three parts: (i) in $8$ of $16$ cells \method
reproduces the oracle \textit{exactly}, so there the gate is a perfect task
classifier and $D$ buys label-freeness rather than per-input resolution,
(ii) in $5$ cells \method \textit{beats} the oracle, by up to $+0.198$, which no
task-level policy can do, since the surplus comes from closing on
individual dilution-prone inputs that the oracle wrongly opens, (iii) in $3$
Mistral cells \method loses by at most $0.009$ through false positives on
MK3. Overall the oracle reaches $+20.1$pp against \method's $+22.9$pp.

So most of the matrix $\Delta$ is a task-level effect that a cheating
baseline could match, and we say so. The per-input claim rests on the
cells where \method exceeds the oracle. Llama-3.1-8B NIAH-MK3, where the mean
$D = 0.071$ sits just \textit{above} the fixed $\tau$ so that a task-level
policy keyed on that mean would open everywhere and gain nothing, does not
support this argument, for two reasons. First, the oracle gap on that cell is
\textit{negative} under both thresholds we report. \method reaches $+0.089$
against the oracle's $+0.162$ at $\tau = 0.07$, and $+0.125$ against $+0.162$
under the $z$-scored threshold, so the cheating baseline wins there rather than
loses. Second, the premise itself is not robust. The margin is $0.0012$ in $D$,
and whether the task mean falls above or below the $z$-scored threshold flips
under three of four plausible definitions of the unlabeled pilot, which
Section~\ref{subsec:calibration} does not pin down. Both quantities are recomputable
from the released per-input log. The reconstruction reproduces the deployed
gate exactly on all $1200$ evicting rows.

\subsubsection{Scorer-independence: full tables}
\label{subsec:scorer-independence-app}

Section~\ref{subsec:scorer-independence} summarizes the scorer-independence result.
The full tables follow.

The single-mask ManifoldKV variant used in the main text is not a fair test of a geometry scorer's full
capability, so we also built a faithful ManifoldKV+Ada-KV
implementation~\citep{manifoldkv,Adakv}: score $s_i =
\lVert k_i - \mu \rVert_2$ against the per-(layer, kv-head) key centroid
$\mu$, with Ada-KV head-wise budget reallocation (safeguard floor
$\texttt{floor\_alpha} = 0.5$ plus a global top-$k$ over the remaining
per-head scores). Ragged per-head keep counts are realized by masking
evicted keys to $-\infty$ per kv-head rather than physically pruning,
which reproduces exactly the logits a ragged compressed cache would
produce with no over-allocation, and the measured kept fraction matches
the nominal budget to four decimals. We add KeyDiff~\citep{keydiff}
(negative cosine to the same centroid) as the published attention-free
scorer.

Before trusting a negative result from this scorer, we first confirm it
is competent. Table~\ref{tab:scorer-independence}(a) is the control that
makes the negative result interpretable: on two-key NIAH the geometry
scorer reproduces exactly the multi-key advantage its authors report,
reaching $0.72$ at $b = 0.5$ where SnapKV is at $0.16$. A scorer that can
do this is not failing MK3 for want of a better scoring rule, and indeed
every scorer we test still collapses on MK3.
Table~\ref{tab:scorer-independence}(b) shows that at $b \le 0.25$,
SnapKV, KeyDiff, and ManifoldKV (post- and pre-RoPE, uniform and Ada-KV
allocation) are all at or near $0.00$ against a full-cache $0.66$. The
geometry scorer does not beat a properly per-head SnapKV at any budget
here and is weaker at $b = 0.5$ ($0.06$-$0.10$ versus $0.32$). The
conclusion we draw is the one the gate is built on, stated with the scope
Section~\ref{sec:limitations} establishes. On this cell, on capacity-bound
inputs, the recoverable information is not present in any $25\%$ subset of the
cache, so none of the scoring rules we test (attention-based, geometric, or
attention-free) recovers it, and the only safe action is not to evict. That
floor is substrate-dependent rather than universal. The same geometry scorer
reaches $0.990$ at $b = 0.5$ on Llama-3.1-8B at 8K, and the budget below which
every scorer fails ranges from $b \le 0.125$ to $b \le 0.25$ across the models
we ran.

\begin{table}[t]
\centering
\caption{\method against a task-label oracle that closes on NIAH-MK3 and opens
elsewhere. $\Delta$ is gated minus plain over budgets $b < 1.0$. A gap of
exactly $0.000$ means the gate reproduced the oracle's decisions on every
input; a positive gap requires within-task per-input variation.}
\label{tab:task-oracle}
\small
\begin{tabular}{llccc}
\toprule
Base evictor & Model & \method $\Delta$ & oracle $\Delta$ & gap \\
\midrule
SnapKV & Qwen2.5-1.5B & $+0.121$ & $+0.121$ & $0.000$ \\
       & Qwen2.5-3B   & $+0.259$ & $+0.183$ & $+0.076$ \\
       & Qwen2.5-14B  & $+0.143$ & $+0.143$ & $0.000$ \\
       & Mistral-7B   & $+0.187$ & $+0.195$ & $-0.008$ \\
\midrule
H2O & Qwen2.5-1.5B & $+0.162$ & $+0.162$ & $0.000$ \\
    & Qwen2.5-3B   & $+0.303$ & $+0.230$ & $+0.073$ \\
    & Qwen2.5-14B  & $+0.200$ & $+0.200$ & $0.000$ \\
    & Mistral-7B   & $+0.236$ & $+0.243$ & $-0.007$ \\
\midrule
StreamingLLM & Qwen2.5-1.5B & $+0.163$ & $+0.163$ & $0.000$ \\
             & Qwen2.5-3B   & $+0.430$ & $+0.232$ & $\mathbf{+0.198}$ \\
             & Qwen2.5-14B  & $+0.250$ & $+0.250$ & $0.000$ \\
             & Mistral-7B   & $+0.263$ & $+0.247$ & $+0.015$ \\
\midrule
PyramidKV & Qwen2.5-1.5B & $+0.152$ & $+0.152$ & $0.000$ \\
          & Qwen2.5-3B   & $+0.329$ & $+0.221$ & $+0.108$ \\
          & Qwen2.5-14B  & $+0.228$ & $+0.228$ & $0.000$ \\
          & Mistral-7B   & $+0.231$ & $+0.240$ & $-0.009$ \\
\midrule
\textbf{grand mean} & & $\mathbf{+0.229}$ & $\mathbf{+0.201}$ & $+0.028$ \\
\bottomrule
\end{tabular}
\end{table}

\begin{table}[h]
\centering
\caption{Scorer-independence of the capacity limit (Qwen2.5-1.5B, RULER 4K,
$N = 50$). (a) The geometry scorer reproduces the published two-key advantage,
so it is competent. (b) On three-key NIAH-MK3 every scorer collapses below
$25\%$ budget. Accuracy at nominal budget $b$.}
\label{tab:scorer-independence}
\small
(a) two-key NIAH, the positive control \\[2pt]
\begin{tabular}{lccc}
\toprule
Scorer & full & $b = 0.5$ & $b = 0.25$ \\
\midrule
SnapKV                 & 0.76 & 0.16 & 0.04 \\
ManifoldKV (post-RoPE) & 0.76 & \textbf{0.72} & \textbf{0.56} \\
ManifoldKV (pre-RoPE)  & 0.76 & \textbf{0.76} & \textbf{0.52} \\
\bottomrule
\end{tabular}

\vspace{6pt}

(b) three-key NIAH-MK3 \\[2pt]
\begin{tabular}{lccccc}
\toprule
Scorer (allocation) & full & $b = 0.5$ & $b = 0.25$ & $b = 0.125$ & $b = 0.0625$ \\
\midrule
SnapKV (uniform per-head)         & 0.66 & 0.20 & 0.02 & 0.00 & 0.00 \\
SnapKV (Ada-KV)                   & 0.66 & 0.32 & 0.00 & 0.00 & 0.00 \\
KeyDiff (uniform)                 & 0.66 & 0.02 & 0.00 & 0.00 & 0.00 \\
KeyDiff (Ada-KV)                  & 0.66 & 0.02 & 0.00 & 0.00 & 0.00 \\
ManifoldKV L2 post-RoPE (uniform) & 0.66 & 0.02 & 0.00 & 0.00 & 0.00 \\
ManifoldKV L2 post-RoPE (Ada-KV)  & 0.66 & 0.06 & 0.00 & 0.00 & 0.00 \\
ManifoldKV L2 pre-RoPE (uniform)  & 0.66 & 0.02 & 0.00 & 0.00 & 0.00 \\
ManifoldKV L2 pre-RoPE (Ada-KV)   & 0.66 & 0.10 & 0.00 & 0.00 & 0.00 \\
\bottomrule
\end{tabular}
\end{table}

\subsubsection{Detect-and-retry as an alternative design}
\label{subsec:retry}

The gate is not the only way to handle capacity-bound inputs. A natural
objection is that one could evict unconditionally and simply re-run the
inputs that fail, avoiding the gate and its attention-exposing pass
entirely. The comparison is analytic, so we state it rather than measure
it. Writing $p_{\mathcal{C}}$ for the proportion of capacity-bound inputs
and $c_{\mathrm{pre}}$, $c_{\mathrm{dec}}$ for prefill and decode cost,
\method pays $c_{\mathrm{gate}}$ on every input plus a memory cost of
$(1 - p_{\text{open}})$ full caches, where $c_{\mathrm{gate}}$ is $17$ms
on Qwen2.5-1.5B and $89$ms on Mistral-7B at 4K (Table~\ref{tab:latency}).
Detect-and-retry instead pays nothing up front but incurs a second full
prefill and decode on every input it retries, so its expected overhead is
$(p_{\mathcal{C}} \cdot \mathrm{TPR} + (1 - p_{\mathcal{C}}) \cdot
\mathrm{FPR}) \cdot (c_{\mathrm{pre}} + c_{\mathrm{dec}})$, and every
capacity-bound input its detector misses is returned wrong rather than
merely late. At the rate $p_{\mathcal{C}} \approx 0.25$ measured on
the realistic workload (Table~\ref{tab:realistic-workload}) and with
$c_{\mathrm{gate}}$ two orders of magnitude below a full
prefill-plus-decode, retry breaks even only if its detector is both
near-perfect in recall and has a false-positive rate below roughly
$0.03$, since each false positive costs a doubled request. The two
designs also differ in kind rather than degree: the gate spends bounded
memory to make a failure impossible, while retry spends unbounded tail
latency to make it recoverable, and only the gate is available to a
system that cannot re-run a request. We note this as a design point
rather than a claim, since we did not build a decode-side distress
detector, and a good one would be a contribution in its own right.

\begin{table}[ht]
\centering
\caption{Cross-architecture SnapKV-only runs on two Llama-architecture models.
These use the four most aggressive budgets $\{0.0625, 0.125, 0.25, 0.5\}$
rather than the eight-budget grid of Table~\ref{tab:matrix}, so the
$\Delta$ values are not directly comparable to the matrix rows
(aggressive-budget-only averaging inflates $\Delta$).}
\label{tab:crossarch}
\begin{tabular}{lccc}
\toprule
Model & plain SnapKV & gated SnapKV & $\Delta$ \\
\midrule
Yi-1.5-9B    & 0.464 & 0.854 & $+0.390$ \\
Llama-3.1-8B & 0.509 & 0.618 & $+0.109$ \\
\bottomrule
\end{tabular}
\end{table}
\begin{table}[ht]
\centering
\caption{Best plain evictor vs.\ best gated evictor per model, over
the four base evictors of Table~\ref{tab:matrix} at the same budget grid.}
\label{tab:bestplain}
\begin{tabular}{lccc}
\toprule
Model & Best plain & Best gated & $\Delta$ \\
\midrule
Qwen2.5-1.5B & 0.438 (SnapKV)    & 0.570 (PyramidKV) & $+0.132$ \\
Qwen2.5-3B   & 0.580 (SnapKV)    & 0.865 (PyramidKV) & $+0.285$ \\
Qwen2.5-14B  & 0.709 (SnapKV)    & 0.868 (H2O)       & $+0.159$ \\
Mistral-7B    & 0.623 (SnapKV)    & 0.823 (PyramidKV) & $+0.201$ \\
\bottomrule
\end{tabular}
\end{table}

\paragraph{Static per-route policy} A serving team with a known
capacity-bound endpoint has a third option that costs nothing at all: route by
endpoint and disable eviction on it, with no attention pass and no detector.
Where that is available it is the right baseline, and by
Table~\ref{tab:task-oracle} a task-level policy already captures roughly
$88\%$ of what \method achieves on our matrix, so the honest comparison is
narrow. \method earns its cost only where routing is unavailable or
insufficient: mixed endpoints that serve both classes, traffic whose class is
not known at admission, and the within-task variation of
Section~\ref{subsec:pareto}, where inputs of the same nominal task fall on
opposite sides. We did not evaluate a routed deployment, and a reader whose
traffic is cleanly separable by endpoint should treat static routing as the
default and the gate as the fallback for what routing cannot reach.

\subsubsection{Composition beyond attention-score evictors}

The comparison above assumes the gate's signal generalizes beyond
attention-score scoring, an assumption worth testing directly. To check
composition beyond attention-score-based bases, we add a ManifoldKV-style
scorer~\citep{manifoldkv}: per-(layer, head) Euclidean distance of
each key to the head's key centroid, averaged into a single shared keep
mask (a single-mask variant of their per-layer design, forced by our
uniform-cache harness). On the Qwen2.5-1.5B 4K mixed suite ($N =
100$/task, four budgets), the same gate at the same $\tau = 0.07$ lifts
it by $\Delta = +16.2$pp with no negative (task, budget) cell, and on
NIAH-MK3 the gate recovers full-KV accuracy ($0.00 \to 0.65$) at every
budget. Two observations follow. The gate signal is attention-based and
scorer-independent, so composition extends outside the attention-score
family, but our single-mask variant does \textit{not} reproduce
ManifoldKV's reported multi-key rescue (plain accuracy $0.01$ at $b =
0.5$). Since their per-layer, per-head design may behave differently, we
treat their published numbers as the authoritative claim and scope ours
accordingly (Section~\ref{sec:limitations}).

\subsubsection{Cross-architecture, scale, and best-plain results}

Table~\ref{tab:crossarch} extends the SnapKV gate to two Llama-architecture models, and
Table~\ref{tab:bestplain} compares the best plain vs.\ best gated evictor per model.

On architecture, Yi-1.5-9B ($\Delta = +39.0$pp) and Llama-3.1-8B
($\Delta = +10.9$pp, Table~\ref{tab:crossarch}) are Llama-architecture
descendants distinct from Qwen and Mistral. Only the SnapKV column was
run for compute, and on a smaller, more aggressive budget grid (see
caption). The sign is positive on both, but on these two models part of
the gain comes from the gate closing conservatively on tasks where
plain SnapKV collapses, rather than from a correctly recovered
partition. The per-task breakdown and the shared Llama-arch family
caveat (niah\_multivalue mis-classifies on both Yi and Llama) are in
Section~\ref{subsec:drops-ordering} and Appendix~\ref{subsec:layer-profile}.

On scale, a SnapKV-only run on Qwen2.5-32B-Instruct (RULER 4K mixed
suite, $N = 100$) confirms the predictor scales but also illustrates the
saturation arm of the scaling formula. The drop ordering transfers
cleanly (NIAH-MK3 smallest at $D = 0.025$, then FWE $0.052$, VT
$0.079$, QA\_1 $0.103$). At the same time the 32B model is
near-saturated at 4K ($A_{\mathrm{full}} = 1.00$ on VT and NIAH-MK3,
$0.93-0.94$ on FWE/QA\_1), so by Eq.~\ref{eq:scaling} there is
essentially no dilution headroom, and the gate's value is
capacity-bound protection rather than recovery: plain SnapKV collapses
at $b = 0.0625$ (NIAH-MK3 $0.00$, VT $0.00$, FWE $0.09$) while the gate
holds the full cache on the low-drop tasks, giving mean $\Delta =
+0.335$ at $\tau = 0.07$ (driven by NIAH-MK3 $+0.88$ and FWE $+0.44$)
and $+0.237$ at the z-scored threshold with more compression ($0.45$
vs.\ $0.63$ kept-KV). We report this as a scale point for the partition
and predictor, not as additional dilution evidence.

The best-plain-versus-best-gated comparison still leaves open why the
matrix's gains differ across base evictors rather than landing at one
number, which the per-evictor behaviour explains. StreamingLLM keeps only
$36$ tokens regardless of budget, so on the dilution-prone fraction it is
near zero plain and near zero gated, and gating contributes only via the
gate-closed NIAH-MK3 inputs that go from $0$ to full-KV accuracy. SnapKV
is budget-scaled and degrades more gracefully, so it has less room for
gating to recover. PyramidKV's depth-weighted scoring biases toward
bottom layers but produces similar absolute gains to SnapKV. The pattern
is that gating $\Delta$ tracks how much the base evictor collapses on
the capacity-bound fraction. Two implementation caveats apply: our
PyramidKV is a depth-weighted variant of the published per-layer budget
schedule, and under two-pass prefill scoring (used on Qwen2.5-14B for
memory) the H2O score only sees the last $w$ queries, so the 14B H2O
column degenerates toward SnapKV-style scoring. Its $\Delta$ there
should be read as a near-duplicate of the SnapKV cell rather than an
independent base evictor.

\subsubsection{Head-to-head tables}

\paragraph{Head-to-head with CapKV on LongBench}
CapKV~\citep{CapKV} is the closest published method that
reports on LongBench. No public CapKV codebase exists yet, since their
preprint is dated 2026-04-28, so we re-implement CapKV's per-(layer, kv-head)
statistical-leverage score $s_i = w_i \cdot v_i^\top A^{-1} v_i$ with
$A = I + \sum_i w_i v_i v_i^\top$ and $w_i = \exp(\langle k_i,
\mu_q\rangle \beta / \sqrt d)$ from their equations,
then wrap it in our gate. Two
LongBench subtasks, Qwen2.5-3B-Instruct, budget $b = 0.5$, CapKV
temperature $\beta = 5$ (their default; we write $\beta$ for their
$\tau$ to avoid clashing with our gate threshold). Results in
Table~\ref{tab:capkv}.

\begin{table}[h]
\centering
\caption{Gated vs.\ plain CapKV (our re-implementation) on two
LongBench subtasks, Qwen2.5-3B-Instruct, $b = 0.5$.}
\label{tab:capkv}
\begin{tabular}{lcccc}
\toprule
Subtask & N & plain CapKV & gated CapKV & $\Delta$ \\
\midrule
LongBench/qasper   & 68 & 0.132 & 0.147 & $+0.015$ \\
LongBench/triviaqa & 24 & 0.875 & 0.958 & $+0.083$ \\
\bottomrule
\end{tabular}
\end{table}

Both columns are positive at matched budget. Conditional on the gate
firing (drop $\geq 0.07$, the same $\tau = 0.07$ we use everywhere),
plain and gated agree by construction. On the gate-closed subset where
the wrapper falls back to full KV, the difference is $+4.3$pp on qasper
($N_{\text{closed}} = 23$) and $+18.2$pp on triviaqa
($N_{\text{closed}} = 11$). The headline numbers are not a reproduction
of CapKV's Table 1 (different model, different metric, 16K eager cap on
prompt length) and we report them as a directional head-to-head: at
these sample sizes ($N = 68, 24$) wrapping CapKV in the gate is
weakly positive overall and concentrates its gain on the gate-closed
subset, consistent with the orthogonality claim of
Section~\ref{subsec:matrix-results}. We do not promote ``never hurts'' to a
formal claim at these sample sizes, and larger LongBench runs are clean
follow-up work.

\paragraph{Head-to-head with DBTrimKV on RULER}
\citet{DBtrimKV} (DBTrimKV) is the published trained-gate evictor
the paper's mechanism section adopts. The authors ship pretrained gates
on Hugging Face for several Qwen3 variants (no Qwen2.5 checkpoint
exists, so the head-to-head is necessarily on a different model
family). We run DBTrimKV-Qwen3-4B-Instruct-2507 on RULER 4K, $N = 30$
inputs per task on (NIAH-MK3, FWE, VT, QA\_1), at three memory budgets
$M \in \{128, 256, 512\}$. The wrapper uses the same fixed
$\tau = 0.07$ calibrated on Qwen2.5-1.5B, with no per-model tuning.
Results in Table~\ref{tab:dbtrimkv}.

\begin{table}[h]
\centering
\caption{Gated vs.\ plain DBTrimKV (pretrained Qwen3-4B-Instruct-2507
gates) on RULER 4K at three memory budgets. The gate fires on only
$1.7\%$ of records, so the win is dominated by the full-KV fallback.}
\label{tab:dbtrimkv}
\begin{tabular}{lcccc}
\toprule
 & N & plain DBTrimKV & gated DBTrimKV & $\Delta$ \\
\midrule
all (3 budgets $\times$ 4 tasks) & 360 & 0.617 & 0.850 & $\boldsymbol{+0.233}$ \\
\midrule
per task: NIAH-MK3 & 90 & 0.467 & 1.000 & $+0.533$ \\
per task: FWE      & 90 & 0.222 & 0.600 & $+0.378$ \\
per task: VT       & 90 & 0.989 & 1.000 & $+0.011$ \\
per task: QA\_1    & 90 & 0.789 & 0.800 & $+0.011$ \\
\midrule
per budget: $M=128$ & 120 & 0.475 & 0.850 & $+0.375$ \\
per budget: $M=256$ & 120 & 0.617 & 0.850 & $+0.233$ \\
per budget: $M=512$ & 120 & 0.758 & 0.850 & $+0.092$ \\
\bottomrule
\end{tabular}
\end{table}

\paragraph{Threshold behaviour: the gate fires rarely on Qwen3 at $\tau = 0.07$}
At our default threshold the gate is open on $6/360 = 1.7\%$ of
records. The remaining $\sim 98\%$ trigger the full-KV fallback, which
explains the large $\Delta$ on NIAH-MK3 ($+0.533$): the gate closing
turns off DBTrimKV's compress step entirely. As a consequence, cache
occupancy under our wrapper is $\sim 14.5 \times$ plain DBTrimKV at the
same nominal $M$. The $+0.233$ is therefore a \textit{quality} win at
matched budget, not a \textit{memory-budget} win at matched cache size.
The rare gate firing here is the same Qwen3 scale mismatch analyzed in
Section~\ref{subsec:calibration}: Qwen3-4B's drop distribution ($\mu = 0.034$)
puts the fixed $\tau = 0.07$ at $z = +2.0$, so a per-model z-scored
threshold ($\tau_{\text{Qwen3}} = 0.022$) is the right deployment
default and, applied to the same suite, restores a genuinely
compressing gate ($43\%$ kept-KV, $\Delta = +0.235$). The headline is that wrapping the strongest
published trained-gate evictor with our partition-aware gate
strictly improves matched-budget accuracy, with the gain concentrated
on the capacity-bound task as the mechanism predicts.

\subsubsection{Random-eviction control}
\label{app:random}

We replace SnapKV scoring with uniform-random per-position scoring at every budget on a 4K, 4-key
synthetic NIAH test-case ($N = 150$, Qwen2.5-1.5B). To be clear about the source, this 4-key test-case is our
own construction and is \textit{not} a RULER task: the NIAH-MK$k$ names used elsewhere in the paper refer to
RULER's \texttt{niah\_multikey\_$k$}~\citep{Ruler}, which tops
out at three keys. The score-driven sweep retains accuracy at moderate
budgets, while the uniform-random sweep collapses immediately
(Table~\ref{tab:random-control}).

\begin{table}[h]
\centering
\caption{SnapKV vs.\ uniform-random scoring at matched budget on the 4K
4-key synthetic NIAH test-case. Random eviction destroys accuracy long before
the budget gets aggressive; the dilution effect requires informed scoring.}
\label{tab:random-control}
\begin{tabular}{rccc}
\toprule
budget $b$ & SnapKV acc & Random acc & $\Delta$ \\
\midrule
1.000  & 0.827 & 0.827 & $+0.000$ \\
0.875  & 0.820 & 0.400 & $+0.420$ \\
0.750  & 0.820 & 0.173 & $+0.647$ \\
0.625  & 0.827 & 0.040 & $+0.787$ \\
0.500  & 0.833 & 0.033 & $+0.800$ \\
0.375  & 0.807 & 0.007 & $+0.800$ \\
0.250  & 0.760 & 0.000 & $+0.760$ \\
0.125  & 0.207 & 0.000 & $+0.207$ \\
0.0625 & 0.033 & 0.000 & $+0.033$ \\
\bottomrule
\end{tabular}
\end{table}

The per-input recovery rate $\rho$ under random scoring is $0.027$
(vs.\ $0.013$ under SnapKV). The four ``recovered'' inputs under random
are isolated successes among an otherwise destroyed distribution, and the
control rules out a generic regularisation explanation: any-eviction-helps
is falsified.

\subsubsection{Per-layer eviction ablation}
\label{app:per-layer}

The default gating uses a single key-mask shared across layers. A per-layer
variant derives each layer's keep mask from that layer's own last-window
attention with the same total budget per layer (uniform cache shape).
Table~\ref{tab:perlayer} reports $\rho$ on the four dilution-prone RULER tasks
where per-layer might plausibly help, on Qwen2.5-1.5B at 4K and 16K.

\begin{table}[h]
\centering
\caption{Per-layer vs.\ single-mask SnapKV on Qwen2.5-1.5B. Per-layer wins
clearly on aggregation tasks at 4K (FWE $1.8\times$, niah\_multivalue
$2.1\times$); ties at 16K and on sequential-integration tasks.}
\label{tab:perlayer}
\begin{tabular}{lcccc}
\toprule
Task & Context & Single-mask $\rho$ & Per-layer $\rho$ & Ratio \\
\midrule
VT               & 4K  & 0.060 & 0.040 & 0.67 \\
FWE              & 4K  & 0.060 & \textbf{0.110} & 1.83 \\
QA\_1            & 4K  & 0.080 & 0.070 & 0.88 \\
niah\_multivalue & 4K  & 0.070 & \textbf{0.150} & 2.14 \\
VT               & 16K & 0.190 & 0.190 & 1.00 \\
FWE              & 16K & 0.040 & 0.050 & 1.25 \\
QA\_1            & 16K & 0.140 & 0.140 & 1.00 \\
niah\_multivalue & 16K & 0.100 & 0.100 & 1.00 \\
\bottomrule
\end{tabular}
\end{table}

\subsection{Deployment Details}
\label{app:deployment-details}

This subsection collects the tables and secondary analyses behind the
deployment constraints of Section~\ref{sec:limitations}.

\subsubsection{Deferred experiment tables}

\paragraph{The 8-cell scaling table} Table~\ref{tab:scaling} reports the
gating $\Delta$ on the mixed suite across four models and two context
lengths. The pattern matches the context-length scaling formula: at 4K,
$\Delta$ tracks both $1 - A_{\mathrm{full}}$ and $\alpha_{\mathcal{R}}$; at
16K, $\Delta$ shrinks where the gate misfires (Qwen 3B 16K) and grows where
headroom returns (Mistral 16K).

\begin{table}[h]
\centering
\caption{Cross-(model, context) gating $\Delta$. Same $\tau = 0.07$
across cells; column $\Delta@\tau_{\mathrm{cal}}$ shows the recipe-calibrated
$\Delta$ on the four cells where it has been computed (``--'' otherwise).}
\label{tab:scaling}
\begin{tabular}{lccccc}
\toprule
Model & Context & plain mean & gated@$\tau\!=\!0.07$ & $\Delta$ &
$\Delta@\tau_{\mathrm{cal}}$ \\
\midrule
Qwen2.5-1.5B   & 4K  & 0.438 & 0.559 & $+0.121$ & $+0.121$ \\
Qwen2.5-3B     & 4K  & 0.580 & 0.839 & $+0.259$ & $+0.213$ \\
Qwen2.5-14B    & 4K  & 0.709 & 0.853 & $+0.143$ & --       \\
Mistral-7B      & 4K  & 0.623 & 0.810 & $+0.187$ & $+0.223$ \\
\midrule
Qwen2.5-1.5B   & 16K & 0.372 & 0.404 & $+0.032$ & --       \\
Qwen2.5-3B     & 16K & 0.530 & 0.564 & $+0.034$ & $+0.054$ \\
Qwen2.5-14B    & 16K & 0.796 & 0.867 & $+0.071$ & --       \\
Mistral-7B      & 16K & 0.535 & 0.651 & $+0.116$ & --       \\
\bottomrule
\end{tabular}
\end{table}

\paragraph{Pre-registered 32K predictions} Table~\ref{tab:prereg32k}
gives the full per-task predictions and measurements behind the
out-of-range test of Section~\ref{subsec:prereg32k}.

\begin{table}[h]
\centering
\caption{Pre-registered 32K predictions vs.\ measurements
(Qwen2.5-1.5B, $N = 100$ per task). Predicted intervals = 4K/16K ratio
band $\times$ measured 32K headroom, frozen before the sweep.}
\label{tab:prereg32k}
\resizebox{\textwidth}{!}{%
\begin{tabular}{lcccc}
\toprule
Task & $A_{\mathrm{full}}$(32K) & predicted $\rho$ & measured $\rho$ (Wilson 95\%) & verdict \\
\midrule
VT  & 0.81 & $[0.048, 0.106]$ & $0.14$ $[0.085, 0.221]$ & outside (above) \\
FWE & 0.49 & $[0.021, 0.263]$ & $0.05$ $[0.022, 0.112]$ & inside \\
NIAH-MK3 & 0.14 & $\le 0.02$ & $0.04$ $[0.016, 0.098]$ & outside (above; CI overlaps) \\
\bottomrule
\end{tabular}%
}
\end{table}

\subsubsection{Achieved compression and batching decay}

\paragraph{Achieved compression of the gated deployment} Table~\ref{tab:achieved-compression}
reports the per-model gate-open fraction and realized kept-KV behind the
compression figures quoted in the abstract and Section~\ref{sec:intro}.
Realized compression at $b = 0.0625$ ranges from $1.8\times$ on Qwen2.5-3B,
whose gate-open fraction is lowest at $0.49$ and whose fallback therefore
costs most, to $3.4\times$ on Qwen2.5-1.5B and Qwen2.5-14B, with a mean of
$2.9\times$. The $3.4\times$ endpoint alone understates this spread. We
report the range and the mean instead, since the spread is a property of
the workload the gate sees rather than measurement noise.

\begin{table}[t]
\centering
\caption{Achieved compression of the gated deployment at $\tau=0.07$ (SnapKV,
4K mixed suite). Kept-KV is the mean over inputs of the retained cache
fraction; ``grid mean'' averages the eight nominal budgets of
Table~\ref{tab:matrix}. Plain SnapKV keeps exactly the nominal fraction
($0.0625$ at $b=0.0625$; grid mean $0.453$). The gate-open fraction is a
per-input property, independent of $b$, and sets the compression floor
$1-p_{\text{open}}$.}
\label{tab:achieved-compression}
\begin{tabular}{lccccc}
\toprule
 & & \multicolumn{3}{c}{gated kept-KV fraction} & compression \\
\cmidrule(lr){3-5}
Model & $p_{\text{open}}$ & $b{=}0.0625$ & $b{=}0.25$ & grid mean & @ $b{=}0.0625$ \\
\midrule
Qwen2.5-1.5B & $0.75$ & $0.297$ & $0.437$ & $0.584$ & $3.4\times$ \\
Qwen2.5-3B   & $0.49$ & $0.541$ & $0.632$ & $0.728$ & $1.8\times$ \\
Qwen2.5-14B  & $0.75$ & $0.297$ & $0.437$ & $0.584$ & $3.4\times$ \\
Mistral-7B   & $0.72$ & $0.327$ & $0.462$ & $0.602$ & $3.1\times$ \\
\bottomrule
\end{tabular}
\end{table}

\paragraph{Batching decay under static provisioning}
Table~\ref{tab:batching-decay} evaluates the compression decay per model
under static batch provisioning, where a batch is provisioned for its largest resident cache.

\begin{table}[t]
\centering
\caption{Compression under static batch provisioning at $\tau = 0.07$
(SnapKV, $b = 0.0625$). $p_{\text{open}}$ and $k_{\text{open}}$ are from
Table~\ref{tab:achieved-compression}. Compression decays toward $1\times$
as batch size grows because a single gate-closed sequence forces the
whole batch to the full cache.}
\label{tab:batching-decay}
\begin{tabular}{lccccc}
\toprule
Model & $B{=}1$ & $B{=}4$ & $B{=}8$ & $B{=}16$ & $B{=}32$ \\
\midrule
Qwen2.5-1.5B & $3.37\times$ & $1.42\times$ & $1.10\times$ & $1.01\times$ & $1.00\times$ \\
Qwen2.5-3B   & $1.85\times$ & $1.06\times$ & $1.00\times$ & $1.00\times$ & $1.00\times$ \\
Qwen2.5-14B  & $3.37\times$ & $1.42\times$ & $1.10\times$ & $1.01\times$ & $1.00\times$ \\
Mistral-7B   & $3.06\times$ & $1.33\times$ & $1.07\times$ & $1.00\times$ & $1.00\times$ \\
\bottomrule
\end{tabular}
\end{table}

\subsubsection{Per-task drop ordering}
\label{subsec:drops-ordering}

Table~\ref{tab:drops} reports the mean drop per task ($N = 100$ inputs
per task on the Qwen and Mistral columns, $N = 50$ on Yi and Llama;
Mistral 4K qa\_1 has $N = 78$ and Mistral 16K niah\_multivalue $N = 98$
due to run truncation). NIAH-MK3 is the smallest drop in every Qwen and
Mistral cell, while on the two Llama-architecture models it ranks second-smallest,
so the \textit{ordering} of the partition is fully preserved across Qwen
and Mistral and partially preserved on the Llama-architecture models, even when the
absolute threshold needs slight tuning. This is
what gives the calibration recipe in Eq.~\ref{eq:tau-app} its robustness.

\begin{table}[h]
\centering
\caption{Per-task mean head-agreement drop $D$ across three
architecture families (Qwen, Mistral, and Llama-architecture: Yi-1.5 and
Llama-3.1). This table uses the strict-thirds bins, early
$= [0, \lfloor L/3 \rfloor)$ and late $= [\lfloor 2L/3 \rfloor, L)$; the
deployed gate uses equal-size bins and differs by at most $0.004$ where
$3 \nmid L$ (Appendix~\ref{app:repro}). Bold marks the smallest drop
per column. NIAH-MK3 is
the smallest drop on Qwen and Mistral cells; on Yi-1.5 and Llama-3.1
it ranks 2nd-smallest behind niah\_multivalue (Llama-arch family
artifact; see prose). Yi-1.5-9B
serves as a parallel Llama-class architecture with a distinct training
corpus (different tokenizer, deeper layer geometry).}
\label{tab:drops}
\resizebox{\textwidth}{!}{%
\begin{tabular}{lccccccc}
\toprule
Task & Qwen 1.5B 4K & Qwen 3B 4K & Qwen 3B 16K & Mistral 4K & Mistral 16K & Yi-1.5-9B 4K & Llama-3.1-8B 4K \\
\midrule
QA\_1            & 0.218 & 0.105  & 0.099  & 0.096 & 0.132 & 0.080         & 0.097 \\
QA\_2            & 0.191 & 0.097  & 0.092  & 0.087 & --    & 0.079         & 0.101 \\
VT               & 0.152 & 0.076  & 0.069  & 0.095 & 0.096 & 0.065         & 0.083 \\
niah\_multivalue & 0.107 & 0.045  & 0.070  & 0.079 & 0.075 & \textbf{$-0.013$} & \textbf{0.064} \\
FWE              & 0.098 & 0.038  & 0.046  & 0.079 & 0.091 & 0.043         & 0.114 \\
NIAH-MK3         & \textbf{0.040} & \textbf{$-0.004$} & \textbf{$-0.021$} & \textbf{0.060} & \textbf{0.053} & 0.019 & 0.073 \\
\bottomrule
\end{tabular}%
}
\end{table}

\paragraph{Llama-class cross-architecture transfer is partial} On
Yi-1.5-9B and Llama-3.1-8B (both Llama-architecture descendants),
niah\_multivalue ranks smallest by $D$ (Yi: $D = -0.013$; Llama: $D =
+0.064$), and NIAH-MK3 ranks 2nd-smallest. On the five Qwen and
Mistral cells, NIAH-MK3 is the smallest. We attribute the swap to a
Llama-arch artifact: the multi-value retrieval task happens to have
unusually concentrated early-layer attention on these models, which
inverts the early-vs-late drop sign. The capacity-bound side of the
partition (small $D$, gate closes, full-KV fallback) still operates on
both models, but the fixed $\tau = 0.07$ does \textit{not} transfer as a
threshold. On Llama the mis-classification is confined to
niah\_multivalue, where the wrapper holds full KV instead of
compressing, which forfeits the eviction speed-up but does not damage
accuracy, because plain SnapKV is already robust there. The
Llama-3.1-8B gated SnapKV $\Delta$ is $+10.9$pp on the
matrix-comparable mixed suite, driven almost entirely by NIAH-MK3
($+43.5$pp) where plain SnapKV collapses to $0\%$ and gated holds
$\sim 44\%$ (still well below the full-KV $1.00$). The other tasks
have $\Delta \approx 0$ because Llama's plain SnapKV is robust on this
budget range ($\geq 0.86$ at $b = 0.0625$ on QA\_1). The NIAH-MK3
gate-open fraction on Llama is $0.56$ despite mean $D = 0.071$ sitting
above $\tau$: per-input $D$ dispersion closes the gate on the other
$\sim 44\%$ of NIAH-MK3 inputs, which is what rescues the cell. This cell does
\textit{not} support a per-input claim over a task-level policy,
since the task-label oracle outperforms \method here under both thresholds
(Section~\ref{subsec:pareto}). The dispersion explains the cell's $\Delta$, not
an advantage over knowing the label. On Yi
the mis-classification is broader: at $\tau = 0.07$ the gate also
stays closed on FWE and VT (gate-open fractions $0.00$ and $0.10$), so
most of Yi's $+39.0$pp comes from the full-KV fallback firing on tasks
where plain SnapKV collapses. That outcome is accuracy-preserving but
it is \textit{not} evidence that the raw threshold transfers to Yi, since the
drop-vs-$\rho$ prediction is inverted there. The z-scored variant of
Section~\ref{subsec:calibration} resolves this: with a per-model standardization
from an unlabeled pilot, Yi compresses for real (kept-KV
$0.22 - 0.32$) while still gaining $\Delta = +0.145$, and Llama
improves to $\Delta = +0.174$. We state the fixed-$\tau$ failure
plainly in Section~\ref{subsec:signal-validation}.

\subsubsection{Scaling formula verification}

Table~\ref{tab:scaling-formula}
reports the headroom $1 - A_{\mathrm{full}}$, the observed $\rho$, and the
ratio. A through-origin fit on the dilution-prone rows gives a slope of
$0.34$. The Qwen2.5-3B 4K FWE row (ratio $0.04$) is excluded from that fit:
at $N = 100$, $\rho = 0.01$ is a single recovery, Wilson $[0.002, 0.054]$,
and is better read as counting noise than as part of the trend. On
NIAH-MK3 the ratio is $\le 0.02$ regardless of headroom, so the partition is
visible in the ratio alone.

\begin{table}[h]
\centering
\caption{Verification of the scaling formula. The ratio
$\rho / (1 - A_{\mathrm{full}})$ is approximately constant on
dilution-prone tasks across (model, context) cells and near zero on
NIAH-MK3 regardless of headroom.}
\label{tab:scaling-formula}
\begin{tabular}{lccc}
\toprule
Sweep & $1 - A_{\mathrm{full}}$ & $\rho$ & ratio \\
\midrule
Qwen2.5-1.5B 4K  VT    & 0.18 & 0.06 & 0.33 \\
Qwen2.5-1.5B 16K VT    & 0.75 & 0.19 & 0.25 \\
Qwen2.5-3B   4K  VT    & 0.00 & 0.00 & --   \\
Qwen2.5-3B   16K VT    & 0.09 & 0.05 & 0.55 \\
Qwen2.5-3B   4K  FWE   & 0.24 & 0.01 & 0.04 \\
Qwen2.5-3B   16K FWE   & 0.62 & 0.32 & 0.52 \\
Qwen2.5-1.5B 4K  NIAH-MK3 & 0.35 & 0.01 & 0.03 \\
Qwen2.5-1.5B 16K NIAH-MK3 & 0.74 & 0.00 & 0.00 \\
\bottomrule
\end{tabular}
\end{table}

\begin{figure}[t]
\centering
\includegraphics[width=0.7\textwidth]{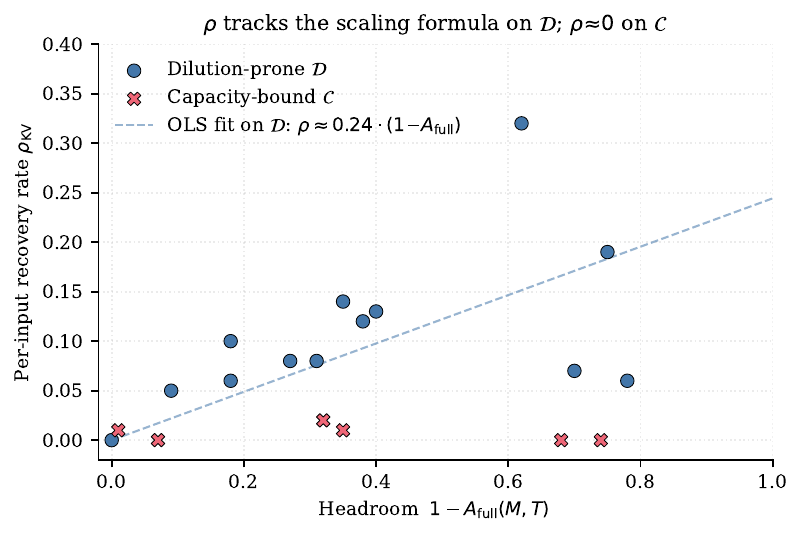}
\caption{Per-input recovery rate $\rho$ vs.\ accuracy headroom $1 -
A_{\mathrm{full}}$ across the $(model, context, task)$ cells of
Table~\ref{tab:scaling-formula}. Dilution-prone tasks (filled circles)
lie on a band with a through-origin fitted slope of $0.34$. NIAH-MK3
(open triangles) lies on the $\rho = 0$ axis regardless of headroom. The
two populations are visibly separated.}
\label{fig:scaling}
\end{figure}

\subsubsection{Cost of the gate and end-to-end efficiency}

Table~\ref{tab:latency} reports the end-to-end efficiency of gated eviction.

\begin{table}[t]
\centering
\caption{End-to-end efficiency of gated eviction (RULER 4K, batch 1,
bf16, greedy, 128 decode steps, medians; A100-80GB). ``Gated expected''
is the gate-open-weighted mix $0.75 \cdot (b{=}0.125) + 0.25 \cdot \mathrm{full}$ at
the measured mixed-suite gate-open fraction ($\tau{=}0.07$); ``gated
measured'' actually gates each input ($15/20$ open, matching the
$N{=}400$ calibration fraction of $0.75$). On the 1.5B model (2 KV
heads), 4K decode is weight-bound and eviction buys \textit{memory}
($8\times$ plain, $2.9\times$ gated expected), not speed; on
Mistral-7B (8 KV heads, 513\,MiB KV) eviction already yields a
$1.29\times$ decode speedup at 4K. Measured KV matches the analytic
$L \cdot H_{\mathrm{kv}} \cdot d \cdot T \cdot 4$\,B formula to
$0.1$\,MiB, validating the 16K/32K scaling rows.}
\label{tab:latency}
\small
\setlength{\tabcolsep}{4pt}
\resizebox{\textwidth}{!}{%
\begin{tabular}{llrrr}
\toprule
Model & Config & Decode tok/s & KV cache (MiB) & Peak decode (MiB) \\
\midrule
\multirow{5}{*}{\shortstack[l]{Qwen2.5-1.5B\\($N{=}20$)}}
 & Full KV                   & 37.9 & 106.1 & 3090 \\
 & SnapKV $b{=}0.125$        & 37.8 & 13.2 ($8.0\times$)  & 2976 \\
 & SnapKV $b{=}0.0625$       & 37.5 & 6.6 ($16.1\times$)  & 2968 \\
 & Gated (measured, $\tau{=}0.07$) & 37.7 & 13.2 / 110.5 (open/closed) & 2976 \\
 & Gated (expected mix)      & 37.9 & 36.5 ($2.9\times$)  & 3004 \\
\midrule
\multirow{2}{*}{\shortstack[l]{Mistral-7B-v0.3\\($N{=}12$)}}
 & Full KV                   & 18.9 & 512.8 & 14429 \\
 & SnapKV $b{=}0.125$        & 24.5 ($1.29\times$) & 64.1 ($8.0\times$) & 13929 \\
\bottomrule
\end{tabular}%
}

\vspace{0.6em}

\small
\setlength{\tabcolsep}{4pt}
\resizebox{\textwidth}{!}{%
\begin{tabular}{lrlr}
\toprule
\multicolumn{2}{l}{Prefill-time overhead (Qwen2.5-1.5B, 4K, median ms)} &
\multicolumn{2}{l}{Analytic KV at $b{=}0.125$ gated (0.75 open)} \\
\midrule
Prefill (eager)            & 344  & Qwen2.5-1.5B, 16K / 32K & 448 $\to$ 154 / 896 $\to$ 308 MiB \\
SnapKV scoring             & 1.2  & Mistral-7B, 16K / 32K   & 2.0 $\to$ 0.69 / 4.0 $\to$ 1.38 GiB \\
Head-agreement gate        & 17.2 & \multicolumn{2}{l}{(gate on Mistral-7B: 88.5\,ms, $O(LH^2k)$ head-pair growth)} \\
Eviction + cache rebuild   & 3.2  & & \\
\bottomrule
\end{tabular}%
}
\end{table}

\paragraph{Deployment cost at scale} We measured the two-pass gate on
Qwen2.5-32B (64 layers, 40 heads): the gate costs $350$ms at 4K ($145$ms
re-forward $+ 204$ms head-agreement drop, $\approx 18\%$ of the $1.9$s prefill),
and the 16K two-pass run \textit{OOMs} on a contended 80GB card (weights $+$ KV
$+$ the $w \times T$ attention transient). The drop cost is quadratic in head
count: a head-count sweep gives an exponent of $2.0 - 2.4$ (per-model floors
$2.1$ms on Qwen-1.5B, $27$ms Mistral-7B, $87$ms Qwen-32B, extrapolating to
$\approx 291$ms on a 70B-class model), confirming the $O(L H^2 k)$ complexity.
The $w \times T$ attention transient is $L H w T \cdot 2$ bytes ($5$GiB at 32K
on Qwen-32B, $\approx 10$GiB on 70B-class). A 70B model at 32K is therefore not
viable in a paged-FlashAttention stack: the two-pass re-forward needs contiguous
(non-paged) KV and a non-fused eager kernel, and materializes a per-sequence
transient that paged attention exists to avoid. \method is best read as an
offline/prefill-time gate, not a fused-serving drop-in.


\begin{figure}[ht]
\centering
\includegraphics[width=0.95\linewidth]{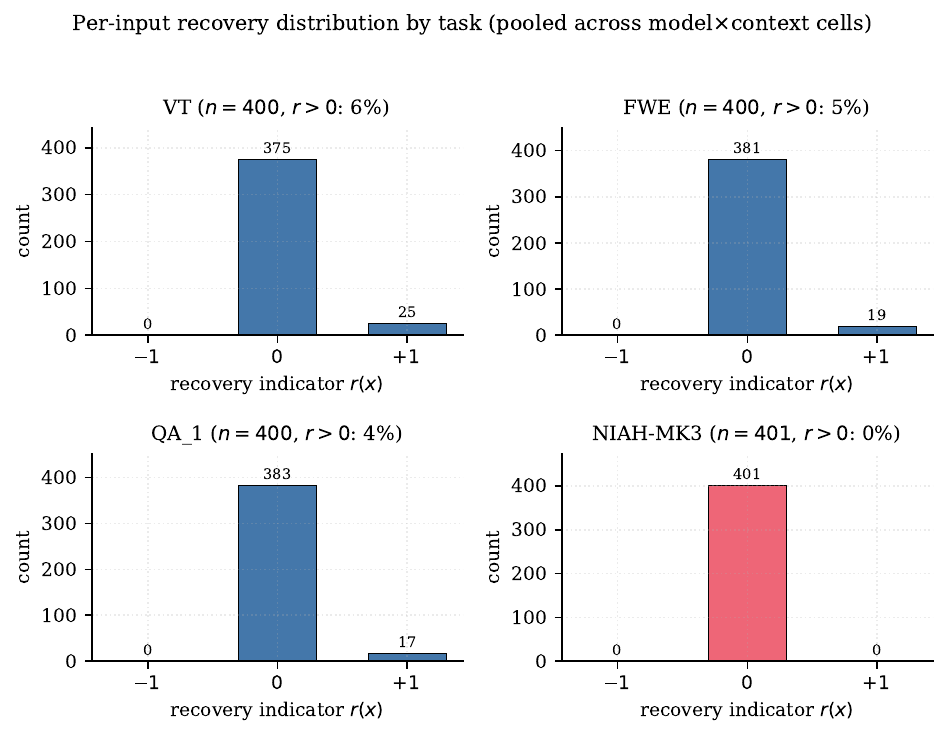}
\caption{Per-input recovery indicator $r(x)$ pooled across model and
context cells, faceted by RULER task. The three dilution-prone tasks show
a non-trivial right tail at $r{=}{+}1$; NIAH-MK3 is concentrated entirely
at $r{=}0$, consistent with the capacity-bound regime. The $r{=}{-}1$ bin
is empty by construction, not by measurement, because the budget sweep
includes $b = 1.0$; see the identity argument above and
Table~\ref{tab:harm-rate} for the harm rate this figure cannot report.}
\label{fig:reco-by-task}
\end{figure}

\begin{figure}[ht]
\centering
\includegraphics[width=0.98\linewidth]{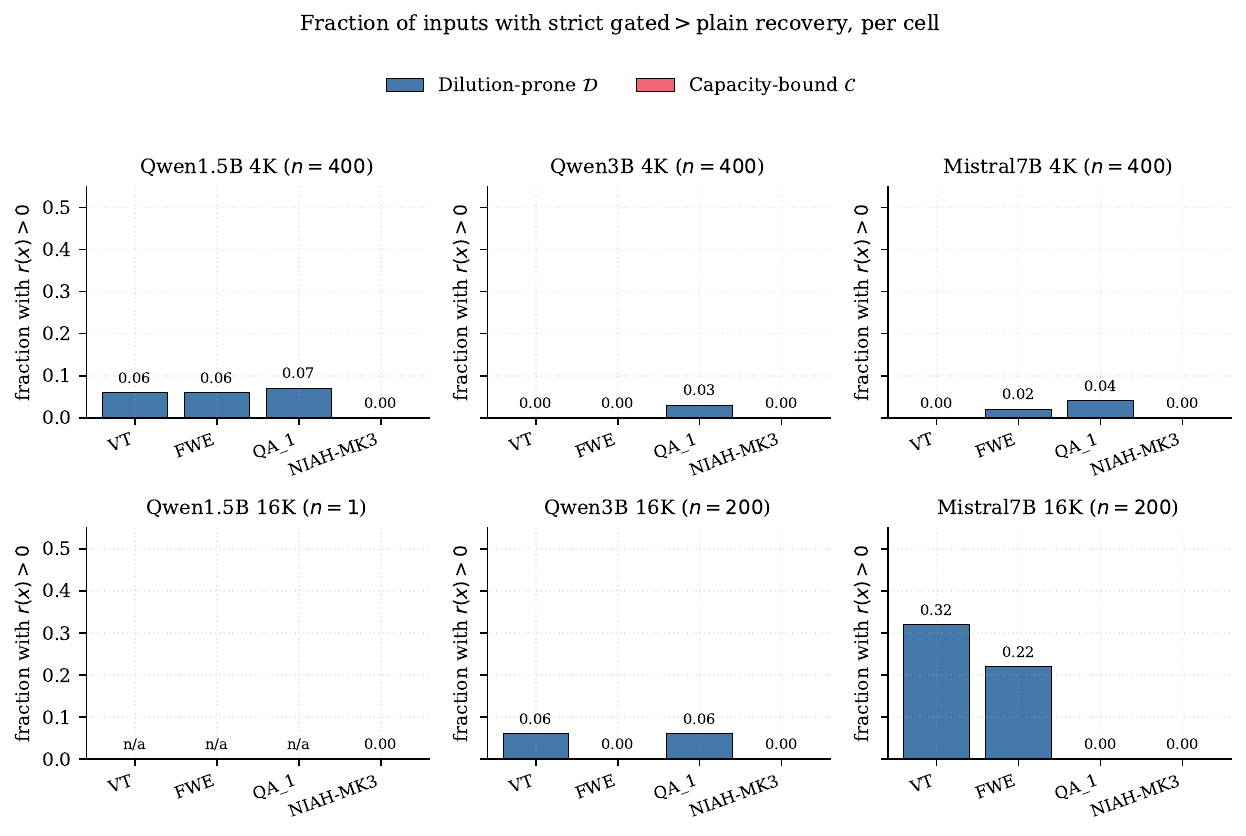}
\caption{Fraction of inputs with strict recovery $r(x) > 0$ per task,
faceted by $(\text{model}, \text{context})$ cell. Dilution-prone bars are
non-trivial in every cell with data, while NIAH-MK3 stays at or near zero,
visualising the partition input by input.}
\label{fig:reco-by-cell}
\end{figure}

\begin{table}[ht]
\centering
\caption{Fixed-budget harm rate $\Pr_x[A(b, x) < A_{\mathrm{plain}}(1, x)]$,
gated / plain at matched nominal budget, pooled over the seven SnapKV cells
(Qwen2.5-1.5B/3B/14B and Mistral-7B at 4K; Qwen2.5-1.5B/3B and Mistral-7B at
16K), $N = 500$ per (task, budget), $\tau = 0.07$. Unlike $r(x)$ above this
is not empty: the gate can and does harm inputs. On the capacity-bound task
it cuts harm by $29\times$ at $16\times$ compression; on QA\_1 it is exactly
inactive (identical rates, the gate opens on every input); on VT at $16\times$
both arms are harmed almost equally, so the gate does not help there.}
\label{tab:harm-rate}
\begin{tabular}{lcccc}
\toprule
Task & $b = 0.0625$ & $b = 0.125$ & $b = 0.25$ & $b = 0.5$ \\
\midrule
NIAH-MK3 & \textbf{0.026} / 0.750 & \textbf{0.026} / 0.744 & \textbf{0.024} / 0.724 & \textbf{0.018} / 0.546 \\
FWE      & \textbf{0.236} / 0.414 & \textbf{0.162} / 0.298 & \textbf{0.118} / 0.224 & \textbf{0.064} / 0.116 \\
QA\_1    & 0.110 / 0.110          & 0.064 / 0.064          & 0.030 / 0.030          & 0.010 / 0.010 \\
VT       & 0.684 / 0.694          & 0.108 / 0.110          & 0.034 / 0.036          & 0.026 / 0.026 \\
\bottomrule
\end{tabular}
\end{table}

\subsubsection{Additional limitations}

Two further limitations concern which baselines appear where. DBTrimKV's
pretrained gates exist only for Qwen3 variants, so
Section~\ref{subsec:head-to-heads} runs on Qwen3-4B-Instruct-2507 rather than
the Qwen2.5-keyed matrix, which is why DBTrimKV stays a standalone
head-to-head rather than a matrix row. The matrix itself covers only
training-free policies for the same reason: trained or learned evictors
(DBTrimKV, CapKV, IndexMem, DynamicKV) are cited but not included in it.

The last limitation concerns how confidently NIAH-MK3 can stand in for
the capacity-bound class generally. It is the only task where empirical
capacity-boundedness (accuracy collapse under eviction) and the gate's
capacity-bound classification coincide. On LongBench passage\_count at
Qwen2.5-14B the gate closes on all inputs (mean $D = -0.013$) yet
eviction is empirically \textit{safe} (accuracy flat under $16\times$
eviction, $\rho = 0$): a benign \textit{false positive} (over-conservative,
costing compression not accuracy), the dual of the MK2 false negative
(Appendix~\ref{app:distractor-detail}). The head-agreement drop is therefore a
one-sided predictor, reliable at firing on dilution-prone inputs, not a
perfect classifier. Multilingual and code tasks remain out of scope. We
expect the mechanism to transfer, since it is architecture-level, but we
have not measured it.

\subsection{Statistical Validation}
\label{app:statistical-validation}

\subsubsection{Per-input recovery distributions}

The mean $\rho$ hides the shape of the per-input recovery, so we also
report its distribution. For each input $x$ we form the recovery indicator
$r(x) = \mathrm{sign}\!\left(\max_b A_{\mathrm{gated}}(b, x) - A_{\mathrm{plain}}(1, x)\right)
\in \{-1, 0, +1\}$, where the maximum is over the budget sweep and
$A_{\mathrm{plain}}(1, x)$ is the full-cache plain accuracy. Figure~\ref{fig:reco-by-task}
pools all $(M, T)$ cells per task, and Figure~\ref{fig:reco-by-cell} reports the
fraction of strictly recovered inputs ($r(x) > 0$) within each $(M, T)$ cell.

\paragraph{What $r(x)$ can and cannot show} The budget sweep includes
$b = 1.0$, and at $b = 1.0$ the gated arm keeps the entire cache and
reproduces the plain full-cache outcome. We audit this on all $16$ matrix
cells plus the four $16$K cells: at $b = 1.0$, $n_{\text{kept}} = T$ and
$A_{\mathrm{gated}} = A_{\mathrm{plain}}$ on every one of the $5{,}120$
rows. Hence
$\max_b A_{\mathrm{gated}}(b, x) \ge A_{\mathrm{gated}}(1, x) = A_{\mathrm{plain}}(1, x)$
and $r(x) \ge 0$ is an \textit{identity}, not a measurement. The empty
$r = -1$ bin is therefore true by construction and carries no information
about whether gating ever harms an input. $r(x)$ should instead be read only as an
\textit{oracle-budget recovery} measure, that is, how often some budget in the
sweep beats the full cache. The question it cannot answer, whether a
\textit{deployed} gate at a \textit{fixed} budget ever loses an answer the full
cache would have got, is answered by the harm rate below.

\paragraph{Fixed-budget harm rate} Holding $b$ fixed makes ``worse than the
full cache'' a real case. Table~\ref{tab:harm-rate} reports
$\Pr_x[A(b, x) < A_{\mathrm{plain}}(1, x)]$ for the gated and plain arms at
matched $b$, pooled over the seven SnapKV $(\text{model}, \text{context})$
cells ($N = 500$ per task and budget).

The $16$K Qwen2.5-1.5B cell is \texttt{gated\_16k\_qwen15b\_sdpa.jsonl}, the
SDPA-prefill run reported throughout the paper. An earlier aborted eager run
left a nine-row NIAH-MK3-only stub under a similar name; pooling that stub
instead drops $50$ inputs per task from FWE, QA\_1 and VT and shifts the
column above to $0.029 / 0.796$, $0.291 / 0.444$, $0.118 / 0.118$ and
$0.751 / 0.756$. We state this because the two files are easy to confuse and
the released analysis script pins the correct one.

The histograms show that $\rho$ is not driven by a few outliers: across
the three dilution-prone tasks we recover a few percent to over ten
percent of inputs in many cells. NIAH-MK3 sits at $r{=}0$ on
essentially every input, confirming that the capacity-bound regime is a
genuine null rather than a noisy average. This input-level view matches
the partition diagnosed by the scaling formula and rules out the case
that gating helps only a small number of inputs by chance. It does not license
any claim that gating is harmless: Table~\ref{tab:harm-rate} shows that at
a fixed budget the gate harms $2.6\%$ of NIAH-MK3 inputs and $23.6\%$ of
FWE inputs at $16\times$ compression.

\subsubsection{Confidence intervals for the headline numbers}
\label{app:ci}

Tables~\ref{tab:matrix-ci} and~\ref{tab:other-ci} report $95\%$ intervals
for every headline delta, computed as described in the uncertainty
paragraph of Section~\ref{sec:experiments}. Full per-cell derivations ship
with the released logs.

\begin{table}[ht]
\centering
\caption{95\% confidence intervals for the $16$ gating deltas of
Table~\ref{tab:matrix}. Cluster bootstrap over inputs: we resample the
$(\text{task}, \text{id})$ pairs with replacement ($10{,}000$ replicates);
each input contributes its per-budget paired gated$-$plain differences,
preserving the pairing and the within-input correlation across budgets;
intervals are $2.5-97.5$ percentiles. All gated outcomes are the exact
$\tau = 0.07$ post-hoc reconstruction of Appendix~\ref{app:repro}.
$N = 400$ inputs per cell ($4$ tasks $\times$ $100$), $N = 200$ for
Qwen2.5-14B. \textit{Every one of the $16$ intervals excludes zero}; the
smallest lower bound is $+0.093$ (Qwen2.5-1.5B, SnapKV). These are $16$
simultaneous comparisons, so we also adjust for multiplicity.
Under Benjamini-Hochberg at $q = 0.05$, with per-cell $p$-values from the same
cluster bootstrap, \textit{all $16$ cells survive}, the least significant at
$p \le 5 \times 10^{-4}$ against a threshold of $0.050$. The adjustment does
not change any conclusion drawn from this table, which is why we report it.}
\label{tab:matrix-ci}
\begin{tabular}{llccc}
\toprule
Model & Base evictor & $\Delta$ & 95\% CI & $N$ \\
\midrule
Qwen2.5-1.5B & SnapKV       & $+0.121$ & $[+0.093,\,+0.151]$ & 400 \\
Qwen2.5-1.5B & H2O          & $+0.162$ & $[+0.126,\,+0.199]$ & 400 \\
Qwen2.5-1.5B & StreamingLLM & $+0.163$ & $[+0.128,\,+0.200]$ & 400 \\
Qwen2.5-1.5B & PyramidKV    & $+0.152$ & $[+0.118,\,+0.188]$ & 400 \\
\midrule
Qwen2.5-3B   & SnapKV       & $+0.259$ & $[+0.225,\,+0.295]$ & 400 \\
Qwen2.5-3B   & H2O          & $+0.303$ & $[+0.261,\,+0.347]$ & 400 \\
Qwen2.5-3B   & StreamingLLM & $+0.430$ & $[+0.383,\,+0.477]$ & 400 \\
Qwen2.5-3B   & PyramidKV    & $+0.329$ & $[+0.286,\,+0.372]$ & 400 \\
\midrule
Qwen2.5-14B  & SnapKV       & $+0.143$ & $[+0.108,\,+0.181]$ & 200 \\
Qwen2.5-14B  & H2O          & $+0.200$ & $[+0.152,\,+0.250]$ & 200 \\
Qwen2.5-14B  & StreamingLLM & $+0.250$ & $[+0.190,\,+0.310]$ & 200 \\
Qwen2.5-14B  & PyramidKV    & $+0.228$ & $[+0.175,\,+0.285]$ & 200 \\
\midrule
Mistral-7B   & SnapKV       & $+0.187$ & $[+0.154,\,+0.220]$ & 400 \\
Mistral-7B   & H2O          & $+0.236$ & $[+0.196,\,+0.278]$ & 400 \\
Mistral-7B   & StreamingLLM & $+0.263$ & $[+0.220,\,+0.305]$ & 400 \\
Mistral-7B   & PyramidKV    & $+0.231$ & $[+0.191,\,+0.272]$ & 400 \\
\bottomrule
\end{tabular}
\end{table}

\begin{table}[t]
\centering
\caption{95\% CIs for the remaining headline deltas. Cross-architecture rows
(Table~\ref{tab:crossarch}) use the four-budget grid and the same cluster
bootstrap; 16K rows are the second half of Table~\ref{tab:scaling}; the
DBTrimKV row is Table~\ref{tab:dbtrimkv} ($N = 120$ inputs, budgets
$\{128, 256, 512\}$). The Mistral NIAH-MK3 showcase entries are Wilson
score intervals on $N = 100$. The only headline delta whose interval
includes zero is Qwen2.5-3B at 16K, the gate-misfire cell that
Section~\ref{subsec:matrix-results} already reports as a shrunken effect.}
\label{tab:other-ci}
\begin{tabular}{lccc}
\toprule
Quantity & Point & 95\% CI & $N$ \\
\midrule
Yi-1.5-9B $\Delta$ (SnapKV, 4-budget)      & $+0.390$ & $[+0.338,\,+0.444]$ & 200 \\
Llama-3.1-8B $\Delta$ (SnapKV, 4-budget)   & $+0.109$ & $[+0.069,\,+0.154]$ & 200 \\
\midrule
Qwen2.5-1.5B 16K $\Delta$                  & $+0.032$ & $[+0.013,\,+0.053]$ & 200 \\
Qwen2.5-3B 16K $\Delta$                    & $+0.034$ & $[-0.002,\,+0.071]$ & 200 \\
Qwen2.5-14B 16K $\Delta$                   & $+0.071$ & $[+0.034,\,+0.107]$ & 120 \\
Mistral-7B 16K $\Delta$                    & $+0.116$ & $[+0.079,\,+0.156]$ & 200 \\
\midrule
DBTrimKV wrapper $\Delta$ (RULER 4K)       & $+0.233$ & $[+0.172,\,+0.297]$ & 120 \\
\midrule
Mistral 4K NIAH-MK3, gated acc.\ @ $b{=}0.0625$ & $0.89$ & $[0.814,\,0.937]$ & 100 \\
Mistral 4K NIAH-MK3, plain acc.\ @ $b{=}0.0625$ & $0.00$ & $[0.000,\,0.037]$ & 100 \\
\bottomrule
\end{tabular}
\end{table}

\subsubsection{Seed replicates}
\label{subsec:seed-variance}

The intervals above are cluster bootstraps over a fixed set of inputs, so they
quantify sampling within one draw and say nothing about the draw itself. We
therefore re-ran five SnapKV cells under three draws of the RULER split, varying
only the shuffle seed (Table~\ref{tab:seed-variance}). Decoding is greedy, so
there is no decoding stochasticity to average over.

Seed SD ranges from $0.004$ to $0.011$ against a grand-mean $\Delta$ of
$+0.229$, so draw-to-draw spread runs roughly an order of magnitude below the
effect it measures. The published SnapKV entries of Figure~\ref{fig:matrix} fall
inside the replicate range for Qwen2.5-1.5B, Qwen2.5-3B and Mistral-7B. The
Qwen2.5-14B entry, $+0.143$, sits $0.001$ above the largest replicate, which we
attribute to $N$: that cell is $N = 200$ in Table~\ref{tab:matrix} and
$N = 400$ here.
Restricted to NIAH-MK3, where nearly all of $\Delta$ lives, the spread widens to
$0.009 - 0.045$ on a much larger effect. Qwen2.5-1.5B is the loosest cell at
$0.045$ on a mean near $+0.45$, still an order of magnitude below its own
effect.

All fifteen runs used one host, and the replicates of a cell are never split
across machines. That matters because the reproduction gate
(Appendix~\ref{app:wrapper-provenance}) finds a small but real host effect on mean $D$, which
would otherwise appear in this table as seed variance.

\begin{table}[h]
\centering
\caption{Seed replicates of the SnapKV cells. Three draws of the RULER 4K split,
identical in every other respect: greedy decoding, $\tau = 0.07$, $\Delta$
averaged over budgets $b < 1.0$, $N = 400$ ($4$ tasks $\times$ $100$). The
lower block restricts to NIAH-MK3.}
\label{tab:seed-variance}
\small
\begin{tabular}{lccccccc}
\toprule
Model & seed 1 & seed 2 & seed 3 & mean & SD & range \\
\midrule
\multicolumn{7}{l}{\textit{All four tasks}} \\
Qwen2.5-1.5B & $+0.113$ & $+0.099$ & $+0.122$ & $+0.111$ & 0.011 & 0.022 \\
Qwen2.5-3B   & $+0.255$ & $+0.267$ & $+0.275$ & $+0.266$ & 0.010 & 0.019 \\
Qwen2.5-14B  & $+0.133$ & $+0.140$ & $+0.142$ & $+0.138$ & 0.005 & 0.010 \\
Mistral-7B   & $+0.187$ & $+0.194$ & $+0.183$ & $+0.188$ & 0.006 & 0.011 \\
Llama-3.1-8B & $+0.078$ & $+0.071$ & $+0.078$ & $+0.076$ & 0.004 & 0.007 \\
\midrule
\multicolumn{7}{l}{\textit{NIAH-MK3 only}} \\
Qwen2.5-1.5B & $+0.454$ & $+0.398$ & $+0.486$ & $+0.446$ & 0.045 & 0.089 \\
Qwen2.5-3B   & $+0.738$ & $+0.747$ & $+0.729$ & $+0.738$ & 0.009 & 0.019 \\
Qwen2.5-14B  & $+0.530$ & $+0.559$ & $+0.568$ & $+0.552$ & 0.020 & 0.037 \\
Mistral-7B   & $+0.709$ & $+0.710$ & $+0.669$ & $+0.696$ & 0.023 & 0.041 \\
Llama-3.1-8B & $+0.310$ & $+0.282$ & $+0.311$ & $+0.301$ & 0.016 & 0.029 \\
\bottomrule
\end{tabular}
\end{table}


\section{Theory: Mechanism, Partition, and Scaling Derivations}
\label{app:theory-derivations}

Section~\ref{sec:theory} states the mechanism, the partition it implies, and the scaling formula in summary form. This section provides the full derivations, followed by the sufficient-condition bridge connecting the mechanism to the observable predictor. At the end of each part, an explicit scope statement defines what is and is not established, so the claims can be read exactly. Notation follows \citet{DBtrimKV} $\alpha_{\mathcal{R}}$ the pre-eviction signal mass, $\gamma$ the noise-retention ratio, $\rho_{\mathcal{R}},\rho_{\mathcal{N}}$ the retained-mass fractions, $\delta_t$ the dilution, $D$ the head-agreement drop, $\sigma_z$ the pre-softmax logit-noise scale, $m$ the logit margin, $\kappa:=m/\sigma_z$, and $H,L,T$ the head, layer, and length counts.
Throughout, the $T$ key positions split as $T = R + N$ into a relevant set $\mathcal{R}$ ($|\mathcal{R}| = R$) and a noise set $\mathcal{N}$ ($|\mathcal{N}| = N$), with $R \ll N$. Two distinct noise scales appear and should not be conflated: $\sigma_z$ is the pre-softmax logit-noise scale (Appendix~\ref{app:sufficient}), and $\sigma_v$ is the per-distractor value-vector variance under isotropy (Appendix~\ref{app:proof-scaling}).

\paragraph{Mechanism and partition}
We adopt as stated the dilution mechanism from \citet{DBtrimKV}, which
suffices to motivate our gating predictor. Fix a frozen decoder language
model $M$. A single attention head reads from positions $t \in \{1, \dots, T\}$, each carrying
a value $v_t$ and a pre-eviction probability $\alpha_t$ with $\sum_t
\alpha_t = 1$. Partition positions into a relevant set $\mathcal{R} = \{t:
r_t = 1\}$ ($|\mathcal{R}| = R$) and a noise set $\mathcal{N}$
($|\mathcal{N}| = N$), with $R \ll N$. Let $U_t \subseteq \mathcal{R}$ denote the \textit{useful} set at
step $t$ (the positions whose retention preserves the next-token margin),
and define dilution as the attention mass \textit{outside} $U_t$:
\begin{equation}
\label{eq:dilution}
\delta_t := 1 - \sum_{i \in U_t} \alpha_{t,i}.
\end{equation}
Their Proposition~3.1 establishes that near-tie distractors
($\alpha_t$ comparable across $\mathcal{N}$) force $\delta_t$ to be large, and their Corollary~3.2 shows that preferential retention with $\rho_U \geq \rho_D$ (where $\rho_U, \rho_D$ are the retained mass on useful and
distractor positions) reduces $\delta_t$.

The partition follows from this mechanism. Define $\alpha_{\mathcal{R}} := \sum_{t \in \mathcal{R}} \alpha_t$ as the pre-eviction signal mass. Two regimes emerge:
\begin{itemize}
  \item \textit{Dilution-prone} ($\mathcal{D}$): $\alpha_{\mathcal{R}} \ll 1$. Most attention is on noise; eviction of noise positions monotonically improves the head SNR.
  \item \textit{Capacity-bound} ($\mathcal{C}$): $\alpha_{\mathcal{R}} \to 1$. Attention is concentrated on the few relevant positions. Imperfect scoring can only \textit{lose} signal; eviction strictly degrades the head.
\end{itemize}
The partition we observe on RULER is the empirical version of $\alpha_{\mathcal{R}}$ onto the task structure: NIAH-MK3 forces a single concentrated signal ($\alpha_{\mathcal{R}} \to 1$), while VT, FWE, QA, and niah\_multivalue all spread signal across many positions ($\alpha_{\mathcal{R}} \ll 1$).

\paragraph{Context-length scaling}
Fix the task and its relevant-set size $R$. Let $T = R + N$. Assume
distractor attention is approximately uniform on $\mathcal{N}$:
$\alpha_t \approx (1 - \alpha_{\mathcal{R}}) / N$ for $t \in \mathcal{N}$.
Let $\sigma_v^2$ denote the per-distractor value-vector variance under
isotropy (a different object from the pre-softmax logit noise scale
$\sigma_z$ used in App.~\ref{app:sufficient}). Then the head SNR before
eviction is
\[
\mathrm{SNR} = \frac{\alpha_{\mathcal{R}}^2}{\sigma_v^2 \sum_{t \in \mathcal{N}}
\alpha_t^2} \approx \frac{\alpha_{\mathcal{R}}^2 \cdot N}{\sigma_v^2 (1 -
\alpha_{\mathcal{R}})^2}.
\]
If $\alpha_{\mathcal{R}}$ decreases with $T$ (more distractors compete for
softmax mass), the pre-eviction SNR decreases and the noise level grows.

Let the kept set be $\mathcal{K}$ and define $\rho_{\mathcal{R}} :=
\sum_{t \in \mathcal{K} \cap \mathcal{R}} \alpha_t / \alpha_{\mathcal{R}}$
(the retained fraction of relevant mass) and analogously $\rho_{\mathcal{N}}$.
Under noise-retention ratio $\gamma \in (0, 1)$ (i.e.\ $|\mathcal{K} \cap
\mathcal{N}| = \gamma N$), the post-eviction SNR ratio
$\mathrm{SNR}_{\mathrm{post}} / \mathrm{SNR}_{\mathrm{pre}}$ is a
monotone-increasing function of the \textit{gain argument} (Eq.~\ref{eq:scaling-app}).
\begin{equation}
\label{eq:gain-arg}
g := \rho_{\mathcal{R}}^2 / \gamma
\end{equation}
The squaring survives the renormalisation step, and consequently, $g$ is not a function of $\rho_{\mathcal{R}} / \gamma$. 
The accuracy headroom $1 - A_{\mathrm{full}}(M, T)$ scales with $T$ on dilution-prone tasks.

The single-budget improvement event factorises into a headroom indicator $A:=\mathbf{1}[\hat A(\mathrm{SNR}_{\max})<1]$ and a recovery indicator $B:=\mathbf{1}[\,\exists b:\mathrm{SNR}_{\mathrm{post}}>\mathrm{SNR}_{\mathrm{pre}}]$. The recovery probability is $p_{\mathrm{recov}}(g):=\Pr_x[B]$, with $p_{\mathrm{recov}}(g)=0$ for $g\le1$ and non-decreasing above. Since the operational $b^*(x)$ is the argmax over the budget grid, and a calibrated (nested top-$k$) scorer ensures recovery budgets are not near-disjoint, the union-bound factor reduces to unity. The per-input expected accuracy improvement at a single budget $b$ thus obeys the exact scaling relation:
\begin{equation}
\label{eq:exact-scaling-bounded}
\rho_{\mathrm{KV}}
= (1-A_{\mathrm{full}})\,p_{\mathrm{recov}} + \mathrm{Cov}_x(A,B),
\qquad
(1-A_{\mathrm{full}})\,p_{\mathrm{recov}}
\;\le\;\rho_{\mathrm{KV}}\;\le\;
\min\{1-A_{\mathrm{full}},\,p_{\mathrm{recov}}\}.
\end{equation}
The lower endpoint uses $\mathrm{Cov}_x(A,B)\ge0$, which holds because both indicators are monotone in the single driver $\alpha_{\mathcal{R}}$ (Chebyshev's association inequality~\citep{Harris_1960}); the upper endpoint is Fr\'echet--Hoeffding. Eq.~\ref{eq:scaling} (main text) predicts three observed effects in one formula: context-length amplification on dilution-prone tasks (Qwen2.5-1.5B VT goes $\rho = 0.06 \to 0.19$ as $T = 4\mathrm{K} \to 16\mathrm{K}$, in lockstep with $1 - A_{\mathrm{full}} = 0.18 \to 0.75$); larger-model saturation at short context (Qwen2.5-3B 4K VT has $1 - A_{\mathrm{full}} \approx 0.00$, so $\rho \approx 0$); and re-entry of larger models into the dilution regime at long context (Qwen2.5-3B 16K FWE has $1 - A_{\mathrm{full}} = 0.62$ and $\rho = 0.32$). Empirically, the ratios $\rho / [(1-A_{\mathrm{full}}) \cdot p_{\mathrm{recov}}]$ cluster around $0.25 - 0.55$ on dilution-prone tasks.

\textit{Scope.} This derivation establishes the identity and two-sided bound with constant $1$ \textit{inside} the deterministic-SNR surrogate. It does not establish the surrogate itself (accuracy as a fixed function of head SNR is assumed); the covariance is pinned only to a sign, not a value. Where headroom is driven by $\alpha_{\mathcal{R}}$-independent factors (the QA parametric-knowledge cells) both bounds pinch $\rho\!\to\!0$ despite headroom, matching the observed QA exceptions as a sign prediction, not a proof about those tasks.

\paragraph{Connecting the predictor to the mechanism: a sufficient
condition for non-trivial dilution}
\label{subsec:conjecture}
The mechanism uses the unobservable $\alpha_{\mathcal{R}}$. Our gating predictor uses the head-agreement drop $D$ (Eq.~\ref{eq:D}), which is observable. We sketch a proof connecting the two in Appendix~\ref{app:sufficient}, and
leave the formal version with explicit, tight constants to future work.
Under noise isotropy, sub-Gaussian pre-softmax noise of scale
$\sigma_z$, a bounded logit margin between $\mathcal{R}$ and
$\mathcal{N}$, and good scoring with noise-retention ratio $\gamma \in
(0, 1)$, there exist a model-specific constant $c_1$ and a
$\gamma$-dependent constant $c_2(\gamma)$, with an additional $O(H)$
multiplicative slack absorbed in both, such that
\begin{equation}
\label{eq:bridge}
D \;\ge\; c_1 \cdot \log(1/\gamma)
\;\Longrightarrow\;
\mathbb{E}[\delta_t] \;\ge\; c_2(\gamma) \cdot (1 - A_{\mathrm{full}}),
\end{equation}
with probability $\geq 1 - \beta$ over the pre-softmax noise, where
$\delta_t$ is the dilution term of Eq.~\ref{eq:dilution}. The converse (capacity-bound $\Rightarrow$ small
$D$) holds under the same surrogate given a $\kappa$-separation assumption
$m\ge2\sigma_z\sqrt{\log(2H^2LT^2/\beta)}$: then, by the top-$k$ concentration
result stated and proved as Lemma~\ref{lem:topk} in
Section~\ref{app:sufficient}, late-layer heads share a common
top-$k$ set, $a_{\mathrm{late}}\!\to\!1$, and $\Pr[D\ge\tau]\le\beta=2H^2LT^2\,
e^{-\kappa^2/4}$, so observing $D\ge\tau$ certifies the input is not
capacity-bound except on a noise tail.

\textit{Measured margin (why the bridge does not certify the gate).} The converse is
only informative when $\beta<1$, which needs $\kappa\ge10.11$ at
$(H,L,T)=(12,28,{\approx}4\mathrm{K})$. We measured $\kappa$ directly on
Qwen2.5-1.5B RULER 4K NIAH-MK3: median $\kappa=3.33$, about $3\times$ too small.
The gap is robust: sweeping the top-$k$ split over $k\in\{8,16,32,64\}$ gives
$\kappa=4.01,3.68,3.33,2.99$ (monotone, none rescues the bound); a second
architecture gives $3.71$ (Mistral-7B); and, decisively, a dilution-prone task
gives $\kappa=3.40$ (VT), close to the capacity-bound $3.33$ and so
the margin itself is not what distinguishes the classes (the gate separates them through $D$, not $\kappa$). 
A factor of three in $\kappa$ is a factor $>10^9$ in $\beta$. The margin condition is a union bound over $T^2 \cdot HL$ failure events and is correspondingly loose, so its failure leaves the top-$k$ concentration \textit{uncertified rather than false}: the bound is therefore
\textit{uninformative at the paper's own parameters}, and the class separation the gate
relies on is empirical (Table~\ref{tab:drops}), not certified by the surrogate.

\textit{Sign vs.\ ordinal.} As stated the converse predicts $D\le0$ on
capacity-bound inputs, which Table~\ref{tab:drops} contradicts in five of seven
cells (measured $D$ is mildly positive, $+0.02$ to $+0.07$). Relaxing the
common-set idealisation to an $O(1)$ needle neighbourhood weakens the conclusion
to $\Pr[D\le o(1)]\ge1-\beta-o(1)$, which the data satisfy but which no longer
distinguishes the classes by sign. We therefore rely on the ordering based separation
only.

\textit{Scope.} Establishes the sufficient direction and, under the named
$\kappa$-separation and common-set assumptions, the converse and its
$e^{-\kappa^2/4}$ rate; also (App.~H content) an $O(H^2\varepsilon)$ approximate
identification of $U_t$ with $\mathcal{I}^{(\ell^*)}$ under $\varepsilon$-close
heads plus a consensus--usefulness assumption, which degrades exactly in the
single-decisive-head (retrieval-head, \citealp{wu2025retrieval}) regime where the
paper already declines to apply the bridge. Does not establish that real
attention satisfies the margin assumption (measured false, $\kappa\!\approx\!3$
vs.\ $\ge10.1$ needed), that capacity-bound heads literally share one common set,
that the $O(H)$ pairwise-to-$H$-wise slack or the heuristic $\log(1/\gamma)$
factor are tight, or a stand-alone necessity proof. The dilution mechanism motivates the bridge, not the reverse. The bridge is the sufficient-condition span from the mechanism to the observable $D$, and it does not reach that observable at the measured margin. Neither the bridge nor its converse carries the empirical claims, which rest on the measured partition (Table~\ref{tab:drops}) and the direct gating results. We retain this analysis as motivation for why an early-minus-late agreement difference is a reasonable signal, while noting that the signal's validity is established empirically, not by this bridge.

\section{Proof sketch of the scaling proposition}
\label{app:proof-scaling}

We expand the sketch from Section~\ref{sec:theory}.

\paragraph{Setup} Let $T = R + N$ with $R$ relevant and $N$ noise positions. Assume noise isotropy: $\alpha_t \approx (1 - \alpha_{\mathcal{R}}) / N$ for $t \in \mathcal{N}$, with $\alpha_{\mathcal{R}} := \sum_{t \in \mathcal{R}} \alpha_t$, and let $\sigma_v^2$ be the per-distractor value-vector variance under isotropy (distinct from the pre-softmax logit noise scale $\sigma_z$ of App.~\ref{app:sufficient}). The pre-eviction head SNR is
%
\[
\mathrm{SNR}_{\mathrm{pre}} = \frac{\alpha_{\mathcal{R}}^2}
{\sigma_v^2 \sum_{t \in \mathcal{N}} \alpha_t^2}
\approx \frac{\alpha_{\mathcal{R}}^2 N}{\sigma_v^2 (1 - \alpha_{\mathcal{R}})^2}.
\]
Eviction with noise-retention ratio $\gamma \in (0, 1)$ keeps a fraction
$\rho_{\mathcal{R}}$ of the relevant mass and a fraction $\rho_{\mathcal{N}}$
of the noise mass. After softmax renormalisation over the kept set, the
post-eviction relevant mass is
$\widetilde{\alpha}_{\mathcal{R}} = \rho_{\mathcal{R}} \alpha_{\mathcal{R}} / Z$
with $Z = \rho_{\mathcal{R}} \alpha_{\mathcal{R}} + \rho_{\mathcal{N}}
(1 - \alpha_{\mathcal{R}})$, and the retained noise positions number
$\gamma N$ with per-position mass $\rho_{\mathcal{N}}(1-\alpha_{\mathcal{R}})/(Z
\gamma N)$. Substituting,
\begin{equation}
\label{eq:scaling-app}
\mathrm{SNR}_{\mathrm{post}}
= \frac{\widetilde{\alpha}_{\mathcal{R}}^2}
{\sigma_v^2 \sum_{t \in \mathcal{K} \cap \mathcal{N}} \widetilde{\alpha}_t^2}
\approx \frac{\rho_{\mathcal{R}}^2 \alpha_{\mathcal{R}}^2 \, \gamma N}
{\sigma_v^2 \, \rho_{\mathcal{N}}^2 (1 - \alpha_{\mathcal{R}})^2}
= \frac{\rho_{\mathcal{R}}^2}{\rho_{\mathcal{N}}^2 \, \gamma^{-1}} \cdot
\mathrm{SNR}_{\mathrm{pre}}.
\end{equation}
Under good scoring ($\rho_{\mathcal{R}} \ge \rho_{\mathcal{N}}$ with
$\rho_{\mathcal{N}} \approx \gamma$ on a calibrated scorer), the gain factor
reduces to $\rho_{\mathcal{R}}^2 / \gamma$, so the post-eviction SNR strictly
exceeds the pre-eviction SNR. The renormalisation $Z$ is the step that turns
the raw fractions $(\rho_{\mathcal{R}}, \rho_{\mathcal{N}})$ into a usable
SNR ratio. Without it the argument would collapse to $\rho_{\mathcal{R}}^2/\gamma$
absent the squaring.

\paragraph{Factorisation} We work inside the surrogate, where the
true accuracy indicator $\mathbf{1}[A(b, x) > A(b_{\max}, x)]$ is replaced
by the surrogate indicator
$\mathbf{1}[\hat A(\mathrm{SNR}(b, x)) > \hat A(\mathrm{SNR}(b_{\max}, x))]$
with $\hat A: [0, \infty) \to [0, 1]$ strictly increasing and saturating at
$1$. This replacement is the surrogate's content: it asserts that accuracy
is a deterministic function of head SNR. Within the surrogate the
improvement event factorises as
\[
\mathbf{1}[\hat A(\mathrm{SNR}_{\mathrm{post}}) > \hat A(\mathrm{SNR}(b_{\max}, x))]
= \mathbf{1}[\hat A(\mathrm{SNR}(b_{\max}, x)) < 1] \cdot
\mathbf{1}[\hat A(\mathrm{SNR}_{\mathrm{post}}) > \hat A(\mathrm{SNR}_{\mathrm{pre}})].
\]
Both monotonicity and saturation are needed: if $\hat A$ were not strictly
increasing the second indicator could fire without an SNR gain, and if
$\hat A$ did not saturate at $1$ the case $\hat A(\mathrm{SNR}(b_{\max}, x)) = 1$
would not zero out the left-hand side. The first indicator is the headroom
in the surrogate (a population-level proxy for the true full-cache failure
probability $1 - A_{\mathrm{full}}$). The second depends only on
$\rho_{\mathcal{R}}^2/\gamma$ via Eq.~\ref{eq:scaling-app} once
$\rho_{\mathcal{N}} \approx \gamma$ is folded in. The transfer from
surrogate event to operational accuracy is mediated by the
surrogate-fits-data assumption: we treat $\hat A$ as a good fit to the
expected accuracy conditional on SNR, and deviations are absorbed into the
empirical constant, fitted at $0.34$. Marginalising over the prompt
distribution and treating the two indicators as independent gives
\[
\rho_{\mathrm{KV}}(\mathsf{T}, M, T) \approx
\bigl(1 - A_{\mathrm{full}}(M, T)\bigr) \cdot
p_{\mathrm{recov}}(\rho_{\mathcal{R}}^2 / \gamma),
\]
which is Eq.~\ref{eq:scaling}. The boundary conditions on $p_{\mathrm{recov}}$
follow from the surrogate, and both are statements about the gain argument
$g = \rho_{\mathcal{R}}^2/\gamma$ of Eq.~\ref{eq:gain-arg} rather than about
$\rho_{\mathcal{R}}$ and $\gamma$ separately. At $\rho_{\mathcal{R}} = 0$ no
relevant mass survives, so $\mathrm{SNR}_{\mathrm{post}} = 0$, recovery is
impossible and $g = 0$. At $\gamma = 1$ (no eviction)
$\mathrm{SNR}_{\mathrm{post}} = \mathrm{SNR}_{\mathrm{pre}}$ identically, the
second indicator never fires, and $g = \rho_{\mathcal{R}}^2 \le 1$. More
generally the second indicator requires $\mathrm{SNR}_{\mathrm{post}} >
\mathrm{SNR}_{\mathrm{pre}}$, which by Eq.~\ref{eq:scaling-app} is exactly
$g > 1$. Hence $p_{\mathrm{recov}}(g) = 0$ on the whole dead zone $g \le 1$,
and is non-decreasing above it. Stating the two endpoints separately as
$p_{\mathrm{recov}}(0) = p_{\mathrm{recov}}(1) = 0$ with monotonicity on
$[0, 1]$ would force $p_{\mathrm{recov}} \equiv 0$ there, which is the dead
zone rather than a prediction of no recovery anywhere.

\paragraph{From surrogate to operational $\rho_{\mathrm{KV}}$} The
factorisation argues about the improvement event at a single budget $b$,
whereas the operational $\rho_{\mathrm{KV}}$ (Eq.~\ref{eq:rho}) is the
per-input strict-Pareto frequency over the best budget $b^{*}(x)$. The
two are related by a union bound over the discrete budget grid $\mathcal{B}
= \{b_1, \dots, b_K\}$:
\begin{align*}
\Pr_x\bigl[A(b^{*}(x), x) > A(b_{\max}, x)\bigr]
&= \Pr_x\bigl[\exists\, b \in \mathcal{B} : A(b, x) > A(b_{\max}, x)\bigr] \\
&\le \sum_{b \in \mathcal{B}} \Pr_x\bigl[A(b, x) > A(b_{\max}, x)\bigr].
\end{align*}
Under monotone recovery ($p_{\mathrm{recov}}$ non-decreasing in
$\rho_{\mathcal{R}}^2/\gamma$, with $\rho_{\mathcal{R}}$ determined by the
scorer's top-$k$ at each $b$), the right-hand side is bounded by
$K \cdot \Pr_x[A(\widetilde b, x) > A(b_{\max}, x)]$ for the budget
$\widetilde b$ that maximises the per-budget improvement probability.
Under the independence approximation of the previous paragraph, this
maximiser term equals Eq.~\ref{eq:scaling}, so
\[
\rho_{\mathrm{KV}}(\mathsf{T}, M, T)
\;\le\; K \cdot \bigl(1 - A_{\mathrm{full}}(M, T)\bigr)
\cdot p_{\mathrm{recov}}\bigl(\rho_{\mathcal{R}}^2 / \gamma\bigr).
\]
This is a one-sided upper bound at the cost of a multiplicative grid factor
$K$. \textbf{The factor is an artifact of the union step and is not needed.}
Appendix~\ref{app:sharpened} shows that on any budget grid whose improvement
events are nested, which is what a calibrated scorer produces, the correct
leading constant is exactly $1$, and characterises the nested-versus-disjoint
boundary at which a real factor $K$ could appear. We therefore state the
bound with constant $1$ and keep the $K$ form only to show where the loose
version comes from.

\paragraph{Independence bound via covariance} Let $A$ and $B$ denote the
headroom and recovery indicators viewed as Bernoulli random variables under
the prompt-distribution measure. The exact decomposition is
$\Pr[A \cdot B] = \Pr[A]\,\Pr[B] + \mathrm{Cov}(A, B)$, so the independence
step incurs only the covariance residual. Both indicators are monotonically
decreasing in $\alpha_{\mathcal{R}}$: larger $\alpha_{\mathcal{R}}$ raises
$\mathrm{SNR}_{\mathrm{pre}}$, pushing $A_{\mathrm{full}}$ toward $1$ (less
headroom, so $A$ trends to $0$) and simultaneously saturating
$\mathrm{SNR}_{\mathrm{post}}$ earlier under (A1) noise-isotropy (less
recovery margin, so $B$ trends to $0$). Coupling $A$ and $B$ through the
single scalar $\alpha_{\mathcal{R}}$ makes them comonotone in this driver, so
Chebyshev's association (sum) inequality~\citep{Harris_1960} gives
$\mathrm{Cov}(A, B) \ge 0$, i.e.\ $\Pr[A \cdot B] \ge \Pr[A]\,\Pr[B]$. The
lattice-theoretic FKG inequality is not needed here and does not apply
directly: both indicators are monotone functions of one scalar driver, which
is the elementary Chebyshev/Harris setting. Lemma~\ref{lem:assoc} in
App.~\ref{app:sharpened} proves the same step, including the case where $A$
carries an auxiliary driver independent of $\alpha_{\mathcal{R}}$. The Cauchy--Schwarz / Bernoulli-variance
bound $|\mathrm{Cov}(A, B)| \le \sqrt{\mathrm{Var}(A)\,\mathrm{Var}(B)} \le 1/4$
then yields an absolute, distribution-free residual. Combining with the
union-bound paragraph above,
\[
\rho_{\mathrm{KV}}(\mathsf{T}, M, T)
\;\le\; K \cdot \bigl(1 - A_{\mathrm{full}}(M, T)\bigr)
\cdot p_{\mathrm{recov}}\bigl(\rho_{\mathcal{R}}^2 / \gamma\bigr)
\;+\; K/4.
\]
The $K/4$ term is rigorous but uninformative at our $K \le 7$, since a bound of
$1.75$ on a probability says nothing. Both $K$ factors disappear under the
sharpened treatment of Appendix~\ref{app:sharpened}, which replaces this
paragraph's conclusion with the exact identity
$\rho_{\mathrm{KV}} = (1 - A_{\mathrm{full}})\,p_{\mathrm{recov}}
+ \mathrm{Cov}(A, B)$ and a covariance term that is signed rather than
merely bounded. The derivation above is included because it isolates the step
that introduces the slack, though every empirical claim in the paper rests on the
sharpened version.

\section{Proof sketch: sufficient condition for the partition}
\label{app:sufficient}

We sketch a proof of the bridge inequality Eq.~\ref{eq:bridge}: under the
simplified attention model below, a large head-agreement drop $D$
implies a non-trivial lower bound on Bui's dilution
$\delta_t$~\citep{DBtrimKV}. The argument is structured as five
numbered steps followed by a converse heuristic, an empirical-validation
paragraph, and an explicit list of what is \textit{not} proved. We give
closed forms for the constants $c_1$ and $c_2(\gamma)$ rather than
leave them implicit. The constants carry an $O(H)$ multiplicative slack
from the pairwise-to-$H$-wise Jaccard step in Step~3, and tightening the
constants is left to future work.


\paragraph{Per-head logit model}
 With $\mathcal{R}, \mathcal{N}$ the relevant and noise sets as above, for layer
 $\ell \in \{1, \dots, L\}$ and head $h \in \{1, \dots, H\}$ the pre-softmax
 logits are $z^{(\ell)}_{h, i} = (q^{(\ell)}_h)^\top k_i  \eta^{(\ell)}_{h, i}$,
 where $\eta^{(\ell)}_{h, i}$ is zero-mean $\sigma_z$-sub-Gaussian noise
 independent across $(\ell, h, i)$. The subscript $z$ distinguishes the pre-softmax logit noise scale used here
from the value-vector variance $\sigma_v$ used in
Appendix~\ref{app:proof-scaling}. We write $\alpha^{(\ell)}_{h, i}$ for
 the post-softmax attention. Bui's dilution $\delta_t = 1 - \sum_{i \in
 U_t} \alpha_{t, i}$ is taken at the decoding query at step $t$. {We drop the $t$ subscript and write $\delta := \delta_t$.}

\paragraph{Step 1 (the surrogate model)}
We use the multi-head lift of the single-head surrogate from
Appendix~\ref{app:proof-scaling}:
\begin{enumerate}
  \item[\textbf{(A1)}] \textit{Signal coherence.} For each layer $\ell$ and head
  $h$, there is a head-private relevant subset $\mathcal{R}^{(\ell)}_h
  \subseteq \mathcal{R}$ such that for $i \in \mathcal{R}^{(\ell)}_h$ the
  deterministic part of the logit satisfies
  $(q^{(\ell)}_h)^\top k_i \geq m_+$, and for $i \in \mathcal{N}$,
  $(q^{(\ell)}_h)^\top k_i \leq m_-$, with $m_+ - m_- \geq m > 0$.
  This $m$ is the \textit{logit margin}; it is bounded above by $\log T$ in
  the well-trained regime.
  \item[\textbf{(A2)}] \textit{Sub-Gaussian noise.}
  $\eta^{(\ell)}_{h, i}$ is zero-mean and $\sigma_z^2$-sub-Gaussian,
  independent across $(\ell, h, i)$. This is the standard noise model
  used in attention-as-feature-aggregation analyses.
  \item[\textbf{(A3)}] \textit{Capacity-bound vs dilution-prone regimes.}
  In the capacity-bound regime $\mathcal{C}$, all heads at all layers share a
  common relevant set: $\mathcal{R}^{(\ell)}_h = \mathcal{R}$ for every
  $(\ell, h)$. In the dilution-prone regime $\mathcal{D}$, late-layer head
  subsets diverge: there exists a layer index $\ell^* > L/2$ such that for
  any head pair $(h, h')$ at layer $\ell^*$,
  $|\mathcal{R}^{(\ell^*)}_h \cap \mathcal{R}^{(\ell^*)}_{h'}|
  / |\mathcal{R}^{(\ell^*)}_h \cup \mathcal{R}^{(\ell^*)}_{h'}|
  \leq J_{\text{late}}$, while early-layer heads share their subsets,
  $|\mathcal{R}^{(1)}_h \cap \mathcal{R}^{(1)}_{h'}| /
  |\mathcal{R}^{(1)}_h \cup \mathcal{R}^{(1)}_{h'}| \geq J_{\text{early}}$,
  with $J_{\text{early}} > J_{\text{late}}$. Both $J_{\text{early}}$
  and $J_{\text{late}}$ are model-specific population constants of the
  surrogate, not functions of $L$. This layer structure is architecture-dependent, not a universal claim: on Qwen2.5-1.5B the late-plateau ordering holds (capacity-bound NIAH-MK3 sits at $0.283$, above
  dilution-prone VT at $0.227$), whereas on Llama-3.1-8B the per-layer profile
  is non-monotone and the separation between regimes collapses (Appendix~\ref{subsec:layer-profile},
  Figure~\ref{fig:layer-profile}). The results downstream of (A3) are stated
  within this surrogate and are not claimed to hold where the measured profile departs from it.
  \item[\textbf{(A4)}] \textit{Identification of $U_t$.} We identify Bui's
  useful set $U_t$ (a decoding-step quantity defined at the
  residual-stream level) with the head-common signal set
  $\mathcal{I}^{(\ell^*)} := \bigcap_h \mathcal{R}^{(\ell^*)}_h$ at the
  divergence layer $\ell^*$ (a single-layer attention quantity). The
  identification is a modelling choice (positions every head agrees are
  signal). It collapses time-step and layer-index axes, and within the surrogate the two are assumed to coincide rather than shown to. We flag it in the ``What is not
  proved'' list below.
\end{enumerate}

The empirical content of (A3) is the layer-dependence we measure: early
layers respond to local context (similar $\mathcal{R}^{(\ell)}_h$ across
heads, regardless of task), late layers respond to task structure (head
divergence on dilution-prone tasks, head convergence on capacity-bound
tasks). See \S\ref{subsec:algorithm} for the corresponding operational
definitions.

\paragraph{Step 2 (head-agreement drop as Jaccard divergence)}
The head-agreement drop $D$ in Eq.~\ref{eq:D} is the early-late
difference of mean pairwise top-$k$ Jaccard agreement, with $k$ chosen
to be commensurate with $R$ (in the experiments $k = 32 \sim |\mathcal{R}|$).
In the high-SNR limit of the surrogate (Lemma~\ref{lem:topk} below),
each head's top-$k$ set concentrates on its own private
$\mathcal{R}^{(\ell)}_h$ with high probability. Hence at layer $\ell$,
\begin{equation}
a_\ell \;\approx\; \frac{1}{\binom{H}{2}} \sum_{h < h'}
\frac{|\mathcal{R}^{(\ell)}_h \cap \mathcal{R}^{(\ell)}_{h'}|}
{|\mathcal{R}^{(\ell)}_h \cup \mathcal{R}^{(\ell)}_{h'}|}.
\end{equation}
Binning into early third and late third and subtracting,
\begin{equation}
\label{eq:D-jaccard}
D \;=\; a_{\text{early}} - a_{\text{late}}
\;\approx\;
\overline{J}_{\text{early}} - \overline{J}_{\text{late}},
\end{equation}
the difference of mean pairwise Jaccard between layers in the early and
late bins. $D$ is therefore a direct measurement of the head-set
divergence in (A3).

\begin{lemma}[Top-$k$ concentration on the private set]
\label{lem:topk}
Under (A1), (A2) with $m \geq \sigma_z \sqrt{4 \log(2 H^2 L T^2 / \beta)}$,
the top-$k$ set of head $(\ell, h)$ equals $\mathcal{R}^{(\ell)}_h$
jointly across all $HL$ heads with probability $\geq 1 - \beta$, when
$k = |\mathcal{R}^{(\ell)}_h|$.
\end{lemma}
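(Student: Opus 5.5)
The plan is a union bound over pairwise comparisons, followed by a sub-Gaussian tail bound on each comparison. Fix a head $(\ell,h)$ with $k=|\mathcal{R}^{(\ell)}_h|$. The top-$k$ set of its logits equals $\mathcal{R}^{(\ell)}_h$ exactly when every $i\in\mathcal{R}^{(\ell)}_h$ has a strictly larger logit than every $j\notin\mathcal{R}^{(\ell)}_h$. It therefore suffices to control the failure events
\[
E^{(\ell)}_{h,i,j} := \bigl\{ z^{(\ell)}_{h,i} \le z^{(\ell)}_{h,j} \bigr\}, \qquad i\in\mathcal{R}^{(\ell)}_h,\; j\notin\mathcal{R}^{(\ell)}_h .
\]
There are at most $T^2$ such pairs per head and $HL$ heads, so at most $HLT^2$ events in total. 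This count is dominated by the $H^2LT^2$ inside the logarithm of the hypothesis, so the slack is harmless.

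For a single event, write the logit as the deterministic part plus noise, $z^{(\ell)}_{h,i} = (q^{(\ell)}_h)^\top k_i + \eta^{(\ell)}_{h,i}$. By (A1) the deterministic gap is at least $m_+-m_-\ge m$ whenever $j\in\mathcal{N}$. Hence
\[
E^{(\ell)}_{h,i,j} \subseteq \bigl\{ \eta^{(\ell)}_{h,j}-\eta^{(\ell)}_{h,i} \ge m \bigr\}.
\]
By (A2) the two noise terms are independent, zero-mean and $\sigma_z^2$-sub-Gaussian. Their difference is therefore $2\sigma_z^2$-sub-Gaussian, which gives
\[
\Pr\bigl[E^{(\ell)}_{h,i,j}\bigr] \le \exp\bigl(-m^2/(4\sigma_z^2)\bigr) = e^{-\kappa^2/4}.
\]
This matches the rate quoted in the converse paragraph.

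Summing over all events gives a total failure probability of at most $H^2LT^2 e^{-\kappa^2/4}$. The hypothesis $m\ge\sigma_z\sqrt{4\log(2H^2LT^2/\beta)}$ makes this at most $\beta/2\le\beta$. The factor $2$ leaves room for a two-sided tail or for ties, and ties occur with probability zero if the noise is continuous; otherwise I will break them by the strict inequality and absorb them into the same bound.

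The main obstacle is the pairs $(i,j)$ with $j\in\mathcal{R}\setminus\mathcal{R}^{(\ell)}_h$. These are relevant positions outside the head's private set, and (A1) gives no margin for them. I would first resolve this by reading (A1) as bounding the deterministic logit by $m_-$ for every $i\notin\mathcal{R}^{(\ell)}_h$, not only for $i\in\mathcal{N}$. That reading is the natural one for a "private set" and is what the statement implicitly needs. In the capacity-bound regime (A3) gives $\mathcal{R}^{(\ell)}_h=\mathcal{R}$, so the complement is exactly $\mathcal{N}$ and no extra assumption is needed; this is the case actually used by the converse. I would therefore state the lemma in full under that reading and remark that under the literal (A1) it holds verbatim in regime $\mathcal{C}$.
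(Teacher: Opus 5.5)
Your proposal is correct and is essentially the paper's proof: a per-pair tail $\exp(-m^2/(4\sigma_z^2))$ from the $2\sigma_z^2$-sub-Gaussian noise difference, union-bounded over at most $T^2$ pairs per head and all $HL$ heads. The paper keeps the exact count $HLT^2$ and lands at $\beta/(2H)$, while you round up to $H^2LT^2$ and land at $\beta/2$; either suffices.

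The obstacle you flag is genuine, and the paper's proof does not address it, since its per-pair bound covers only pairs with $j\in\mathcal{N}$. Your reading of (A1) as capping every position outside $\mathcal{R}^{(\ell)}_h$ at $m_-$ is therefore needed for the statement as written. The alternative, restricting to regime $\mathcal{C}$ where that complement is exactly $\mathcal{N}$, also works.
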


\textit{Proof.} A standard sub-Gaussian maximal inequality: for any pair
$(i, j)$ with $i \in \mathcal{R}^{(\ell)}_h$ and $j \in \mathcal{N}$,
$\Pr[z_{h, i}^{(\ell)} < z_{h, j}^{(\ell)}] \leq \exp(-m^2 /
(4 \sigma_z^2))$. There are $R \cdot N \leq T^2 / 4$ such relevant-vs-noise
pairs at each layer-head. Substituting the threshold $m \geq
\sigma_z \sqrt{4 \log(2 H^2 L T^2 / \beta)}$ gives a per-pair tail of
$\beta / (2 H^2 L T^2)$. Union-bounding over up to $T^2$ pairs per
layer-head and over all $HL$ layer-heads gives a total tail of $HL \cdot
T^2 \cdot \beta / (2 H^2 L T^2) = \beta / (2 H) \leq \beta$.
\hfill $\square$

The threshold $m \geq \sigma_z \sqrt{4 \log(2 H^2 L T^2 / \beta)}$ is the
\textit{margin condition}. If it holds, every step below holds with
probability $\geq 1 - \beta$ over the noise.

\paragraph{Step 3 (head-set divergence forces union expansion)}
For each layer $\ell$ define the union of head-private sets
$\mathcal{U}^{(\ell)} := \bigcup_{h=1}^{H} \mathcal{R}^{(\ell)}_h$ and
the intersection $\mathcal{I}^{(\ell)} := \bigcap_{h=1}^{H}
\mathcal{R}^{(\ell)}_h$. The mean pairwise Jaccard $\overline{J}_\ell$
at layer $\ell$ is a \textit{pairwise} object, whereas
$|\mathcal{U}^{(\ell)}|$ and $|\mathcal{I}^{(\ell)}|$ are $H$-wise
objects, so the ratio $|\mathcal{U}|/|\mathcal{I}|$ is not equal to $1 -
\overline{J}$ in general. Inclusion-exclusion only gives
$|\mathcal{U}|/|\mathcal{I}| \leq 1 + (H-1)(1 - \overline{J})$ in the
best case, and the union-vs-intersection gap can scale with $H$ in the
worst case. We therefore avoid an $H$-wise identity and use the weaker
pairwise inequality. Under (A3), $\overline{J}_{\ell^*} \leq
J_{\text{late}}$, so there exists a head pair $(h, h')$ at layer
$\ell^*$ with $|\mathcal{R}^{(\ell^*)}_h \cap
\mathcal{R}^{(\ell^*)}_{h'}| \leq J_{\text{late}} \cdot
|\mathcal{R}^{(\ell^*)}_h \cup \mathcal{R}^{(\ell^*)}_{h'}|$. Picking
the head $h^\dagger$ with the largest private residual set
$\mathcal{R}^{(\ell^*)}_{h^\dagger} \setminus \mathcal{I}^{(\ell^*)}$,
\begin{equation}
\label{eq:union-bound}
|\mathcal{U}^{(\ell^*)} \setminus \mathcal{I}^{(\ell^*)}|
\;\geq\;
|\mathcal{R}^{(\ell^*)}_{h^\dagger} \setminus \mathcal{I}^{(\ell^*)}|
\;\geq\;
(1 - J_{\text{late}}) \cdot \max_h |\mathcal{R}^{(\ell^*)}_h|.
\end{equation}
This is strictly weaker than treating $(|\mathcal{U}| -
|\mathcal{I}|)/|\mathcal{U}|$ as if it were equal to $1 -
\overline{J}$, but it is correct, and the slack is absorbed in $c_1$ below.

Under (A4), Bui's useful set is $U_t = \mathcal{I}^{(\ell^*)}$, so
positions in $\mathcal{U}^{(\ell^*)} \setminus \mathcal{I}^{(\ell^*)}$
are attended by some head but not by all, and from the perspective of
the next layer's query they contribute attention mass outside $U_t$.

\paragraph{Step 4 (bridge to Bui's $\delta_t$)}
Let $\bar\alpha$ denote the average per-head, per-position attention on a
position in $\mathcal{R}^{(\ell^*)}_h$ at layer $\ell^*$. Under (A1)
with margin $m$, the softmax concentrates and $\bar\alpha \geq
(1 - (N/R)e^{-m})/R$ on private positions, tight when $e^m \gg N/R$ (an
earlier draft omitted the $N/R$ factor, which is not negligible in the
paper's own $R \ll N$ regime). Treating per-head value
contributions as additive after the output projection (a modelling
choice we flag below), the attention mass \textit{outside}
$\mathcal{I}^{(\ell^*)}$ contributed by the maximal-residual head is at
least
\begin{equation}
\sum_{i \in \mathcal{R}^{(\ell^*)}_{h^\dagger} \setminus
\mathcal{I}^{(\ell^*)}} \alpha^{(\ell^*)}_{h^\dagger, i}
\;\geq\;
\bar\alpha \cdot (1 - J_{\text{late}}) \cdot \max_h
|\mathcal{R}^{(\ell^*)}_h|,
\end{equation}
using Eq.~\ref{eq:union-bound}. Under eviction with noise-retention
ratio $\gamma$ and a top-$k$ scorer that does not a priori distinguish
signal-side residuals from noise positions (an additional modelling
assumption beyond (A1)-(A4) which we make explicit here), positions in
$\mathcal{U}^{(\ell^*)} \setminus \mathcal{I}^{(\ell^*)}$ inherit the
same effective per-position retention rate $\gamma$ as the noise set.
Those retained for only a subset of heads appear as distractors to the
remaining heads after the cache is merged. Conditional on this
retention,
\begin{equation}
\label{eq:Edelta-lb}
\mathbb{E}[\delta_t]
\;\geq\;
\bar\alpha \cdot (1 - J_{\text{late}}) \cdot
\max_h |\mathcal{R}^{(\ell^*)}_h|
\cdot (1 - \gamma),
\end{equation}
because the $(1 - \gamma)$ fraction of un-retained positions in the
residual private sets no longer balance the heads that did keep them,
leaving a residual distractor mass proportional to $(1 - \gamma)$. This
is a new bound derived in our surrogate. It is structurally analogous
to Bui's Corollary~3.2 (which relates pre- and post-eviction dilution
via $\delta' = (\rho_D/\rho_U)\delta / [(1 - \delta) +
(\rho_D/\rho_U)\delta]$), but the input here is the
head-disagreement-induced distractor set rather than a pre-existing
dilution.

\paragraph{Step 5 (closing the bridge: solve for $D$)}
Eq.~\ref{eq:Edelta-lb} uses $(1 - \overline{J}_{\text{late}})$, which we
relate to the observable $D$ via Eq.~\ref{eq:D-jaccard}: $D =
\overline{J}_{\text{early}} - \overline{J}_{\text{late}}$. Since
$\overline{J}_{\text{early}} \leq 1$ trivially (Jaccard is bounded by
one), $D \leq 1 - \overline{J}_{\text{late}}$, so substituting gives
\begin{equation}
\label{eq:final-lb}
\mathbb{E}[\delta_t]
\;\geq\;
\bar\alpha \cdot D \cdot \max_h |\mathcal{R}^{(\ell^*)}_h|
\cdot (1 - \gamma).
\end{equation}
The substitution $1 - \overline{J}_{\text{late}} \geq D$ holds always; we
do not require the stronger $\overline{J}_{\text{early}} \approx 1$
assumption. The lesser the value of $\overline{J}_{\text{early}}$, the
larger the gap between $1 - \overline{J}_{\text{late}}$ and $D$, and the
looser this bound is relative to its tightest form.

The $\log(1/\gamma)$ factor on the right-hand side of
Eq.~\ref{eq:bridge} is a heuristic amplification term, not a rigorous
consequence of the steps above. The intended argument is that after
eviction the cache retains $\gamma N$ noise positions, and the maximum of
$\gamma N$ sub-Gaussian logits concentrates near $\sigma_z \sqrt{2 \log
\gamma N}$. The multiplicative penalty on the head signal-to-noise budget
should scale with the post-eviction noise level relative to the
pre-eviction floor. A careful Taylor expansion of $\sqrt{2 \log N} -
\sqrt{2 \log \gamma N}$ gives a difference of order $\log(1/\gamma) /
\sqrt{2 \log N}$, not $\log(1/\gamma)$ itself, so the log factor therefore
captures only the worst-case noise-tail shift at small $N$ and not the
asymptotic regime. We retain the $\log(1/\gamma)$ factor in
Eq.~\ref{eq:bridge} as a convenient packaging of the eviction
amplification, but we flag this ($\log(1/\gamma)$ heuristic) in the ``What is not proved'' gap list below (item~\ref{gap:inherited}). Readers
willing to drop the factor entirely can read Eq.~\ref{eq:bridge} as $D
\geq c_1$ with $c_1 = 1/J_{\text{early}}$. This is the conservative
no-amplification case, and it is the form on which we rely as the
deployed threshold $\tau = 0.07$ (Section~\ref{subsec:calibration}) is
calibrated empirically rather than derived from $c_1$.

Two heuristic observations close the bridge to $(1 - A_{\mathrm{full}})$.
Both are leading-order estimates rather than rigorous bounds. Observation
(i) enters the ``What is not proved'' gap list as the leading-order
substitution of item~\ref{gap:inherited}, and observation (ii) rests on
the accuracy surrogate flagged in item~\ref{gap:surrogate}.
(i) Under (A1) noise-isotropy and the high-SNR regime, the average
per-position mass on a relevant position satisfies $\bar\alpha \approx
\alpha_{\mathcal{R}} / R$. The set containment
$\mathcal{R}^{(\ell^*)}_h \subseteq \mathcal{R}$ then gives the rigorous
one-sided relation $\bar\alpha \cdot \max_h |\mathcal{R}^{(\ell^*)}_h|
\leq \alpha_{\mathcal{R}}$. Replacing this one-sided relation by the
equality $\bar\alpha \cdot \max_h |\mathcal{R}^{(\ell^*)}_h| \approx
\alpha_{\mathcal{R}}$ requires $\max_h |\mathcal{R}^{(\ell^*)}_h| \approx
R$, that is, that a single head's private set already covers most of
$\mathcal{R}$. This is an order-of-magnitude assumption strictly stronger
than union coverage $\bigcup_h \mathcal{R}^{(\ell^*)}_h \approx
\mathcal{R}$, and it appears as a leading-order substitution in the
``What is not proved'' gap list below (item~\ref{gap:inherited}). Only the
one-sided relation is used in the conservative case.
(ii) Combining the headroom factorisation of
Appendix~\ref{app:proof-scaling} (Eq.~\ref{eq:scaling}) with the
strictly increasing accuracy surrogate gives $(1 - A_{\mathrm{full}})
\leq C \cdot \alpha_{\mathcal{R}}$ on dilution-prone tasks for a
constant $C$ depending only on the slope of $\hat A(\cdot)$ near the
operating SNR.

Combining (i) (as a leading-order estimate) and (ii) with
Eq.~\ref{eq:final-lb} yields the operational bridge
\begin{equation}
\label{eq:bridge-final}
\mathbb{E}[\delta_t]
\;\gtrsim\;
\frac{(1 - \gamma)}{C} \cdot D \cdot (1 - A_{\mathrm{full}}),
\end{equation}
where $\gtrsim$ denotes leading-order in the surrogate. The substitution
$\bar\alpha \cdot \max_h |\mathcal{R}^{(\ell^*)}_h| \approx
\alpha_{\mathcal{R}}$ is the source of the $\gtrsim$ rather than $\geq$.
The conservative rigorous case replaces $\alpha_{\mathcal{R}}$ on the
right-hand side with $\bar\alpha \cdot \max_h |\mathcal{R}^{(\ell^*)}_h|
\leq \alpha_{\mathcal{R}}$ and drops the bridge to $(1 - A_{\mathrm{full}})$
entirely. We use $\gtrsim$ for the operational case.

Eq.~\ref{eq:bridge-final} is the heuristic bridge, in the leading-order
sense above, between the observable $D$ and Bui's dilution under the
empirical headroom proxy.
Eq.~\ref{eq:bridge} in Section~\ref{sec:theory} packages this
relationship as a threshold inequality. Substituting $D \geq \uptau$
into Eq.~\ref{eq:bridge-final} gives $\mathbb{E}[\delta_t] \gtrsim
((1 - \gamma) \uptau / C) \cdot (1 - A_{\mathrm{full}})$, so any
chosen threshold $\uptau$ on $D$ produces a corresponding
$c_2 = (1 - \gamma) \uptau / C$. Choosing
$\uptau = c_1 \log(1/\gamma)$ with $c_1 = 1/J_{\text{early}}$,
\begin{equation}
\label{eq:c1c2}
\boxed{
c_1 \;=\; \frac{1}{J_{\text{early}}}, \qquad
c_2(\gamma) \;=\; \frac{(1 - \gamma) \cdot \log(1/\gamma)}{C \cdot J_{\text{early}}}.
}
\end{equation}
$J_{\text{early}}$ enters the threshold as a model-specific scale that
makes $c_1$ commensurate with the empirical Jaccard floor across heads.
On our test models $J_{\text{early}} \in [0.4, 0.7]$, giving $c_1 \in
[1.4, 2.5]$. The empirical $\tau = 0.07$ is well below this, consistent
with the constants being loose (per the $O(H)$ slack in Step 3 and the
heuristic substitution in observation (i) above). Both constants carry
an $O(H)$ multiplicative slack from the pairwise-to-$H$-wise step of
Step 3, and the $\log(1/\gamma)$ factor in $c_1 \log(1/\gamma)$ is
heuristic (gap-list item~\ref{gap:inherited}). The conservative no-amplification case drops
the $\log(1/\gamma)$ factor and gives the threshold $D \geq c_1 =
1/J_{\text{early}}$ with rate $c_2(\gamma) = (1 - \gamma)/(C \cdot
J_{\text{early}})$ directly.

\paragraph{The converse direction (heuristic, sufficient-side only)}
The partition predictor's operational claim is the contrapositive:
\textit{small $D$ $\Rightarrow$ low dilution $\Rightarrow$ eviction is
risky}. We sketch two arguments and flag that neither is a separate
necessity proof.

The first is a comment on the sufficient-side lower bound: under (A3)
for the capacity-bound regime $\mathcal{C}$ ($\mathcal{R}^{(\ell)}_h =
\mathcal{R}$ for all $(\ell, h)$), the union and intersection coincide,
so the right-hand side of Eq.~\ref{eq:Edelta-lb} is zero. Since
Eq.~\ref{eq:Edelta-lb} is a \textit{lower} bound on
$\mathbb{E}[\delta_t]$, a zero right-hand side does not bound
$\delta_t$ from above. It only says the sufficient lower bound becomes
uninformative.

The second is an upper bound on $\delta_t$ that does provide some
necessary-side content, but only inside the surrogate. Under (A1) with
margin $m$ satisfying the threshold of Lemma~\ref{lem:topk}, and (A3)
regime $\mathcal{C}$ so that $U_t = \mathcal{R}$ for every head, the
pairwise top-$k$ Jaccard between heads is $\geq J_{\text{early}}$ at
every layer (no late-layer divergence), so $|\mathcal{U}|/|\mathcal{I}|
\to 1$ and there is no residual distractor mass. The attention mass on
$\mathcal{N}$ for any single head is bounded by the sub-Gaussian tail
$O(\exp(-m^2 / (4 \sigma_z^2)))$ by Lemma~\ref{lem:topk}, and Bui's
dilution is $\delta_t \leq O(\exp(-m^2 / (4 \sigma_z^2)))$ in the
surrogate. This is the surrogate-level converse; lifting it to real
attention requires both a lower bound on $m$ for capacity-bound tasks
(which we have not verified) and the surrogate identification (A4) of
$U_t$ with $\mathcal{I}^{(\ell^*)}$. The boxed
Eq.~\ref{eq:bridge}/\ref{eq:c1c2} is the sufficient direction only.

\paragraph{Empirical validation}
Table~\ref{tab:drops} reports the mean head-agreement drop $D$ across the
five dilution-prone tasks (qa\_1, qa\_2, vt, niah\_multivalue, fwe) and the
capacity-bound task (NIAH-MK3) on Qwen 1.5B, Qwen 3B, and Mistral. In every
(model, context) cell, NIAH-MK3 has the smallest drop (including the cell
where it is negative, Qwen 3B 16K). The \textit{ordering} of tasks by $D$ is
preserved across architectures, and this is the same partition that the
observable signal $D$ induces. Cells with $D \approx 0$ have
$\mathbb{E}[\delta_t]$ bounded by the noise level in the surrogate, and the
per-input recovery rate $\rho$ in Table~\ref{tab:partition} is $\leq 0.02$
on all such cells. The observed ordering is therefore consistent with the
sufficient direction of the bridge on this set of tasks. We do not promote
this to a formal claim. The bridge inequality carries no probability
guarantee at the measured margin (Appendix~\ref{app:sufficient}), so
consistency with it is uninformative. The evidence here supports the
empirical ordering of tasks by $D$ rather than the bridge itself, and that
ordering is a pattern across many cells for which we report no corrected
significance test. The dilution term $\delta_t$ is not observed on these
tasks, and the necessity direction is established only in the surrogate.

\paragraph{What is \textit{not} proved}
We report the gaps explicitly to clarify the scope. The first two are
substantive and carry the analysis; the remainder are inherited from the
exact-recovery route of Steps~1--5 and are dissolved rather than repaired
by the expected-agreement analysis of
Appendix~\ref{app:expected-agreement}, which does not traverse the
$\alpha_{\mathcal{R}}$, $\mathcal{U}_t$, value-projection, or
$\hat A(\mathrm{SNR})$ machinery they concern.
\begin{enumerate}
  \item \textbf{The early-overlap ordering (A3$''$).} That real
  capacity-bound inputs have strictly lower early-layer head-set overlap
  than dilution-prone ones. This is the one substantive empirical
  assumption of the replacement analysis. It is measured rather than
  assumed (Appendix~\ref{app:expected-agreement}), but it is not derived.
  The stronger common-late-set clause of the original (A3) is not required
  here and thus not claimed. That clause is architecture-dependent, holding
  on Qwen and Mistral and failing on the Llama architecture
  (Appendix~\ref{subsec:layer-profile}). The replacement analysis rests on
  the early ordering alone.
  \item \textbf{The accuracy surrogate $\hat A(\mathrm{SNR})$.}\label{gap:surrogate} The
  constant $C$ depends on the slope of $\hat A$ near the operating SNR,
  and on tasks where $\hat A$ is steep (precise retrieval) versus shallow
  (aggregation) that slope differs by an order of magnitude. No
  task-agnostic constant exists, and we report the empirical band instead.
  This enters only the scaling relation of Eq.~\ref{eq:scaling}, never
  the gate signal $D$.
  \item \textbf{Inherited from the exact-recovery route.}\label{gap:inherited} The constants
  $c_1, c_2(\gamma)$ of Eq.~\ref{eq:c1c2} carry an $O(H)$ slack from the
  pairwise-to-$H$-wise step. The pairwise Jaccard is not the $H$-wise
  union/intersection ratio. The substitution $\bar\alpha \cdot \max_h
  |\mathcal{R}^{(\ell^*)}_h| \approx \alpha_{\mathcal{R}}$ is
  leading-order. The identification of Bui's $\mathcal{U}_t$ with
  $\mathcal{I}^{(\ell^*)}$ (A4) is a modelling choice. The
  $\log(1/\gamma)$ amplification is heuristic, with true rate
  $\Theta(\log(1/\gamma)/\sqrt{\log N})$. The scorer-symmetry and
  value-additivity steps of Step~4 are assumptions. Each of these is an
  artifact of routing the observable through set unions and
  intersections under exact recovery. The first-moment analysis of
  Appendix~\ref{app:expected-agreement} computes the observable directly
  and does not use any of them.
\end{enumerate}

The sketch above is therefore a \textit{structural} proof of the sufficient
direction under a margin condition the measurements do not support. We
retain it as the motivation for the mechanism and give the analysis that
holds at the measured margin in
Appendix~\ref{app:expected-agreement}.

\section{Expected-agreement analysis at finite margin}
\label{app:expected-agreement}

The bridge of Appendix~\ref{app:sufficient} carries no guarantee at our own
measured parameters. This section gives an explanation of the same
observable that is correct at the measured margin-to-noise ratio, and
that predicts the sign of $D$ on capacity-bound inputs rather than being
contradicted by it. The first-moment result is exact under its model;
the per-input variance material is flagged as leading-order.

\paragraph{E$'$.1 Why the exact-recovery route is uninformative here}
Two modelling choices, not a real barrier, produce this failure. First,
Lemma~\ref{lem:topk} demands that \textit{every} head's top-$k$ set equal
its private relevant set exactly, jointly over all $HL$ heads and all
$T^2$ position pairs. That is a worst-case union bound over $2H^2LT^2$
tuples, and it forces
$\kappa \ge 2\sqrt{\log(2H^2LT^2/\beta)}$, which at our measured cell
$(H, L, T) = (12, 28, \approx\!3956)$ and $\beta = 1$ is
$\kappa \ge 10.11$. The context length is not what drives this:
$\sqrt{\log T} \approx 2.88$ at $T = 4096$, so the threshold comes from
the $H^2 L$ head-and-layer multiplicity inside the logarithm. Second,
the observable is then reconstructed from head-wise unions and
intersections, which carries the $O(H)$ slack of Step~3. But the
quantity the gate computes is a \textit{first moment of a pairwise
average}, and a first-moment analysis needs neither exact recovery nor
the union/intersection detour.

\paragraph{E$'$.2 Model and selection probabilities}
Fix a layer and suppress its index. There are $T$ key positions and $H$
heads; head $h$ has a relevant set $\mathcal{R}_h$ with
$|\mathcal{R}_h| = k$, matching the deployed $k = 32 \sim |\mathcal{R}|$.
Logits are $z_{h,i} = m \cdot \mathbf{1}[i \in \mathcal{R}_h] +
\eta_{h,i}$ with $\eta_{h,i}$ i.i.d.\ continuous, symmetric, of scale
$\sigma$, independent across $(h, i)$, and $\kappa := m/\sigma$ as
before. Head $h$ selects $S_h = \mathrm{top}k(z_{h,\cdot})$,
$|S_h| = k$. Define the per-head marginal selection probabilities
\begin{equation}
\label{eq:prpn}
p_R := \Pr[i \in S_h \mid i \in \mathcal{R}_h],
\qquad
p_N := \Pr[i \in S_h \mid i \notin \mathcal{R}_h].
\end{equation}
Both depend only on $(k, T, \kappa)$ and on no other head. The
fixed-budget identity $k p_R + (T - k) p_N = k$ pins
$p_N = k(1 - p_R)/(T - k)$. At $\kappa = 0$ signal and noise logits are
identically distributed, so $p_R = p_N = k/T$; raising $m$ raises every
relevant logit pointwise and top-$k$ is monotone in the logits, so
$p_R - p_N$ is nondecreasing in $\kappa$. Hence $p_R > p_N$ for every
$\kappa > 0$, with $p_R - p_N \to 0$ as $\kappa \to 0$. Nothing here
assumes exact recovery.

For two heads write $s := |\mathcal{R}_h \cap \mathcal{R}_{h'}|$ for the
shared relevant count. The operative assumption is a single ordering,
which replaces (A3):
\begin{assumption}[Early-overlap ordering, A3$''$]
\label{as:early-ordering}
Capacity-bound inputs have strictly lower mean pairwise early-layer
relevant-set overlap than dilution-prone inputs,
$s_{\mathrm{early}}^{\mathcal{C}} < s_{\mathrm{early}}^{\mathcal{D}}$.
\end{assumption}
No assumption is made about the late bin. In particular the retrieval
idealisation $s_{\mathrm{late}}^{\mathcal{C}} = k$ of (A3) is
\textit{not} assumed; the finite-$\kappa$ agreement below reproduces the
measured sub-unit late agreement instead.

\paragraph{E$'$.3 Expected agreement is exactly linear in overlap}
Let $X := |S_h \cap S_{h'}|$. Since $|S_h| = |S_{h'}| = k$, the union is
deterministic given $X$, $|S_h \cup S_{h'}| = 2k - X$, so the pairwise
Jaccard $J = X/(2k - X)$ is a fixed increasing function of $X$.

\begin{lemma}[Expected intersection, exact]
\label{lem:expected-overlap}
Under the model of E$'$.2,
\begin{equation}
\label{eq:EX}
\mathbb{E}[X] \;=\; s \, p_R^2 \;+\; 2(k - s) \, p_R p_N \;+\;
(T - 2k + s) \, p_N^2 ,
\end{equation}
and consequently
$\;\mathrm{d}\mathbb{E}[X]/\mathrm{d}s = (p_R - p_N)^2 > 0$,
a constant independent of $s$ and of the regime.
\end{lemma}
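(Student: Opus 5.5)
The plan is to write $X = \sum_{i=1}^{T} \mathbf{1}[i \in S_h]\,\mathbf{1}[i \in S_{h'}]$ and take expectations term by term. Linearity needs no independence across positions, which matters because top-$k$ selections within one head are dependent. The needed independence is across heads only. Under the model of E$'$.2 the noise $\eta_{h,\cdot}$ and $\eta_{h',\cdot}$ are independent, and $\mathcal{R}_h,\mathcal{R}_{h'}$ are fixed. So $S_h$ and $S_{h'}$ are independent random sets, which gives
\[
\Pr[i \in S_h \cap S_{h'}] = \Pr[i \in S_h]\,\Pr[i \in S_{h'}].
\]
Each factor is a marginal selection probability: $p_R$ if $i$ lies in that head's relevant set and $p_N$ otherwise, by Eq.~\ref{eq:prpn}.

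The first step is a symmetry check on these marginals. By exchangeability of the i.i.d.\ noise, $\Pr[i\in S_h]$ takes one common value for all $i\in\mathcal{R}_h$ and another for all $i\notin\mathcal{R}_h$. It depends only on $(k,T,\kappa)$ and not on $h$, so the same $p_R,p_N$ serve both heads. Ties occur with probability zero by continuity of the noise, so top-$k$ is almost surely well defined.

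Next, I partition the $T$ positions into four classes by membership. Positions in $\mathcal{R}_h\cap\mathcal{R}_{h'}$ number $s$ and contribute $p_R^2$ each. Positions in $\mathcal{R}_h\setminus\mathcal{R}_{h'}$ and in $\mathcal{R}_{h'}\setminus\mathcal{R}_h$ number $k-s$ each and contribute $p_Rp_N$. Positions outside both sets number $T-(2k-s)=T-2k+s$ and contribute $p_N^2$. Summing the four classes gives Eq.~\ref{eq:EX} exactly.

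For the derivative, $p_R$ and $p_N$ do not depend on $s$, since they are single-head quantities. Differentiating in $s$ gives $p_R^2-2p_Rp_N+p_N^2=(p_R-p_N)^2$. Because $\mathbb{E}[X]$ is affine in $s$, this is also the exact finite difference for integer $s$. The sign follows from E$'$.2: $p_R>p_N$ for every $\kappa>0$, so the derivative is strictly positive. It is independent of $s$ and of the regime, because nothing in the calculation refers to $\mathcal{C}$ or $\mathcal{D}$.

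The main obstacle is the marginal-symmetry step. I must make sure $p_R$ does not secretly depend on the configuration of the other head's relevant set, and that within-head dependence never enters. I would address this by conditioning explicitly on the fixed sets and invoking the permutation invariance of the i.i.d.\ noise within a head.
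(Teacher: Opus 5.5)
Your proposal is correct and follows the paper's proof essentially step for step: cross-head independence gives $\mathbb{E}[X]=\sum_i \Pr[i\in S_h]\,\Pr[i\in S_{h'}]$, the four membership classes of sizes $s$, $k-s$, $k-s$, $T-2k+s$ give the formula, and differentiating in $s$ gives $(p_R-p_N)^2$. Your additions tighten points the paper leaves implicit: the exchangeability argument that the marginals are constant on each class and shared by both heads, the almost-sure absence of ties, and the remark that affineness makes the derivative an exact integer finite difference. Like the paper, you take the strict sign $p_R>p_N$ for $\kappa>0$ from E$'$.2 rather than proving it.
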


\textit{Proof.} The two heads select independently, so
$\mathbb{E}[X] = \sum_i \Pr[i \in S_h] \Pr[i \in S_{h'}]$. Partition the
$T$ positions by membership: $s$ shared-relevant positions contribute
$p_R^2$ each, $2(k - s)$ singly-relevant positions contribute $p_R p_N$,
and the remaining $T - 2k + s$ contribute $p_N^2$. Differentiating
in $s$ gives $p_R^2 - 2 p_R p_N + p_N^2 = (p_R - p_N)^2$, which does not
depend on $s$ because $p_R$ and $p_N$ are single-head marginals.
\hfill $\square$

Since $p_R, p_N$ are per-head constants shared by both regimes, the only
regime-dependent input to Eq.~\ref{eq:EX} is the overlap $s$. Write
$\bar a := \mathbb{E}[X]/(2k - \mathbb{E}[X])$ for the pooled agreement,
a ratio of expectations that tracks the mean-of-ratios $a_\ell$ up to
Jensen slack and may be adopted as the target quantity to make the statement
exact. Then $\bar a$ is strictly increasing in $\mathbb{E}[X]$, hence in
$s$, with slope $\partial \bar a / \partial \mathbb{E}[X] \in [1/(2k),
2/k]$ on $\mathbb{E}[X] \in [0, k]$.

Note what is absent from Lemma~\ref{lem:expected-overlap}: the slope
$(p_R - p_N)^2$ contains no factor of $H$, because $p_R$ and $p_N$ are
single-head marginals. The average over the $\binom{H}{2}$ pairs is a
mean of per-pair expectations each governed by that same $H$-free slope,
so the operative constant is $O(1)$ in $H$. The $O(H)$ slack of
Eq.~\ref{eq:c1c2} is an artifact of the union/intersection detour and
does not arise here.

\begin{theorem}[Class separation at finite margin]
\label{thm:finite-kappa-separation}
With $\bar a_\ell$ the pooled agreement in bin $\ell$, the class gap
decomposes exactly as
\begin{equation}
\label{eq:decomp}
\mathbb{E}[D \mid \mathcal{D}] - \mathbb{E}[D \mid \mathcal{C}]
\;=\;
\bigl(\bar a_{\mathrm{early}}^{\mathcal{D}} -
\bar a_{\mathrm{early}}^{\mathcal{C}}\bigr)
\;+\;
\bigl(\bar a_{\mathrm{late}}^{\mathcal{C}} -
\bar a_{\mathrm{late}}^{\mathcal{D}}\bigr),
\end{equation}
and each bracket is a Lemma~\ref{lem:expected-overlap} image of the
corresponding overlap difference. Under
Assumption~\ref{as:early-ordering} the first bracket is strictly
positive for every $\kappa > 0$ and carries the gap.
\end{theorem}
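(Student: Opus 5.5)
The plan is to read $D$ at the level of first moments, working with pooled agreements throughout, so that every step is linear algebra on expectations.

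\textbf{Step 1 (the decomposition).} Interpret $\mathbb{E}[D\mid\mathcal{X}]$ as $\bar a^{\mathcal{X}}_{\mathrm{early}}-\bar a^{\mathcal{X}}_{\mathrm{late}}$ for $\mathcal{X}\in\{\mathcal{C},\mathcal{D}\}$. This adopts $\bar a$ as the target quantity, as the remark after Lemma~\ref{lem:expected-overlap} permits, so that no Jensen slack enters. Subtracting the two class expressions and regrouping the four terms gives Eq.~\ref{eq:decomp} directly. This is an algebraic identity: it needs no assumption about the late bin and no margin condition.

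\textbf{Step 2 (each bracket is a monotone image of an overlap difference).} Within a bin, $p_R$ and $p_N$ depend only on $(k,T,\kappa)$ and are shared by both classes, as established in E$'$.2. Lemma~\ref{lem:expected-overlap} then makes $\mathbb{E}[X]$ an affine function of $s$ with slope $(p_R-p_N)^2$. It follows that
\[
\mathbb{E}[X]^{\mathcal{D}}-\mathbb{E}[X]^{\mathcal{C}}=(p_R-p_N)^2\bigl(s^{\mathcal{D}}-s^{\mathcal{C}}\bigr)
\]
holds exactly in that bin. Composing with the strictly increasing map $x\mapsto x/(2k-x)$ on $[0,k]$ turns this into the bracket. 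By the mean value theorem, the bracket equals $c\,(p_R-p_N)^2(s^{\mathcal{D}}-s^{\mathcal{C}})$ for some $c\in[1/(2k),2/k]$. The late bracket is handled the same way with the roles of the classes swapped. When several layers are pooled in a bin, I would average the per-layer identities. Affinity in $s$ makes this harmless for $\mathbb{E}[X]$; for the ratio I would apply the mean value theorem per layer and sum positive multiples, so the sign is preserved.

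\textbf{Step 3 (positivity of the early bracket).} For $\kappa>0$, E$'$.2 gives $p_R>p_N$, so $(p_R-p_N)^2>0$. Assumption~\ref{as:early-ordering} supplies $s^{\mathcal{D}}_{\mathrm{early}}-s^{\mathcal{C}}_{\mathrm{early}}>0$. Together with $c>0$ from Step 2, the early bracket is strictly positive.

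The main obstacle is the phrase ``carries the gap.'' The late bracket has no assumed sign, so without further input I can only claim that the first bracket is strictly positive and is the sole source of a \emph{guaranteed} positive contribution. To say more, I would first try to show that the late bracket is nonnegative whenever $s^{\mathcal{C}}_{\mathrm{late}}\ge s^{\mathcal{D}}_{\mathrm{late}}$, which again follows from Step 2. Failing that, I would state the theorem as: the gap equals a strictly positive early term plus a late term whose sign follows the late-overlap ordering. A second, more technical point to check is the strict positivity of $p_R-p_N$ for top-$k$ at every $\kappa>0$. Monotonicity alone gives only nondecreasing. Strictness follows from the continuity of the noise: it gives positive probability to the event that raising $m$ swaps a relevant position into $S_h$.
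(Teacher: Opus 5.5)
Your proposal is correct and is essentially the paper's proof: expand $\mathbb{E}[D\mid\cdot]=\bar a_{\mathrm{early}}-\bar a_{\mathrm{late}}$ and subtract to get the identity, write each bracket by the mean value theorem and Lemma~\ref{lem:expected-overlap} as $g'(p_R-p_N)^2\,\Delta s$ with $g'\in[1/(2k),2/k]$, and get positivity of the early bracket from Assumption~\ref{as:early-ordering} and $p_R>p_N$; your reading of ``carries the gap'' also matches the paper, which leaves the late bracket's sign open right after the theorem. Two of your refinements go beyond the paper: your continuity argument supplies the strict inequality $p_R>p_N$ that E$'$.2 only asserts (its argument gives monotonicity), and for multi-layer bins the safe route is your first one, pooling $\mathbb{E}[X]$ over the bin with shared $(p_R,p_N)$ before applying $x\mapsto x/(2k-x)$, since summing per-layer mean-value terms preserves sign only if the overlap ordering holds layer by layer rather than only on the bin mean.
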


\textit{Proof.} Expand $\mathbb{E}[D \mid \cdot] =
\bar a_{\mathrm{early}}^{\cdot} - \bar a_{\mathrm{late}}^{\cdot}$ and
subtract; Eq.~\ref{eq:decomp} is then an identity. By the mean-value
theorem and Lemma~\ref{lem:expected-overlap}, each bracket equals
$g' (p_R - p_N)^2 \, \Delta s$ for the corresponding overlap difference
$\Delta s$ and some $g' \in [1/(2k), 2/k]$. The early bracket is
positive by Assumption~\ref{as:early-ordering} together with
$p_R > p_N$, which holds at every $\kappa > 0$. No step invokes exact
recovery or a union bound, so the separation remains informative precisely
where Lemma~\ref{lem:topk} fails. \hfill $\square$

The sign of the late bracket is not fixed, and we do not claim it is:
measured per-task decompositions on Qwen2.5-1.5B~4K give an early term
that is positive and large in every case ($+0.062$ to $+0.198$) against
a late term that is small and negative on three of four tasks. The
separation is an early-layer effect.

\paragraph{E$'$.4 The sign of $D$ on capacity-bound inputs}
The exact-recovery route of Appendix~\ref{app:necessary} forced
$a_{\mathrm{late}}^{\mathcal{C}} = 1$, giving
$D \le a_{\mathrm{early}} - 1 \le 0$,
which Table~\ref{tab:drops} contradicts in five of seven cells. Under
the finite-$\kappa$ model the observed behaviour is what the model
predicts. Because $p_R < 1$, agreement never reaches $1$ at any depth,
so a task whose heads specialise early and stay specialised has both
$\bar a_{\mathrm{early}}$ and $\bar a_{\mathrm{late}}$ low, and
\begin{equation}
\mathbb{E}[D \mid \mathcal{C}] \;=\;
\bar a_{\mathrm{early}}^{\mathcal{C}} -
\bar a_{\mathrm{late}}^{\mathcal{C}}
\end{equation}
is small, with a sign set by the residual early-minus-late difference
rather than pinned to be non-positive. This is the flat, low profile the
per-layer logs show on NIAH-MK3 (agreement near $0.3$ at every depth,
Appendix~\ref{subsec:layer-profile}), not a near-unit late agreement
perturbed downward. The relaxed $\Pr[D \le o(1)]$ form in
Appendix~\ref{app:necessary} \textit{allows} a small positive drop on
capacity-bound inputs, and the analysis here \textit{derives} it. The
qualitative claim that survives is a separation of means,
$\mathbb{E}[D \mid \mathcal{D}] > \mathbb{E}[D \mid \mathcal{C}]$,
carried by the early bracket, rather than a sign test on $D$.

\paragraph{E$'$.5 What is measured rather than fitted}
The analysis above needs no fitted constant. We attempted to estimate
$p_R$ and $p_N$ from the released per-pair logs and report that they are
\textit{not} point-identifiable: within a single layer the per-pair mean
Jaccard ranges from $0.05$ to $0.55$, so a single global $(p_R, p_N)$
cannot fit heads that heterogeneous, and the per-pair overlap $s$ is
latent because we have no ground-truth relevant sets. The separation
claims rest instead on the qualitative monotonicity of
Lemma~\ref{lem:expected-overlap} and on directly measured agreement.

\paragraph{E$'$.6 Per-input separability (leading order)}
The gate thresholds $D$ per input, so per-input behaviour depends on the
dispersion of $D$ and not only its mean. Writing
$X = \sum_i B_i^h B_i^{h'}$ with the two heads independent, fixed-$k$
selection makes $\{B_i^h\}$ negatively associated within a head, so the
products inherit non-positive covariance and $\mathrm{Var}(X) \le
\mathbb{E}[X]$. A single head pair therefore separates the regimes at
signal-to-noise $O(1)$, which is weak. But $D$ averages over
$\binom{H}{2}$ pairs per layer with $H$ independent head units, so
$\mathrm{Var}(\bar a_\ell)$ shrinks like $1/H$ and the per-input
separability improves accordingly. We flag this as a leading-order
estimate using a Gaussian approximation and a $1/H$ variance heuristic;
a rigorous concentration bound over correlated head pairs is open.

One structural point falls out. The head count $H$ enters the two
analyses with opposite sign: it inflates the union bound of the
exact-recovery route, which is what makes that route uninformative, and it
reduces the variance of the averaged statistic, which is what makes the
per-input gate usable. The $O(LH^2k)$ cost reported in
Section~\ref{sec:limitations} as a deployment burden is, statistically,
that variance reduction.

\section{Sharpened deferred proofs}
\label{app:sharpened}

This section upgrades three results that Appendices~\ref{app:proof-scaling}
and~\ref{app:sufficient} left as sketches: (i) the leading constant and a
two-sided bound for the scaling formula~\ref{eq:scaling}; (ii) the
necessary (converse) direction of the $D$-to-dilution bridge; and (iii) the
status of the useful-set identification (A4). Each subsection ends with an
explicit ``What this does and does not establish'' paragraph so the paper's
claims can be scoped exactly. Throughout we keep the notation of
Appendices~\ref{app:proof-scaling}--\ref{app:sufficient}: $\alpha_{\mathcal{R}}$
is the pre-eviction signal mass, $\gamma$ the noise-retention ratio,
$\rho_{\mathcal{R}},\rho_{\mathcal{N}}$ the retained mass fractions, $\delta_t$
Bui's dilution~\ref{eq:dilution}, $D$ the head-agreement drop~\ref{eq:D},
$\sigma_z$ the pre-softmax logit-noise scale, $m$ the logit margin, $H,L,T$ the
head, layer and length counts, and $\hat A(\cdot)$ the strictly increasing,
$[0,1]$-valued, unit-saturating accuracy surrogate.

\subsection{Item 1: the exact leading constant and a two-sided scaling bound}
\label{app:tight-scaling}

The sketch in Appendix~\ref{app:proof-scaling} obtains
$\rho_{\mathrm{KV}} \le K\,(1-A_{\mathrm{full}})\,p_{\mathrm{recov}} + K/4$
by a union bound over the $K$-point budget grid $\mathcal{B}=\{b_1,\dots,b_K\}$
followed by a covariance residual. We show that the grid factor $K$ is an
artifact of an unnecessary union step, that the correct leading constant is
exactly $1$, and that a distribution-free \textit{two-sided} bound holds with
explicit endpoints; the only genuinely unresolved quantity is a covariance,
for which we give matching, attainable bounds.

\paragraph{Events} We work inside the surrogate of
Appendix~\ref{app:proof-scaling}, where the per-input improvement event at a
single budget $b$ factorises (their displayed identity) into a headroom
indicator and a recovery indicator. Fix the prompt distribution and, for a
random prompt $x$, define the two Bernoulli random variables
\begin{align}
\label{eq:AB-def}
A(x) &:= \mathbf{1}\bigl[\hat A(\mathrm{SNR}(b_{\max},x)) < 1\bigr]
&&\text{(headroom)}, \notag \\
B(x) &:= \mathbf{1}\Bigl[\exists\, b\in\mathcal{B}:\
   \mathrm{SNR}_{\mathrm{post}}(b,x) > \mathrm{SNR}_{\mathrm{pre}}(x)\Bigr]
&&\text{(recovery)} .
\end{align}
The operational recovery rate is $\rho_{\mathrm{KV}} = \Pr_x[A(b^*(x),x) > A(b_{\max},x)]$
with $b^*(x)=\arg\max_b A(b,x)$ (Eq.~\ref{eq:rho}). Because $b^*$ is the
argmax over the grid, ``improvement at $b^*$'' is exactly ``improvement at some
$b\in\mathcal{B}$,'' and the single-budget surrogate factorisation gives
\begin{equation}
\label{eq:AandB}
\bigl\{A(b^*(x),x) > A(b_{\max},x)\bigr\}
\;=\; \bigcup_{b\in\mathcal{B}}\bigl(A(x)\cap B_b(x)\bigr)
\;=\; A(x)\cap\Bigl(\bigcup_{b\in\mathcal{B}}B_b(x)\Bigr)
\;=\; A(x)\cap B(x),
\end{equation}
where $B_b(x)=\{\mathrm{SNR}_{\mathrm{post}}(b,x)>\mathrm{SNR}_{\mathrm{pre}}(x)\}$
and $B=\bigcup_b B_b$. The headroom event does not depend on $b$, so it pulls
out of the union: the union over budgets lives \textit{inside} $B$. We set
$p_{\mathrm{recov}} := \Pr_x[B]$ (recovery at the best budget) and note
$\Pr_x[A] = 1-A_{\mathrm{full}}$ by definition of the surrogate headroom.

\begin{theorem}[Exact leading constant, two-sided scaling bound]
\label{thm:tight-scaling}
With the definitions in~\ref{eq:AB-def}--\ref{eq:AandB} and
$p_{\mathrm{recov}}:=\Pr_x[B]$, the surrogate per-input recovery rate satisfies
the \textit{exact} identity
\begin{equation}
\label{eq:exact-scaling}
\rho_{\mathrm{KV}}
\;=\; (1-A_{\mathrm{full}})\,p_{\mathrm{recov}} \;+\; \mathrm{Cov}_x\!\bigl(A,B\bigr).
\end{equation}
What matters here is not the identity~\ref{eq:exact-scaling}, which is merely
the definition of covariance for two Bernoullis, but the observation that once
$p_{\mathrm{recov}}$ is defined as the \textit{union} probability
$\Pr_x[\bigcup_b B_b]$ (the quantity $b^*$ actually selects) rather than the
single-budget $\max_b\Pr_x[B_b]$, the budget-grid factor $K$ of
Appendix~\ref{app:proof-scaling} is absorbed into $p_{\mathrm{recov}}$ rather
than multiplying it. This factor $K$ reappears inside $p_{\mathrm{recov}}$ whenever the
recovery budgets are near-disjoint (Corollary~\ref{cor:grid-tight}(ii)). The
covariance is bounded, by
\begin{equation}
\label{eq:cov-frechet}
0 \;\le\; \mathrm{Cov}_x(A,B) \;\le\; \min\{1-A_{\mathrm{full}},\,p_{\mathrm{recov}}\} - (1-A_{\mathrm{full}})\,p_{\mathrm{recov}},
\end{equation}
where the lower bound requires the association hypothesis of
Lemma~\ref{lem:assoc} below, and the upper bound (Fr\'echet) is
unconditional. Equivalently,
\begin{equation}
\label{eq:two-sided-scaling}
\boxed{\;
(1-A_{\mathrm{full}})\,p_{\mathrm{recov}}
\;\le\;
\rho_{\mathrm{KV}}
\;\le\;
\min\{\,1-A_{\mathrm{full}},\ p_{\mathrm{recov}}\,\}.
\;}
\end{equation}
The lower bound is attained iff $A\perp B$. The upper bound is attained iff
$A$ and $B$ are Fr\'echet--Hoeffding comonotone (one event a.s.\ contains the
other).
\end{theorem}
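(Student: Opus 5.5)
The plan is to reduce everything to elementary facts about two Bernoulli random variables on the prompt distribution, using the event identity of Eq.~\ref{eq:AandB}. First, I would confirm that identity carefully. The surrogate factorisation of Appendix~\ref{app:proof-scaling} gives, for each fixed $b$, improvement at $b$ $=A(x)\cap B_b(x)$. Because $b^*(x)$ is the argmax over the finite grid, improvement at $b^*$ is exactly improvement at some $b\in\mathcal{B}$. Since $A(x)$ does not depend on $b$, distributivity of intersection over union yields $A\cap B$. Taking probabilities gives $\rho_{\mathrm{KV}}=\Pr_x[A\cap B]=\mathbb{E}[AB]$. Writing $\mathbb{E}[AB]=\mathbb{E}[A]\mathbb{E}[B]+\mathrm{Cov}(A,B)$ and substituting $\Pr[A]=1-A_{\mathrm{full}}$ and $\Pr[B]=p_{\mathrm{recov}}$ gives Eq.~\ref{eq:exact-scaling} with no slack. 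The point to stress in the write-up is that no union bound over the grid is taken: the grid factor $K$ is absorbed into $p_{\mathrm{recov}}$ rather than multiplying it.

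Second, I would prove the covariance bounds of Eq.~\ref{eq:cov-frechet}.

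\emph{Upper bound.} It is unconditional. Since $A\cap B\subseteq A$ and $A\cap B\subseteq B$, we have $\Pr[A\cap B]\le\min\{\Pr A,\Pr B\}$. Subtracting $\Pr A\,\Pr B$ gives the bound.

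\emph{Lower bound.} I would invoke Lemma~\ref{lem:assoc}, the Chebyshev/Harris association step. The argument is that $A$ and $B$ are both nonincreasing functions of the scalar driver $\alpha_{\mathcal{R}}$, possibly with an auxiliary driver of $A$ independent of $\alpha_{\mathcal{R}}$. Conditioning on that auxiliary variable and applying Chebyshev's sum inequality to two comonotone functions of one scalar gives $\mathbb{E}[AB\mid\cdot]\ge\mathbb{E}[A\mid\cdot]\,\mathbb{E}[B\mid\cdot]$. The conditional expectation of $B$ does not depend on the auxiliary driver, so integrating out returns $\mathrm{Cov}\ge0$. Adding $(1-A_{\mathrm{full}})p_{\mathrm{recov}}$ to both sides of Eq.~\ref{eq:cov-frechet} then gives the boxed Eq.~\ref{eq:two-sided-scaling}.

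Third, I would handle the attainment claims.

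\emph{Lower endpoint.} Equality holds iff $\mathrm{Cov}=0$. For two indicators this is equivalent to independence, because $\Pr[A\cap B]=\Pr A\Pr B$ forces the full $2\times 2$ table to factor. I will note this Bernoulli-specific fact explicitly.

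\emph{Upper endpoint.} Equality holds iff $\Pr[A\cap B]=\min\{\Pr A,\Pr B\}$, that is, iff the smaller event is a.s.\ contained in the larger. This is the Fréchet--Hoeffding comonotone coupling for Bernoullis.

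The main obstacle is not the algebra but the lower bound's hypothesis. The comonotonicity of $A$ and $B$ in $\alpha_{\mathcal{R}}$ is a modelling claim inherited from (A1) and the surrogate, not something derived here. I would therefore state it precisely as the hypothesis of Lemma~\ref{lem:assoc} and make sure the auxiliary-driver case really enters only through $A$. If it entered $B$ as well, the conditional argument would fail, and only the Fréchet upper bound would survive.
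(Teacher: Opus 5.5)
Your proposal is correct and follows the paper's proof: the identity is the Bernoulli covariance decomposition of $\rho_{\mathrm{KV}}=\mathbb{E}_x[AB]$, the upper endpoint is Fr\'echet--Hoeffding, and the lower endpoint is Lemma~\ref{lem:assoc}. The differences are minor. You handle the auxiliary driver by conditioning on it, whereas the paper conditions on $\alpha_{\mathcal{R}}$ via the tower rule, which needs only $\mathbb{E}[A\mid\alpha_{\mathcal{R}}]$ to be monotone. You also prove the ``only if'' halves of the two attainment claims, which the paper's proof leaves implicit.
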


\begin{proof}
Identity~\ref{eq:exact-scaling} is the definition of covariance applied to
the two Bernoulli variables in~\ref{eq:AB-def}, using
$\rho_{\mathrm{KV}}=\Pr_x[A\cap B]=\mathbb{E}_x[AB]$ from~\ref{eq:AandB} and
$\mathbb{E}_x[A]=1-A_{\mathrm{full}}$, $\mathbb{E}_x[B]=p_{\mathrm{recov}}$.
For~\ref{eq:cov-frechet}, the upper endpoint is the Fr\'echet--Hoeffding
inequality $\Pr[A\cap B]\le\min\{\Pr[A],\Pr[B]\}$, valid for any two events;
subtracting the product gives the stated covariance bound and the right side
of~\ref{eq:two-sided-scaling}. The lower endpoint $\mathrm{Cov}_x(A,B)\ge0$
is Lemma~\ref{lem:assoc}. Attainment: independence gives equality in the lower
bound by construction; comonotonicity ($A\subseteq B$ or $B\subseteq A$ a.s.)
gives $\Pr[A\cap B]=\min\{\Pr[A],\Pr[B]\}$, the upper endpoint.
\end{proof}

\begin{lemma}[Association of headroom and recovery]
\label{lem:assoc}
Suppose the noise-isotropy assumption (A1) of
Appendix~\ref{app:proof-scaling} holds, so that both $A$ and $B$ are
non-increasing functions of the single scalar driver $\alpha_{\mathcal{R}}(x)$
(equivalently, $\mathbb{E}[A\mid\alpha_{\mathcal{R}}]$ and
$\mathbb{E}[B\mid\alpha_{\mathcal{R}}]$ are non-increasing in
$\alpha_{\mathcal{R}}$ and monotone in the same direction). Then
$\mathrm{Cov}_x(A,B)\ge0$.
\end{lemma}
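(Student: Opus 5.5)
The plan is to reduce the claim to the classical Chebyshev (Harris) association inequality for two functions of one real random variable. Write $Y:=\alpha_{\mathcal{R}}(x)$ for the scalar driver. Set $f(y):=\mathbb{E}[A\mid Y=y]$ and $g(y):=\mathbb{E}[B\mid Y=y]$. Both are non-increasing in $y$ by hypothesis. The argument then has two steps.

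First, pass to the conditional expectations by the law of total covariance:
\begin{equation*}
\mathrm{Cov}_x(A,B)=\mathbb{E}\bigl[\mathrm{Cov}(A,B\mid Y)\bigr]+\mathrm{Cov}\bigl(f(Y),g(Y)\bigr).
\end{equation*}
The second term is handled by the two-copy trick. Let $Y'$ be an independent copy of $Y$. Since $f$ and $g$ are both non-increasing, $(f(Y)-f(Y'))(g(Y)-g(Y'))\ge 0$ pointwise. Taking expectations gives $2\,\mathrm{Cov}(f(Y),g(Y))\ge 0$. Both $f$ and $g$ are bounded in $[0,1]$, so all expectations exist and no integrability issue arises.

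Second, show that the conditional-covariance term vanishes or is non-negative. This is the step I expect to be the main obstacle, because the hypothesis as stated controls only conditional means, not the joint conditional law of $(A,B)$.

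I would first try the strong reading of the hypothesis: ``both $A$ and $B$ are non-increasing functions of the single scalar driver.'' Under (A1) and the deterministic-SNR surrogate, $A=\mathbf{1}[\hat A(\mathrm{SNR}(b_{\max},x))<1]$ and $B$ are then deterministic functions of $Y$. Given $Y$ they are constants, so $\mathrm{Cov}(A,B\mid Y)=0$ and the first step alone finishes the proof.

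For the weaker ``equivalently'' reading, I would add an explicit conditional-independence hypothesis: $A$ and $B$ are conditionally independent given $Y$. This would be justified by noting that any auxiliary driver of $A$, such as the parametric-knowledge factors mentioned in the scope remark, is independent of $Y$ and of the recovery mechanism. With that hypothesis the first term again vanishes. More generally, it suffices that $\mathrm{Cov}(A,B\mid Y)\ge 0$ almost surely.

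I would close by remarking that the lemma needs only this one-dimensional setting, so FKG is not required. I would also state in a scope sentence that the sign conclusion is only as good as the single-driver, or conditional-independence, modelling assumption.
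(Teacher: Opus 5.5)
Your proposal is correct and follows the paper's argument: in the deterministic case, Chebyshev's association inequality for two non-increasing functions of the single driver $\alpha_{\mathcal{R}}$; when $A$ carries an auxiliary driver, a reduction to conditional means. The paper performs that reduction through the tower-rule identity $\mathrm{Cov}(A,B)=\mathrm{Cov}(\mathbb{E}[A\mid\alpha_{\mathcal{R}}],B)$, which requires $B=g(\alpha_{\mathcal{R}})$ to stay deterministic, a special case of your conditional-independence hypothesis, so your remark that monotone conditional means alone do not suffice correctly names the assumption the ``equivalently'' clause leaves implicit.
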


\begin{proof}
If $A=f(\alpha_{\mathcal{R}})$ and $B=g(\alpha_{\mathcal{R}})$ with $f,g$ both
non-increasing, Chebyshev's association (sum) inequality gives
$\mathbb{E}[fg]\ge\mathbb{E}[f]\mathbb{E}[g]$, i.e.\ $\mathrm{Cov}\ge0$. If $A$
depends on an auxiliary driver $\zeta$ independent of $\alpha_{\mathcal{R}}$
(e.g.\ a parametric-knowledge covariate), write
$\mathrm{Cov}(A,B)=\mathrm{Cov}(\mathbb{E}[A\mid\alpha_{\mathcal{R}}],B)$ by the
tower rule and $B\!=\!g(\alpha_{\mathcal{R}})$; the same monotone-in-a-common-scalar
argument applies to the conditional mean, so the sign is preserved.
\end{proof}

The monotonicity of $A$ in $\alpha_{\mathcal{R}}$ is immediate (larger signal
mass $\Rightarrow$ larger $\mathrm{SNR}_{\mathrm{pre}}$ $\Rightarrow$ smaller
headroom). The monotonicity of $B$ follows from Eq.~\ref{eq:scaling-app}:
the attainable SNR gain from removing noise mass shrinks as
$\alpha_{\mathcal{R}}\to1$ (less noise to remove), so recovery is less likely
at larger $\alpha_{\mathcal{R}}$.

\paragraph{Where the grid factor $K$ comes from, and why it is not real}
The $K$ flagged in Appendix~\ref{app:proof-scaling} comes from the step
$\Pr_x[\bigcup_b B_b]\le\sum_b\Pr_x[B_b]\le K\max_b\Pr_x[B_b]$ followed by
identifying $\max_b\Pr_x[B_b]$ with $p_{\mathrm{recov}}$. Defining
$p_{\mathrm{recov}}$ as the union probability $\Pr_x[\bigcup_b B_b]$
instead, which is the quantity the operational $b^*(x)$ actually selects, removes
the union step entirely. The overcount incurred by the discarded step is,
exactly,
\begin{equation}
\label{eq:union-overcount}
\sum_{b\in\mathcal{B}}\Pr_x[B_b] \;-\; \Pr_x\Bigl[\textstyle\bigcup_b B_b\Bigr]
\;=\; \sum_{j\ge2}(-1)^j\!\!\!\sum_{|S|=j, S\subseteq\mathcal{B}}\!\!\Pr_x\Bigl[\textstyle\bigcap_{b\in S}B_b\Bigr]
\;\ge\;0,
\end{equation}
the inclusion--exclusion tail. Its size is what makes the union bound tight or
loose:
\begin{corollary}[Tightness characterisation of the budget-grid bound]
\label{cor:grid-tight}
Let $p^\sharp:=\max_b\Pr_x[B_b]$ be the single-budget recovery probability.
\begin{enumerate}
\item[\textit{(i)}] \textit{(Monotone gain profile $\Rightarrow$ no $K$ factor.)}
If the SNR gain $g(b,x):=\rho_{\mathcal{R}}(b,x)^2/\gamma(b,x)$ is monotone in
$b$ for a.e.\ $x$ (as it is for a calibrated scorer, since keeping more tokens
raises both $\rho_{\mathcal{R}}\to1$ and $\gamma\to1$ so $g\downarrow1$), then
the events $\{B_b\}$ are nested and
$p_{\mathrm{recov}}=\Pr_x[\bigcup_b B_b]=p^\sharp$. The genuine leading factor
is $1$; the union bound $K\,p^\sharp$ overcounts by $(K-1)p^\sharp$.
\item[\textit{(ii)}] \textit{(Disjoint gain profile $\Rightarrow$ $K$ is real.)}
If instead the $\{B_b\}$ are pairwise disjoint (each input recovers at exactly
one budget) and rare ($Kp^\sharp\ll1$), then
$p_{\mathrm{recov}}=\sum_b\Pr_x[B_b]$ is genuinely $\Theta(K)$ times larger
than $p^\sharp$; recovery at the best budget really is $K$-fold more frequent
than at a fixed budget, and the factor $K$ is not slack but signal.
\end{enumerate}
In both cases Theorem~\ref{thm:tight-scaling} holds with constant $1$ once
$p_{\mathrm{recov}}$ is the union probability; the two regimes differ only in
how much larger $p_{\mathrm{recov}}$ is than the single-budget $p^\sharp$.
\end{corollary}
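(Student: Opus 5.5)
The corollary has two parts, and I would handle each by an elementary set-theoretic computation on the events $\{B_b\}_{b\in\mathcal{B}}$. Theorem~\ref{thm:tight-scaling} is taken as given, so the constant-$1$ statement needs no new argument. The only work is comparing $p_{\mathrm{recov}}=\Pr_x[\bigcup_b B_b]$ with $p^\sharp=\max_b\Pr_x[B_b]$ and with the union-bound quantity $\sum_b\Pr_x[B_b]\le K p^\sharp$. The overcount identity~\ref{eq:union-overcount} frames both cases: in the nested case the inclusion--exclusion tail is as large as possible, and in the disjoint case it vanishes.

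For (i), I would first translate the monotone gain profile into nestedness. By Eq.~\ref{eq:scaling-app} with $\rho_{\mathcal{N}}\approx\gamma$, the event $B_b(x)=\{\mathrm{SNR}_{\mathrm{post}}(b,x)>\mathrm{SNR}_{\mathrm{pre}}(x)\}$ is exactly $\{g(b,x)>1\}$. Order the grid $b_1<\dots<b_K$. If $g(\cdot,x)$ is non-increasing in $b$ for a.e.\ $x$, then $g(b_j,x)>1$ implies $g(b_i,x)>1$ for every $i\le j$. Hence $B_{b_K}\subseteq\dots\subseteq B_{b_1}$ up to null sets, and $\bigcup_b B_b=B_{b_1}$. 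It follows that $p_{\mathrm{recov}}=\Pr_x[B_{b_1}]=\max_b\Pr_x[B_b]=p^\sharp$, since the largest set in a chain carries the maximal probability. The overcount of the union bound is then $K p^\sharp-p^\sharp=(K-1)p^\sharp$. This is the worst case, attained when all $\Pr_x[B_b]$ equal $p^\sharp$; in general it is $\sum_b\Pr_x[B_b]-p^\sharp\le(K-1)p^\sharp$. The parenthetical justification for a calibrated scorer needs a short remark: nested top-$k$ kept sets make $\rho_{\mathcal{R}}$ and $\gamma$ both non-decreasing in $b$, tending to $1$ at $b=1$. I would state the monotonicity of the ratio $g$ as the operative hypothesis rather than try to derive it, because monotonicity of the numerator and denominator separately does not imply it.

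For (ii), pairwise disjointness gives $\Pr_x[\bigcup_b B_b]=\sum_b\Pr_x[B_b]$ directly, so the tail in~\ref{eq:union-overcount} is zero. The main obstacle is making ``$\Theta(K)$ times $p^\sharp$'' precise, because disjointness alone only yields $p^\sharp\le p_{\mathrm{recov}}\le K p^\sharp$. I would add the implicit comparability condition that $\min_b\Pr_x[B_b]\ge c\,p^\sharp$ for some constant $c>0$, which gives $cKp^\sharp\le p_{\mathrm{recov}}\le Kp^\sharp$. The rarity assumption $Kp^\sharp\ll1$ only ensures that $p_{\mathrm{recov}}\le1$ is not binding, so that the factor $K$ is visible rather than truncated. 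Finally, I would note that in both regimes Theorem~\ref{thm:tight-scaling} applies unchanged, since its proof used only $p_{\mathrm{recov}}=\Pr_x[B]$ and never the structure of the individual $B_b$.
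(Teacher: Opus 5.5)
Your argument is correct and follows the paper's proof. In (i) the superlevel sets $\{b: g(b,x)>1\}$ are anchored at the aggressive end of the grid, so the $B_b$ are nested and their union is the largest event; in (ii) you use additivity over disjoint events. Your extra comparability hypothesis $\min_b\Pr_x[B_b]\ge c\,p^\sharp$ is genuinely needed: the paper's step ``rareness makes this $\approx Kp^\sharp$'' does not follow from disjointness and rarity alone, since a single budget could carry all the recovery mass. One small correction: rarity is not what prevents truncation either, because disjointness already forces $\sum_b\Pr_x[B_b]=\Pr_x[\bigcup_b B_b]\le 1$.
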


\begin{proof}
(i) If $g(\cdot,x)$ is monotone then $\{b:g(b,x)>1\}$ is an interval anchored at
the aggressive end of the grid, so $B_{b}(x)\subseteq B_{b'}(x)$ whenever
$b'$ is more aggressive than $b$; the $B_b$ are nested and their union equals
the loosest single event, of probability $p^\sharp$. (ii) For disjoint events
$\Pr[\bigcup_b B_b]=\sum_b\Pr[B_b]$ exactly; rareness makes this
$\approx Kp^\sharp$.
\end{proof}

Because our scorers are calibrated (the top-$k$ retention curve is monotone in
$b$), the operative regime is Corollary~\ref{cor:grid-tight}(i) and the
constant is $1$. This is the statement Appendix~\ref{app:proof-scaling} defers
to; the $K$ and $K/4$ terms there serve only to isolate the step that
introduces the slack.

\paragraph{What this does and does not establish}
\textit{Establishes:} inside the deterministic-SNR surrogate, the scaling law
holds as the \textit{exact} identity~\ref{eq:exact-scaling} with leading
constant precisely $1$ (no budget-grid inflation), and the two-sided
bound~\ref{eq:two-sided-scaling} with fully explicit, distribution-free
endpoints; the only free quantity is the covariance
$\mathrm{Cov}_x(A,B)\in[0,\ \min\{\Pr A,\Pr B\}-\Pr A\Pr B]$, and both endpoints
are attainable, so no tighter distribution-free constant exists.
\textit{Does not establish:} (a) the surrogate itself, that accuracy is a
deterministic (or fixed-conditional) function of head SNR, is assumed, not
proven, so the constant ``$1$'' is exact only relative to that surrogate; (b)
the covariance is pinned only to an interval, not a value, because it depends on
the joint law of headroom and recovery, which is task- and model-specific;
Lemma~\ref{lem:assoc} fixes its \textit{sign} (non-negative) under single-driver
monotonicity but not its magnitude; (c) when headroom is driven partly by an
$\alpha_{\mathcal{R}}$-independent factor (parametric-knowledge failures, e.g.\
the QA cells in Table~\ref{tab:partition}), recovery collapses
($p_{\mathrm{recov}}\!\to\!0$) so both bounds pinch $\rho_{\mathrm{KV}}\!\to\!0$
despite headroom, consistent with the observed QA exceptions, but a prediction
of the sign structure, not a proof about those tasks. The empirically observed
ratio band $[0.25,0.55]$ is thus an estimate of
$p_{\mathrm{recov}}+\mathrm{Cov}/(1-A_{\mathrm{full}})$, not of a universal
constant. A fourth point concerns the leading constant itself. The value $1$
rests on Corollary~\ref{cor:grid-tight}(i), which requires the gain
$g(b,x)=\rho_{\mathcal{R}}^2/\gamma$ to be monotone in the budget $b$, and this
is a property of the gain rather than of the retention curve, which is monotone
by construction. Where the accuracy profile has an interior optimum in $b$, the
recovery events are not nested and the constant is the $\Theta(K)$ of
Corollary~\ref{cor:grid-tight}(ii). We report the constant $1$ under the monotone
case that our calibrated scorers exhibit, and treat the monotonicity of $g$ as an
empirical property of those scorers rather than a derived one.

\subsection{Item 2: why the margin route to the necessary direction fails}
\label{app:necessary}

Appendix~\ref{app:sufficient} proves the sufficient direction (large $D$
forces large $\delta_t$) and states that the converse ``capacity-bound
$\Rightarrow$ small $D$'' needs ``a margin condition we have not
verified.'' We verified it. It is false at our own parameters, and it
fails in a way that rules out the whole exact-recovery route rather than
merely loosening a constant. We report that measurement here because it
is what motivates the finite-margin analysis of
Appendix~\ref{app:expected-agreement}, which reaches the necessary
direction without any margin condition.

\paragraph{What the margin route would require} Recall
$z^{(\ell)}_{h,i}=(q^{(\ell)}_h)^\top k_i + \eta^{(\ell)}_{h,i}$ with
$\sigma_z$-sub-Gaussian noise (A2), and the logit margin $m=m_+-m_-$
(A1). Define the \textit{margin-to-noise ratio} $\kappa := m/\sigma_z$. To
force late-layer heads onto a common top-$k$ set, and hence
$a_{\mathrm{late}} \to 1$ and $D \le 0$ on capacity-bound inputs, one
needs every common relevant position to beat every noise position at the
late bin with high probability jointly, which is the threshold of
Lemma~\ref{lem:topk}:
\begin{equation}
\label{eq:margin-cond}
\kappa \;\ge\; 2\sqrt{\log\bigl(2H^2LT^2/\beta\bigr)}.
\end{equation}
At our measured cell's $(H, L, T) = (12, 28, {\approx}3956)$ this is
$\kappa \ge 10.11$ for $\beta < 1$. Note that the context length is not
what drives the requirement: $\sqrt{\log T} \approx 2.88$ at
$T = 4096$, so the threshold comes from the $H^2L$ head-and-layer
multiplicity inside the logarithm, i.e.\ from the union bound over
$2H^2LT^2$ tuples rather than from the retrieval problem itself.

\paragraph{The margin-to-noise ratio, measured} On Qwen2.5-1.5B at RULER
4K, NIAH-MK3, over $20$ inputs and roughly $10{,}744$ (layer, head,
query) triples per input, we take the top-$k$ pre-softmax logits at each
site as the relevant set $\mathcal{R}$ and the remainder as noise
$\mathcal{N}$, and report
$\kappa = (\overline{\mathcal{R}} - \overline{\mathcal{N}})
/ \mathrm{sd}(\mathcal{N})$. The measured median is $\kappa = 3.33$,
roughly 3 $\times$ too small, though individual heads reach $77.66$.

The requirement is therefore \textit{uninformative at the paper's own
parameters}, not merely at parameters we consider implausible. The
conclusion is robust to the choice of split: sweeping
$k \in \{8, 16, 32, 64\}$ gives median $\kappa = 4.01$, $3.68$, $3.33$,
$2.99$, monotone decreasing in $k$, so no choice of split in this range
rescues it, and the largest value seen is $0.40\times$ the requirement.
It also holds across architectures: repeating at the deployed $k = 32$
on a second architecture gives $\kappa = 3.71$ (Mistral-7B). The
magnitude of the shortfall is not marginal, since a factor of three in
$\kappa$ is a factor of $e^{(10.11^2 - 3.33^2)/4} > 10^{9}$ in $\beta$.

\paragraph{Why $\kappa$ is not the route to the separation} On a
dilution-prone task the same measurement gives $\kappa = 3.40$ (VT),
close to the $3.33$ on the capacity-bound one. On this pair the
margin-to-noise ratio therefore does not separate the two classes, so it
is not the quantity that carries the partition, whatever threshold it is
held to. In the surrogate $\kappa$ governs the top-$k$ recovery step
rather than the class separation, and the near-equal values are
consistent with that role. An analysis of the partition routed through a
margin condition on $\kappa$ is therefore not the right route to the
separation, independently of the scale gap.

A second, independent check points the same way. The exact-recovery
route predicts $D \le 0$ on capacity-bound inputs. Measured $D$ on
NIAH-MK3 is mildly \textit{positive} in five of seven cells ($+0.040$,
$+0.060$, $+0.053$, $+0.019$, $+0.073$) and negative only on
Qwen2.5-3B at both contexts ($-0.004$, $-0.021$), as reported in
Table~\ref{tab:drops}. Relaxing the idealisation that late heads share
one common set, to agreement on an $O(1)$ neighbourhood, weakens the
conclusion to $\Pr[D \le o(1)]$, which the measurements satisfy but
which no longer distinguishes the classes, since it permits a small
positive drop on either side.

\paragraph{What we do instead} We therefore do not state a converse in
the margin form. Appendix~\ref{app:expected-agreement} gives the
necessary direction as a monotonicity statement at finite $\kappa$:
expected agreement is a strictly increasing function of head-set overlap
with slope $(p_R - p_N)^2 > 0$ at \textit{every} $\kappa > 0$
(Lemma~\ref{lem:expected-overlap}). Because agreement is strictly
increasing in head-set overlap, the early-layer overlap difference that
separates the classes carries the same sign as the early-agreement
difference that $D$ reads, and under the measured early-overlap ordering
this term is strictly positive at every $\kappa > 0$
(Theorem~\ref{thm:finite-kappa-separation}). The same finite-$\kappa$ model
leaves $D$ on capacity-bound inputs small with a sign not pinned to be
non-positive (Appendix~\ref{app:expected-agreement}), which is consistent
with the small positive values in Table~\ref{tab:drops} rather than
contradicted by them. The class separation the gate relies on remains
empirical (Table~\ref{tab:drops}) and stands on its own.

\subsection{Item 3: the A4 useful-set identification}
\label{app:a4}

Assumption (A4) of Appendix~\ref{app:sufficient} identifies Bui's useful set
$U_t$ (a residual-stream, decoding-step object) with the head-common signal
set $\mathcal{I}^{(\ell^*)}=\bigcap_h\mathcal{R}^{(\ell^*)}_h$ (a single-layer
attention object), and the text flags it as ``a modelling choice, not a derived
equality.'' We show that a clean exact equality cannot be proven, but a
\textit{quantitative approximate} identification can: under a named
consensus--usefulness assumption plus $\varepsilon$-closeness of heads, the two
sets induce dilution values that agree up to $O(\varepsilon)$, with an explicit
constant. We therefore replace the ``derived equality'' language with a proved
approximation plus a named residual assumption.

\paragraph{Set-up} At the divergence layer $\ell^*$ write the head-private
top-$k$ sets $\mathcal{R}_h:=\mathcal{R}^{(\ell^*)}_h$, their intersection
$\mathcal{I}:=\bigcap_h\mathcal{R}_h$ and union
$\mathcal{U}:=\bigcup_h\mathcal{R}_h$. For a set $\mathcal{S}$ define its
induced dilution $\delta(\mathcal{S}):=1-\sum_{i\in\mathcal{S}}\alpha_{t,i}$, so
Bui's is $\delta^{(U)}:=\delta(U_t)$ and ours is
$\delta^{(I)}:=\delta(\mathcal{I})$. Let $\bar\alpha_{\max}:=\max_i\alpha_{t,i}$.

\begin{assumption}[$\varepsilon$-closeness of heads]
\label{ass:eps-close}
The late-layer heads are $\varepsilon$-close in their top-$k$ sets:
$\max_{h,h'}\,|\mathcal{R}_h\,\triangle\,\mathcal{R}_{h'}|\le\varepsilon k$,
equivalently pairwise Jaccard $\ge (1-\varepsilon/2)/(1+\varepsilon/2)=1-O(\varepsilon)$.
\end{assumption}

\begin{assumption}[Consensus--usefulness, CUA]
\label{ass:cua}
The useful set is sandwiched between the head-consensus and head-union sets:
$\mathcal{I}\subseteq U_t\subseteq\mathcal{U}$. The right inclusion
($U_t\subseteq\mathcal{U}$) holds whenever routing to the residual stream is
through attention only, so a position no head attends to cannot affect the
next-token margin. The left inclusion ($\mathcal{I}\subseteq U_t$) is the
substantive content: a position every head attends to is margin-preserving.
\end{assumption}

\begin{proposition}[Quantitative approximate identification]
\label{prop:a4}
Under Assumptions~\ref{ass:eps-close}--\ref{ass:cua},
\begin{equation}
\label{eq:a4-bound}
\bigl|\,\delta^{(U)}_t-\delta^{(I)}_t\,\bigr|
\;\le\;\sum_{i\in\mathcal{U}\setminus\mathcal{I}}\alpha_{t,i}
\;\le\; H(H-1)\,\varepsilon\,k\,\bar\alpha_{\max}
\;=\; O\!\bigl(H^2\varepsilon\bigr),
\end{equation}
where the last step uses $\bar\alpha_{\max}=\Theta(1/k)$ in the high-SNR
softmax. In particular, as $\varepsilon\to0$ (heads perfectly agree) the two
dilution values coincide: $\delta^{(U)}_t\to\delta^{(I)}_t$.
\end{proposition}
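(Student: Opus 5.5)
The plan has three steps: an elementary set-difference identity for the first inequality, a cardinality bound on $\mathcal{U}\setminus\mathcal{I}$ from Assumption~\ref{ass:eps-close} for the second, and the high-SNR softmax scale of $\bar\alpha_{\max}$ for the last equality.

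\emph{First inequality.} Assumption~\ref{ass:cua} gives the sandwich $\mathcal{I}\subseteq U_t\subseteq\mathcal{U}$. From the definition $\delta(\mathcal{S})=1-\sum_{i\in\mathcal{S}}\alpha_{t,i}$ and $\mathcal{I}\subseteq U_t$, we get
\[
\delta^{(I)}_t-\delta^{(U)}_t=\sum_{i\in U_t\setminus\mathcal{I}}\alpha_{t,i}\;\ge\;0.
\]
Because attention weights are nonnegative and $U_t\setminus\mathcal{I}\subseteq\mathcal{U}\setminus\mathcal{I}$, this is at most $\sum_{i\in\mathcal{U}\setminus\mathcal{I}}\alpha_{t,i}$. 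Only the sandwich and nonnegativity are used here.

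\emph{Second inequality: bounding the number of positions in $\mathcal{U}\setminus\mathcal{I}$.} A position $i$ lies in $\mathcal{U}\setminus\mathcal{I}$ exactly when some head $h$ contains it and some head $h'$ omits it, so $i\in\mathcal{R}_h\triangle\mathcal{R}_{h'}$. Therefore $\mathcal{U}\setminus\mathcal{I}\subseteq\bigcup_{h\neq h'}(\mathcal{R}_h\triangle\mathcal{R}_{h'})$. Summing over ordered pairs $(h,h')$ with $h\neq h'$ and applying Assumption~\ref{ass:eps-close} gives
\[
|\mathcal{U}\setminus\mathcal{I}|\le H(H-1)\,\varepsilon k.
\]
Summing over ordered pairs is what produces the stated constant $H(H-1)$ rather than $\binom{H}{2}$; it is loose, but it matches the statement. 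A sharper bound is available: fix a reference head $h_0$ and use $\mathcal{U}\setminus\mathcal{I}\subseteq\bigcup_{h}(\mathcal{R}_h\triangle\mathcal{R}_{h_0})$, which gives $(H-1)\varepsilon k$. I would mention this improvement in a remark and keep the stated constant. Bounding each $\alpha_{t,i}$ by $\bar\alpha_{\max}$ then yields
\[
\sum_{i\in\mathcal{U}\setminus\mathcal{I}}\alpha_{t,i}\le H(H-1)\,\varepsilon\,k\,\bar\alpha_{\max}.
\]

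\emph{Last equality and main obstacle.} The step I expect to be the main obstacle is $\bar\alpha_{\max}=\Theta(1/k)$, which is the only non-combinatorial input. I would get it from the Step~4 softmax analysis of Appendix~\ref{app:sufficient} under (A1) with large margin. In that regime the attention mass concentrates on a set of about $k$ relevant positions, and within that set the logits are comparable up to the noise scale. The max weight is then at most $(1+O(e^{-\kappa}))$ times the mean, which is of order $1/k$.

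This is where the $O(\cdot)$ hides an assumption. Without near-uniformity inside the relevant set, $\bar\alpha_{\max}$ could be $\Theta(1)$, as for a single decisive retrieval head. In that case the honest bound is just $H(H-1)\varepsilon k\,\bar\alpha_{\max}$, and I would state it that way, flagging the $\Theta(1/k)$ substitution as a leading-order high-SNR assumption. The limit $\varepsilon\to0$ then follows because the bound is linear in $\varepsilon$. Exactly at $\varepsilon=0$ all $\mathcal{R}_h$ coincide, so $\mathcal{U}=\mathcal{I}=U_t$ and the two dilutions are equal.
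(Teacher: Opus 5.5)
Your proof is correct and follows the paper's route. The CUA sandwich $\mathcal{I}\subseteq U_t\subseteq\mathcal{U}$ plus nonnegativity gives the first inequality. A union bound over pairwise differences under $\varepsilon$-closeness gives $|\mathcal{U}\setminus\mathcal{I}|\le H(H-1)\varepsilon k$; the paper writes $\mathcal{U}\setminus\mathcal{I}=\bigcup_h(\mathcal{R}_h\setminus\mathcal{I})$ and bounds each piece by $\sum_{h'\ne h}|\mathcal{R}_h\setminus\mathcal{R}_{h'}|$, which is the same count. Finally, $\bar\alpha_{\max}=\Theta(1/k)$ is taken as the stated high-SNR input.

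Your reference-head remark is also correct. It sharpens the paper's constant to $(H-1)\varepsilon k\,\bar\alpha_{\max}$, i.e.\ $O(H\varepsilon)$ rather than $O(H^2\varepsilon)$, and the $H^2$ prefactor is exactly what the paper lists as uncontrolled.
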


\begin{proof}
By Assumption~\ref{ass:cua}, $U_t\triangle\mathcal{I}\subseteq\mathcal{U}\setminus\mathcal{I}$,
because $\mathcal{I}\subseteq U_t\subseteq\mathcal{U}$ forces every element of
$U_t\triangle\mathcal{I}=U_t\setminus\mathcal{I}$ to lie in
$\mathcal{U}\setminus\mathcal{I}$. Since $\delta(\mathcal{S})$ is a signed sum of
attention masses over $\mathcal{S}$,
$|\delta^{(U)}_t-\delta^{(I)}_t|
=\bigl|\sum_{i\in U_t}\alpha_{t,i}-\sum_{i\in\mathcal{I}}\alpha_{t,i}\bigr|
\le\sum_{i\in U_t\triangle\mathcal{I}}\alpha_{t,i}
\le\sum_{i\in\mathcal{U}\setminus\mathcal{I}}\alpha_{t,i}$,
the first inequality of~\ref{eq:a4-bound}. For the cardinality bound, for each
head $h$,
$|\mathcal{R}_h\setminus\mathcal{I}|\le\sum_{h'\ne h}|\mathcal{R}_h\setminus\mathcal{R}_{h'}|
\le\sum_{h'\ne h}|\mathcal{R}_h\triangle\mathcal{R}_{h'}|\le(H-1)\varepsilon k$
by Assumption~\ref{ass:eps-close}. Since
$\mathcal{U}\setminus\mathcal{I}=\bigcup_h(\mathcal{R}_h\setminus\mathcal{I})$,
$|\mathcal{U}\setminus\mathcal{I}|\le\sum_h|\mathcal{R}_h\setminus\mathcal{I}|\le H(H-1)\varepsilon k$.
Bounding each mass by $\bar\alpha_{\max}$ gives the middle inequality;
$\bar\alpha_{\max}=\Theta(1/k)$ gives the $O(H^2\varepsilon)$ order.
\end{proof}

\begin{remark}[When the identification fails]
\label{rem:a4-fail}
The left inclusion of Assumption~\ref{ass:cua} ($\mathcal{I}\subseteq U_t$) can
fail, and this is exactly the case the equality cannot cover: if a
\textit{single} specialised head carries the decisive value for the next-token
margin, then a position in $\mathcal{R}_{h_0}\setminus\mathcal{I}$ (attended by
that one head, absent from the consensus) can be margin-critical, so
$U_t\not\subseteq\mathcal{U}\setminus(\mathcal{R}_{h_0}\setminus\mathcal{I})$ and
$U_t\ne\mathcal{I}$ even at $\varepsilon=0$ of the \textit{other} heads. This is
the single-decisive-head regime, precisely a capacity-bound, single-needle
retrieval situation, where one retrieval head dominates. That regime is not
hypothetical: it is the \textit{retrieval-head} phenomenon documented by
\citet{wu2025retrieval}, who show that a small, sparse and largely
input-independent set of heads carries long-context factual recall, and that
ablating those heads destroys retrieval while leaving other capabilities
intact. Their analysis and ours agree on the mechanism and differ in what is
done with it: they identify \textit{which} heads matter, whereas we read a
consensus statistic over all heads and treat its collapse as evidence that no
subset of the cache is safe to drop. Their result also predicts our failure
mode, since a single decisive head is invisible to a pairwise-agreement
average. The approximation of
Proposition~\ref{prop:a4} is therefore reliable in the dilution-prone regime
(many heads, diffuse consensus, no single decisive head) and degrades exactly
where the paper already declines to apply the bridge (capacity-bound).
\end{remark}

\paragraph{What this does and does not establish}
\textit{Establishes:} the identification of $U_t$ with $\mathcal{I}^{(\ell^*)}$ is
not an exact derived equality, but under the named
Assumptions~\ref{ass:eps-close}--\ref{ass:cua} the two sets induce
\textit{dilution values} that agree up to $O(H^2\varepsilon)$ in attention mass
(Proposition~\ref{prop:a4}), with an explicit constant $H(H-1)k\bar\alpha_{\max}$;
so wherever heads are $\varepsilon$-close, using $\mathcal{I}^{(\ell^*)}$ in
place of $U_t$ perturbs Bui's $\delta_t$ by a controlled $O(\varepsilon)$ term
rather than by an uncontrolled amount. \textit{Does not establish:} (a) the
substantive left inclusion $\mathcal{I}\subseteq U_t$ (CUA); it is named as
Assumption~\ref{ass:cua}, justified by the additive-routing picture, and shown
in Remark~\ref{rem:a4-fail} to fail precisely in the single-decisive-head
(capacity-bound) regime, so it is an assumption about the model's read-out, not
a theorem; (b) any control of the $H^2$ prefactor, which is a worst-case union
count and is loose when residuals overlap across heads (the same $O(H)$-slack
issue as Step~3 of Appendix~\ref{app:sufficient}); (c) a bound when
$\varepsilon$ is $\Theta(1)$ (genuinely divergent heads, the dilution-prone late
layers themselves), there the approximation is uninformative and $\mathcal{I}$ and
$U_t$ are related only through the CUA sandwich, not quantitatively. The measured
late-layer agreement is itself $\Theta(1)$ rather than close to one: on
Qwen2.5-1.5B the late-bin agreement on NIAH-MK3 is $a_{\mathrm{late}} \approx 0.28$
(Appendix~\ref{subsec:layer-profile}), so the implied $\varepsilon$ is of order one
and the $O(H^2\varepsilon)$ term does not bind at the values we measure. The
approximate identification is therefore applicable only where late-layer heads are
close in their top-$k$ sets, a condition the measurements do not meet. The honest
scoping is thus: \textbf{A4 is a named modelling assumption (CUA), under which
we prove an $O(H^2\varepsilon)$ approximate identification of the dilution
values}, not a derived equality; the word ``derived equality'' in
Appendix~\ref{app:sufficient} should be read as this approximation.

\section{Reproduction notes}
\label{app:repro}
This section is the reproduction anchor cited by the Reproducibility
Statement, and documents the implementation details that matter for
exact reproduction, all of which the released code implements.
Draw-to-draw and code-path/hardware variation are bounded separately in
Appendices~\ref{subsec:seed-variance} and~\ref{app:wrapper-provenance}.
The cache slicing uses explicit \texttt{position\_ids} to
preserve RoPE alignment, since the naive attention-mask trick produces a
silent position shift on evicted positions. Long-context (16K) runs use
SDPA for prefill and switch to eager for the $w \times T$
observation-window scoring pass to avoid the $O(T^2)$ attention-matrix
allocation on 80GB. Head agreement uses Jaccard similarity of top-$32$
key sets per head pair, averaged over pairs. The deployed gate takes
the early bin as the first $\lfloor L/3 \rfloor$ layers and the late
bin as the \textit{last} $\lfloor L/3 \rfloor$, i.e.\ $[0, \lfloor L/3
\rfloor)$ and $[L - \lfloor L/3 \rfloor, L)$, so the two bins are
equal in size and, when $3 \nmid L$, the middle band is one layer
wider than a strict partition into thirds. Table~\ref{tab:drops}
instead reports the strict-thirds variant $[\lfloor 2L/3 \rfloor, L)$
for the late bin. The two agree whenever $3 \mid L$ (Qwen2.5-3B, $L =
36$) and differ by at most $0.004$ in mean $D$ elsewhere (Qwen2.5-1.5B
$L = 28$: $0.044$ vs $0.040$; Llama-3.1-8B $L = 32$: $0.071$ vs
$0.073$), which changes no gate decision at $\tau = 0.07$. Every gated
number in the paper uses the equal-size convention that the released
code implements. Decoding is greedy throughout, so the gated outcome
of any input is exactly reproducible from its recorded head-agreement
drop plus the plain-eviction and full-KV outcomes. We use this
identity to evaluate gate thresholds post-hoc, and one matrix cell
(Qwen2.5-3B 4K SnapKV, executed at $\tau = 0.04$) is reported at $\tau
= 0.07$ via this exact re-evaluation. The threshold variants this
identity was used to compare appear in Table~\ref{tab:tau-variants},
and the calibration recipe that selects $\tau$ is given in
Appendix~\ref{subsec:recipe-details}.

Two separate audits check the budget $1.0$ boundary at different
levels, and neither subsumes the other. At the level of scored
outcomes, Appendix~\ref{app:statistical-validation} verifies on all
$16$ matrix cells and the four $16$K cells that the gated arm keeps the
whole cache and matches plain eviction on every one of the $5{,}120$
rows. At the level of raw generated text, a pipeline audit on six
examples verifies that eviction at budget $1.0$ matches clean
prefill-decode token for token, with a trailing-period mismatch on one
of the six. The first establishes that $r(x) \ge 0$ holds as an
identity, and the second checks the decode path itself on a small
sample. The released logs include the audit script.

\section{Use of large language models}
\label{app:llm}
Large language models were used as general-purpose assistants during this
project: for literature search and triage, for drafting and copy-editing prose,
and as coding aids when writing and analyzing the experiment scripts. All
research ideas, hypotheses, experimental designs, and claims are the authors'
own. Every reported number was produced by the released code on the stated
models and data, and the authors verified the experiments and their reporting.
LLMs were not used as a source of scientific facts or citations without
verification against primary sources.

\end{document}